\newcommand{\red}[1]{{\color{red} #1}}
\newcommand{\qandq}{\quad \text{and} \quad}
\newcommand{\norm}[1]{ \left\| #1 \right\| }
\newcommand{\mednorm}[1]{\| #1\| }
\newcommand{\var}{\mathbb{V}ar}
\newcommand{\Cov}{Cov}
\newcommand{\E}{\mathbb{E}}
\newcommand{\R}{\mathbb{R}}
\newcommand{\N}{\mathbb{N}}
\newtheorem{setting}[theorem]{Setting}
\title{On the Trajectories of SGD Without Replacement}
\author{\name Pierfrancesco Beneventano \email pierb@princeton.edu \\
       \addr Department of Operations Research and Financial Engineering\\
       Princeton University,
       Princeton, NJ 08544, USA
       }
\begin{document}


\maketitle

\begin{abstract}
This article examines the implicit regularization effect of Stochastic Gradient Descent (SGD). We consider the case of SGD without replacement, the variant typically used to optimize large-scale neural networks. We analyze this algorithm in a more realistic regime than typically considered in theoretical works on SGD, as, e.g., we allow the product of the learning rate and Hessian to be $O(1)$ and we do not specify any model architecture, learning task, or loss (objective) function.

Our core theoretical result is that optimizing with SGD without replacement is locally equivalent to making an additional step on a novel regularizer.
This implies that the expected trajectories of SGD without replacement can be decoupled in (i) following SGD with replacement (in which batches are sampled i.i.d.) along the directions of high curvature, and (ii) regularizing the trace of the noise covariance along the flat ones.

As a consequence, SGD without replacement travels flat areas and may escape saddles significantly faster than SGD with replacement.
On several vision tasks, the novel regularizer penalizes a weighted trace of the Fisher Matrix, thus encouraging sparsity in the spectrum of the Hessian of the loss in line with empirical observations from prior work.
We also propose an explanation for why SGD does not train at the edge of stability (as opposed to GD).
\end{abstract}


\section{Introduction}

\subsection{The Problem and the Background}

This article examines the implicit regularization effect of Stochastic Gradient Descent (SGD) without replacement, also known as random reshuffling. This is the particular variant of SGD most commonly used in practice to optimize large-scale neural networks.

To put this into context, recall that gradient descent (GD) and its variants are a generic set of algorithms for minimizing an empirical loss $L$,
which depends on a set of parameters $\theta$ and a dataset $D$ of size $n$. Specifically, the update to the parameters after the first $t$ steps of optimization is as follows:
\[
\theta(t+1) \quad = \quad
\theta(t) \ \ - \ \ \frac{\eta }{b} 
\sum_{z \in B_{t+1}} \nabla_\theta L\big(\theta(t), z\big), \qquad  t=0,1,\ldots,
\]
where the $j$-th batch $B_j$ for all $j$ is a subset of the training dataset $D$ of size $b$ and $\eta$ is the step size (or learning rate). Full-batch gradient descent corresponds to the case when every $B_j = D$ is the entire dataset, whereas mini-batch SGD corresponds to smaller values of $b$. In practice, smaller values of $b$ are both computationally faster and often lead to better performance \cite{bottou_curiously_2009, bottou_stochastic_2012, lecun_efficient_2012, bengio_practical_2012}. The gain in speed is almost immediate since loss gradients at each step must only be computed on the current smaller batch (see \S \ref{section:training}). The nature of the improvement in model quality from smaller values of $b$, however, is less obvious.

\begin{figure}[ht!]
\begin{mdframed}
{\centering \large \textbf{Variants of SGDs on Toy Shallow Networks}\\[0.3cm]}
    \begin{minipage}{.327\textwidth} 
        \centering
        \begin{overpic}[width=\linewidth,height=1.3\linewidth]{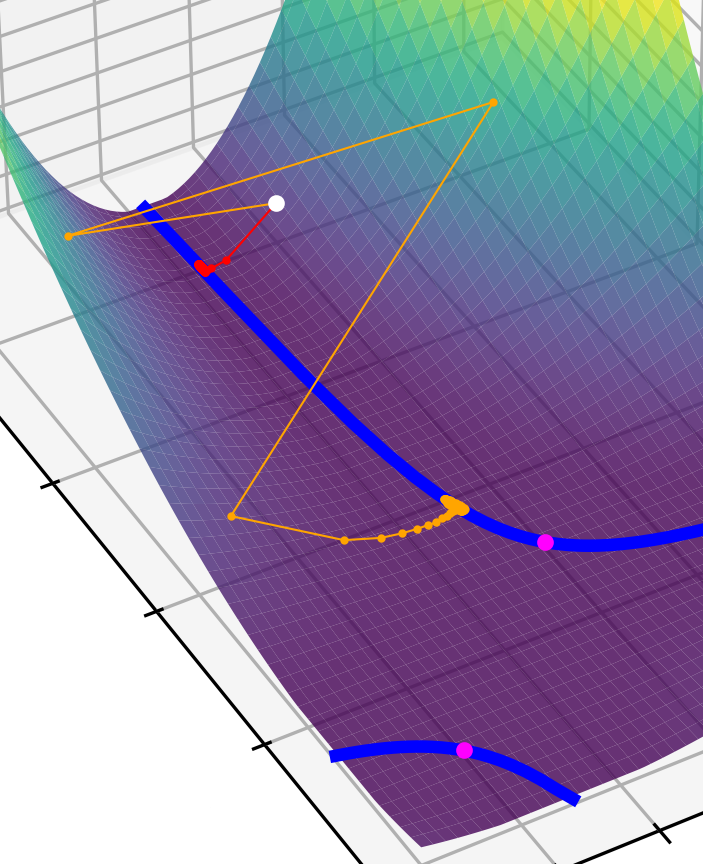}
            \put(1,90){\colorbox{white}{GD with Gaussian noise}}
        \end{overpic}
    \end{minipage}
    \hfill
    \begin{minipage}{.327\textwidth} 
        \centering
        \begin{overpic}[width=\linewidth,height=1.3\linewidth]{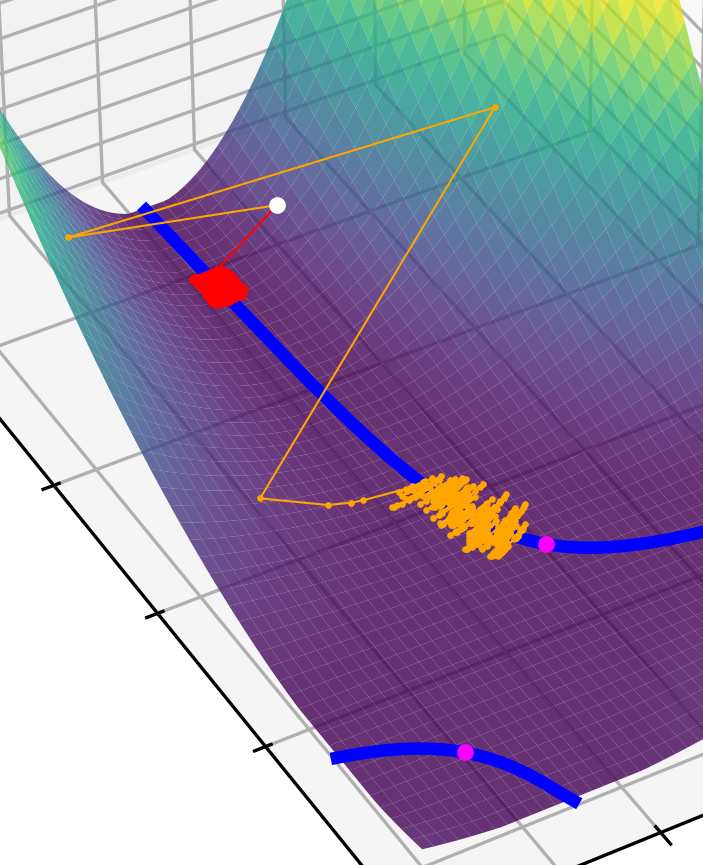}
            \put(15,90){\colorbox{white}{SGD with repl.}}
        \end{overpic}
    \end{minipage}
    \hfill
    \begin{minipage}{.327\textwidth} 
        \centering
        \begin{overpic}[width=\linewidth,height=1.3\linewidth]{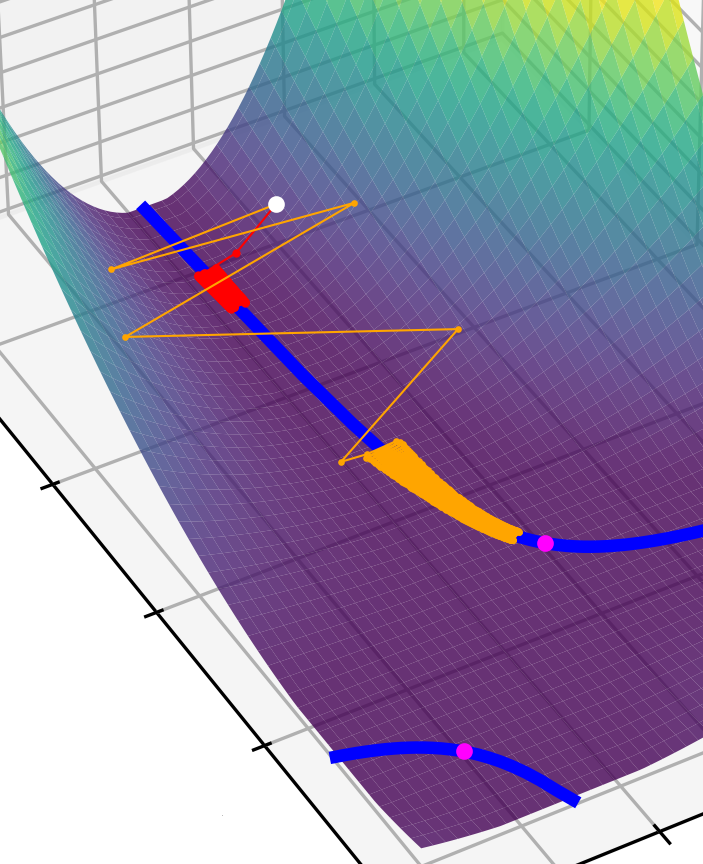}
            \put(8,90){\colorbox{white}{SGD without repl.}}
        \end{overpic}
    \end{minipage}
    \caption{
    \small
        Here are the dynamics in the setting outlined in \S \ref{section:setting_plots}, for \red{small} (red) and {\color{orange}big} (orange) learning rates. 
        The colored surface is the loss landscape, the {\color{blue}blue line} is the manifold of minima, the white bullet is the initialization, and the {\color{magenta}magenta} bullets are the lowest norm solutions.
        SGD without and with replacement both move towards the lowest norm solution, unlike noised GD.
        However,
        SGD \textit{without} replacement is converging \textit{faster}, with \textit{better accuracy}, while steadily \textit{reducing the variance}. 
        SGD with replacement is approaching slower and will oscillate around it with a higher variance and less precision.
    }
    \label{fig:intro}
\end{mdframed}
\end{figure}

The differences between full-batch GD and mini-batch SGD have been extensively studied both empirically, e.g., \cite{keskar_large-batch_2016,he_control_2019,jastrzebski_catastrophic_2021} and theoretically, e.g., \cite{kleinberg_alternative_2018,haochen_shape_2020,chen_stochastic_2023}. However, virtually all the well-known theoretical work, except \cite{gurbuzbalaban_why_2021,nguyen_unified_2021,haochen_random_2019,mishchenko_random_2020,roberts_sgd_2021,smith_origin_2021}, treats only the case of mini-batch SGD with replacement, in which the batches $B_i$ are sampled i.i.d. The goal of this article, in contrast, is to give a theoretical analysis of a significantly more realistic setting for mini-batch SGD in which we consider:
\begin{itemize}
    \item \textbf{The algorithm practitioners use:} 
    We analyze the algorithm most commonly used in practice to optimize large-scale neural networks, called SGD without replacement or sometimes random reshuffling, see \S \ref{section:training}. The key difference between SGD with and without replacement is that batches sampled without replacement are disjoint and form a random partition of the training dataset. These batches are, therefore, statistically dependent. This induces empirical differences, see, e.g., \cref{fig:intro}, and mathematical challenges, see \S \ref{section:challenges}. We also prove that it leads to qualitatively different behaviors than SGD with replacement, see \cref{fig:W_comparison}, \S \ref{section:implications}, and \S \ref{section:with_repl}.
    
    \item \textbf{In real-world optimization regimes:} Previous analyses of SGD made strong assumptions on the step size $\eta$, the number $k$ of steps of SGD that are analyzed (usually one epoch at time), or the size of the derivatives of the loss $L$. For instance \cite{smith_origin_2021} asks that 
    \begin{equation}
    \label{eq:setting_smith}
        \eta \ \cdot \ k \ \cdot \ \| \nabla L \| 
        \quad \ll \quad 1
        \qquad \qandq \qquad  
        \eta \ \cdot \ k \ \cdot \ \| \nabla^2 L \| 
        \quad \ll \quad 1.
    \end{equation}
    The first hypothesis roughly requires that the total movement in parameter space is small and corresponds to allowing a local analysis for SGD. We will keep this assumption in the present work. However, we drop the second, see \S \ref{section:error}, that is often unrealistic, as we explain in \S \ref{section:regime}.
    For instance, a well-known empirical observation about neural networks -- known as progressive sharpening
    \cite{jastrzebski_relation_2019, cohen_gradient_2021,cohen_adaptive_2022,damian_self-stabilization_2023} -- is that $\norm{\nabla^2 L}$ often grows throughout training until it is of order $\eta^{-1}$.
    Moreover, we do not make any assumptions about the learning task, model, or data set. We work on a general optimization task where the objective is a sum of functions admitting 3 derivatives.
\end{itemize}

\subsection{Informal Overview of the Results}
We provide here an overview of our results with an emphasis on new insights into the differences between SGD with and without replacement. We also offer several theoretical insights that can help explain a range of phenomena empirically observed in training large neural networks. 

We start with an informal corollary of our main result in \cref{theo:SGD_bias_eta}. 
Note that, in expectation, the trajectories of GD and SGD with replacement are the same. However, SGD without replacement is different, even on average.
We show that, relative to full-batch GD and SGD with replacement, SGD without replacement implicitly adds a regularizer. This penalizes the covariance over the dataset of the gradients, measured in a loss-dependent norm determined by the Hessian of the loss.

\begin{theorem}[Informal corollary of \cref{theo:SGD_bias_eta}]
\label{theo:intro}
In expectation over batch sampling, one epoch of SGD without replacement differs from the same number of steps of SGD with replacement or GD, by a regularizing step of size $\eta/(b-1)$. At a stationary point, the regularizer of the $i-th$ parameter is a weighted trace of the covariance of the gradients
\begin{equation*}
\begin{split}
    \mathop{trace} \left( S_{i} \ \cdot \ \Cov_{z \in D}\big( \nabla L(\theta, z) \big) \right),
\end{split} 
\end{equation*} 
where $S_i$ is a diagonal matrix which depends on the Hessian. Away from a stationary point, there is an additional term dependent on the trace of the empirical covariance of the Hessian.
\end{theorem}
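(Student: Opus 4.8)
I would treat one reshuffling epoch as a random map on parameter space, Taylor-expand it around the starting point, and take the expectation over the random batch partition, comparing term by term with the same expansion for SGD \emph{with} replacement (whose i.i.d.\ batches make distinct micro-steps independent, and whose expected trajectory coincides, as noted, with that of GD). This is a backward-error computation carried out for the discrete iteration, in a regime where $\eta\,\nabla^2 L$ is allowed to be $O(1)$.

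Fix $\theta_0$ and set $g_z:=\nabla_\theta L(\theta_0,z)$, $H_z:=\nabla_\theta^2 L(\theta_0,z)$, $g:=\frac1n\sum_{z\in D}g_z$, $\bar H:=\frac1n\sum_{z\in D}H_z$, and $\bar g_B(\theta):=\frac1b\sum_{z\in B}\nabla L(\theta,z)$, $\bar H_B(\theta):=\frac1b\sum_{z\in B}\nabla^2 L(\theta,z)$. A reshuffling epoch is a uniformly random ordered partition of $D$ into $m:=n/b$ batches $B_1,\dots,B_m$ of size $b$, with micro-steps $\theta_j=\theta_{j-1}-\eta\,\bar g_{B_j}(\theta_{j-1})$, $j=1,\dots,m$. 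The retained hypothesis $\eta k\norm{\nabla L}\ll1$ keeps every displacement $\theta_j-\theta_0$ small, so each micro-gradient may be expanded in powers of the displacement; the point is that we do \emph{not} also truncate in powers of $\eta\nabla^2 L$. Unrolling the recursion and substituting $\theta_{j-1}-\theta_0=-\eta\sum_{j'<j}\bar g_{B_{j'}}(\theta_0)+\cdots$ into the Taylor expansion of $\bar g_{B_j}(\theta_{j-1})$ gives
\begin{equation*}
\theta_m-\theta_0 \;=\; -\,\eta\sum_{j=1}^{m}\bar g_{B_j}(\theta_0)\;+\;\eta^2\sum_{j=1}^{m}\bar H_{B_j}(\theta_0)\sum_{j'<j}\bar g_{B_{j'}}(\theta_0)\;+\;(\text{higher order}),
\end{equation*}
where the higher-order terms carry further powers of $\eta\bar H_B$ acting on accumulated micro-gradients (plus $\nabla^3 L$ contributions); these are kept, not discarded.

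The first-order term has expectation $-\eta m\,g$ for both sampling schemes, matching $m$ steps of GD; the whole discrepancy lives in the cross-micro-step terms and is produced by the negative dependence of disjoint batches. The key identity is that, for $j\ne j'$, exchangeability of the partition yields
\begin{equation*}
\E_{\mathrm{w/o}}\big[\bar H_{B_j}\,\bar g_{B_{j'}}\big]\;=\;\frac{1}{n(n-1)}\sum_{z\ne z'}H_z\,g_{z'}\;=\;\bar H\,g\;-\;\frac{1}{n-1}\,\Cov_{z\in D}\big(\nabla^2 L(\theta_0,z),\,\nabla L(\theta_0,z)\big),
\end{equation*}
the last term being the data cross-covariance of Hessian and gradient, whereas for independent batches $\E_{\mathrm{w/}}[\bar H_{B_j}\bar g_{B_{j'}}]=\bar H g$ and the per-pair discrepancy is exactly $-\tfrac1{n-1}\Cov_{z}(\nabla^2 L_z,\nabla L_z)$. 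Summing over the $\binom{m}{2}$ pairs and carrying the $\eta^2$, the leading extra drift of SGD without replacement is proportional to $-\Cov_{z\in D}(\nabla^2 L(\theta,z),\nabla L(\theta,z))$; since $\sum_z\nabla^2 L_z\,\nabla L_z=\tfrac12\nabla_\theta\sum_z\norm{\nabla L_z}^2$ and $\tfrac1n\sum_z\norm{\nabla L_z}^2=\operatorname{trace}\Cov_z(\nabla L_z)+\norm{\nabla L}^2$, this is a negative-gradient step on $\operatorname{trace}\Cov_{z\in D}(\nabla L(\theta,z))$ up to terms that vanish at a stationary point. Finally, dressing the cross-term micro-gradients with the retained powers of $\eta\bar H_B$ and taking expectations replaces the bare combinatorial factor by a matrix which, in an eigenbasis of $\nabla^2 L$, is diagonal with entries depending on the corresponding eigenvalue through $\eta\lambda$ (essentially constant in flat directions, damped toward the SGD-with-replacement value in sharp ones) as well as on $m$ and $b$ — this is the $S_i$ of the statement — and collecting constants with $n=mb$ gives the step size $\eta/(b-1)$. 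Away from a stationary point the $\norm{\nabla L}^2$-piece survives, and the $O(1)$-sized $\eta\nabla^2 L$ promotes products of batch-Hessian fluctuations across micro-steps (which carry an extra factor of $\nabla L$, hence are absent at stationarity) to the same order, producing the advertised additional term governed by the empirical covariance of $\nabla^2 L$.

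\medskip\noindent\emph{Main obstacle.} The delicate step is making the expansion rigorous when $\eta\norm{\nabla^2 L}$ is not small: one cannot stop at a fixed power of $\eta$, so the expansion has to be organized by powers of the genuinely small displacement while all powers of $\eta\nabla^2 L$ are resummed, and the remainder — the $\nabla^3 L$ terms and the ``noise $\times$ noise $\times$ Hessian'' cross-contributions — must be bounded uniformly over the epoch; this is exactly the error control referred to in \S\ref{section:error}. A secondary, bookkeeping-heavy difficulty is the precise combinatorics of the random partition (the joint law of which points land in which batch, and in which order) together with the non-commutativity of the per-batch Hessians across micro-steps, which determines the exact form of $S_i$ but not the identification of the regularizer.
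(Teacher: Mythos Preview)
Your proposal is correct and follows essentially the same route as the paper: Taylor-expand the epoch at the starting point, take the expectation over the random partition, and use the negative dependence of disjoint batches (your identity $\E_{\mathrm{w/o}}[\bar H_{B_j}\bar g_{B_{j'}}]=\bar H g-\tfrac{1}{n-1}\Cov_z(H_z,g_z)$ is exactly the $k=2$ case of the paper's Lemma~\ref{lemma:chain_cond_exp}/Corollary~\ref{cor:chain_cond_exp_var}) to produce the covariance correction, then recognize $\Cov_z(\nabla^2 L_z,\nabla L_z)=\tfrac12\nabla\operatorname{trace}\Cov_z(\nabla L_z)$ and resum the Hessian dressing into $S_i$. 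The one place where the paper is more explicit than your sketch is that it writes the full product $\prod_{h=i+1}^{j-1}[I-\eta\nabla^2 L(B_h)]$ into the expansion from the outset (Proposition~\ref{theo:SGD_effect_fixed_batches}) rather than truncating at second order and ``dressing'' afterwards; this is what makes the $\eta\nabla^2 L=O(1)$ resummation concrete and what feeds directly into the closed form for $S_i$, but it is the same idea you describe.
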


Our goal in the rest of this section is to describe the nature of the regularizer from Theorem \ref{theo:intro} and, in particular, its role
in driving qualitatively different behaviors between SGD with and without replacement. As a starting point, let us consider a stationary point $\theta$ for the full-batch loss $L(\theta,D)$. The matrix $S$ depends on $\theta$ and is a function of the Hessian of the loss $\nabla^2 L$ and its variance.
Let us work on a basis of eigenvectors $\{\theta_1, \theta_2, \ldots\}$ for the Hessian corresponding to the eigenvalues $\{\lambda_1, \lambda_2, \ldots\}$ and define the effective step size $c = \eta k > 0$.
The $j$-th diagonal entry of $S_i$ approximately is
    \begin{equation}
    \label{eq:real_world_effect}
    (S_i)_{j,j}
    \quad \sim \quad \frac{c}{2}\cdot
    \begin{cases}
        \frac{1}{2}    
        \!\! \qquad \qquad \qquad \qquad \qquad \quad \text{ constant}   \qquad
        & \text{ if } c\lambda_i, c\lambda_j \text{ small}\\
        (c\lambda_i)^{-1}    
        \hspace{0.04cm}\quad \qquad \qquad \quad \qquad \text{ small}
        & \text{ if } c\lambda_i \gg 0, c \lambda_j \text{ small} \\
        (c\lambda_j)^{-1}       
        \quad \qquad \qquad \qquad \quad \text{ small}
        & \text{ if } c \lambda_i \text{ small}, c\lambda_j \gg 0 \\
        \frac{1 + (-\eta \lambda_i)^{k}}{c\lambda_i \cdot c\lambda_j} \quad \qquad \qquad \qquad \ \,
        \substack{\text{very small }\\\text{or very big}}
        & \text{ if } c\lambda_i, c\lambda_j \gg 0 
    \end{cases}
    \end{equation}
Note that when $c\lambda \ll 1$, for all $\lambda$s, we are in the setting \eqref{eq:setting_smith} of work such as  \cite{smith_origin_2021} and indeed recover their core results, with the regularizer corresponding to $\tfrac{c}{4}\E_{z \in D}[\mednorm{\nabla L(z) - \nabla L}^2]$, see \S \ref{section:bias_small} and \ref{section:dependency}. However, in our analysis, not only $c\lambda$ need not be small in absolute value, but we allow the eigenvalues $\lambda \in [-O(1/c), 2/\eta + O(1/c)]$. This setting includes a richer set of phenomena than found in prior work and contains most real-world training regimes \S \ref{section:regime}. We discuss several phenomena that arise in the following section.

\subsection{Decoupling the Dynamics}\label{section:implications}

A first key implication of \cref{theo:intro} is that in expectation
\begin{mdframed}
    The trajectory of the SGD without replacement can be decoupled as follows.
\begin{itemize}
    \item It follows SGD with replacement along the directions subject to bigger curvature.
    \item It shrinks the trace of the covariance of the gradients along the flatter directions.
\end{itemize}
\end{mdframed}

The role of $S_i$, indeed, is to approximately project or restrict the part of the dynamics relative to the regularizer on the flat directions, i.e., the kernel of the Hessian. In the case in which either big or small eigenvalues are present in the Hessian, as \cref{fig:intro}, $S_i$ is approximately zero along the directions admitting big curvature and $c/4$ along the eigendirections relative to the small ones. For a vector $v$
\[
S_i \cdot v \quad \sim \quad \frac{c}{4} \cdot \underbrace{\Pi_{flat}}_{\text{projection}} (v) \qquad \substack{\text{projects the vector } v \text{ along the directions}\\
\text{in which the Hessian is degenerate.}}
\]
This means that if two minima or stationary points are in the same connected manifold of stationary points, SGD without replacement will move towards the one admitting a lower trace of the covariance, so the flatter one. 

\cref{theo:intro} may explain the suitability of SGD without replacement for non-convex objective/loss functions with the same geometry as the ones of large scale neural networks.
Recent findings \cite{liu_toward_2020,nguyen_connected_2019,zhang_embedding_2022}, indeed, indicate that the loss-landscapes of neural networks present highly degenerate Hessians figuring high dimensional manifolds of stationary points.
Most of these stationary points generalize poorly,
SGD, however, converges to almost-global loss minima that generally generalize better \cite{gunasekar_implicit_2017,neyshabur_search_2015}.

\subsection{Shaping the Hessian - the Implicit Regularization}
The community observed that SGD shapes the spectrum of the Hessian of the loss, e.g., producing clusters of large outlying eigenvalues and sending small eigenvalues to zero in the course of training \cite{keskar_large-batch_2016,sagun_eigenvalues_2016, papyan_traces_2020,jastrzebski_catastrophic_2021}.
We find that \cref{theo:intro} is potentially enough to explain and make precise these observed phenomena.
Indeed, we find that SGD without replacement tends to decrease the trace of the covariance of the gradients over the datapoints mainly along the eigenspaces of the small and negative eigenvalues of the Hessian. These directions are considered to control (i) how the model overfits and (ii) the effective complexity and sparsity of the model.
Moreover, this can be achieved in two ways, see \cref{fig:W_without}:
\vspace{-0.15cm}
\begin{enumerate}
    \item The algorithm finds better-fitting stationary points by reducing the squared residual on every data point. Such points are often global minima, see \cref{fig:intro_2}, or
    \vspace{-0.1cm}
    \item The algorithm searches for points with smaller $\mednorm{\nabla_\theta \text{model}(\theta,x)}^2$, usually indicating flatter stationary points.
    \vspace{-0.15cm}
\end{enumerate}
We also show that in various classification tasks, the regularizer corresponds to a weighted trace of the Fisher matrix (see \S \ref{section:Hessian}).
This potentially explains and formalizes why \cite{jastrzebski_catastrophic_2021} empirically observed that a higher effective learning rate $\eta/b$ (corresponding to bigger steps on the regularizer in Theorem \ref{theo:intro}) better regularizes the trace of the Fisher matrix in vision tasks. 
This generally leads to consistent generalization improvements and makes the training less prone to overfitting and memorization \cite{jastrzebski_catastrophic_2021}.

Furthermore, in \cref{theo:reg_Hessian} we prove that, once most training points are correctly classified, SGD without replacement effectively minimizes a weighted trace of the Hessian.
\[
\mathrm{trace}\big(S \cdot \nabla^2 L(\theta, D)\big)
\]
This is the first mathematical result, to the knowledge of the author, that explains and formalizes the observation of \cite{keskar_large-batch_2016}.
In particular, this potentially explains frequent observations that SGD tends to discover minima with a sparsified Hessian, characterized by a few outlying large eigenvalues (although smaller than $\eta^{-1}$) and many smaller eigenvalues near zero, see, e.g., \cite{keskar_large-batch_2016, sagun_eigenvalues_2016, papyan_full_2019}.

\subsection{Escaping Saddles} \label{section:intro_saddles}

Theoretical works indicate that neural networks' loss landscapes have plenty of strict and high-order saddle points, see, e.g., \cite{kawaguchi_deep_2016}. Yet, surprisingly, SGD does not typically get trapped in these points in real-world applications \cite[Chapter 8]{goodfellow_deep_2016}.
Suppose $\theta$ is a strict saddle point for the loss $L(\theta, D)$ with negative eigenvalue $\lambda < 0$. We know that noise-injected GD and SGD with replacement escape a neighborhood of this saddle in respectively $O(\lambda^{-2})$ and $O(\lambda^{-3.5})$ steps under the so-called dispersive noise assumption, \cite{fang_sharp_2019,jin_nonconvex_2021}.

A key consequence of Theorem \ref{theo:intro} is that SGD without replacement escapes these saddle points \textit{faster} then these algorithms, taking only $O(\lambda^{-1})$ epochs. 
The reason is that, as we see in Theorem \ref{theo:intro}, correlations between batches in SGD without replacement leads to a non-zero effective drift in the dynamics. In contrast, algorithms such as noise-injected GD and SGD with replacement are equal, on average, to vanilla full-batch GD and escape saddles due to diffusive effects coming from the noise.

More precisely, see Proposition \ref{prop:strict_saddles}, suppose the Hessian eigenvector with eigenvalue $\lambda \leq 0$ has non-zero scalar product $u$ with $\E_{z \in D}[\nabla^2 L(z) \nabla L(z)]$ (i.e., there exists at least one data point whose regularizer's gradient has non-zero overlap with a negative eigenvector of the Hessian) and the third derivative of the loss is locally-bounded along the trajectory.
Then, the step the algorithm makes on the regularizer at every epoch makes the expected trajectory of SGD escape the saddle approximately in:
\vspace{-0.1cm}
\[
            \# \text{epochs}
            \quad \sim \quad
            \frac{\ln(\eta) + \ln(|u|)}{c\lambda}
\]
In summary, we believe we made an important step in understanding why and how SGD without replacement escapes saddles so fast in practice. The reason is that it may be simply not affected by saddles, the saddles of the loss often are not saddles for the loss plus the regularizer. This is a fundamental difference from SGD with replacement, which is unbiased and thus escapes saddles only thanks to the diffusion of the noise. 
However, other points that were not saddles for GD may be saddles for SGD without replacement.

\subsection{Escaping the Edge of Stability}

Empirical work shows that often the Hessian steadily increases along the trajectories of gradient flow and gradient descent \cite{jastrzebski_relation_2019,cohen_gradient_2021}.
In the case of GD, this progressive sharpening stops only when the highest eigenvalue $\lambda_{\max}$ of the Hessian reaches $2/\eta$ - the so-called edge of stability. Then, GD starts oscillating along the eigenvector of $\lambda_{\max}$ while still converging along other directions \cite{damian_self-stabilization_2023}.
However, \cite{cohen_gradient_2021} also observes that this is not the case of SGD without replacement, it follows the trajectory of GD leading to the Edge of Stability only up to a point where $\lambda_{\max}$ stops increasing. 
Similarly, \cite{jastrzebski_break-even_2020} observed that, both in regression and vision classification tasks, the trajectory of SGD without replacement aligns with the one of GD for a while until a \textit{breaking point} where it deviates and diverges from it.

We make an important step towards explaining and formalizing these observations in \S \ref{section:eos}.
We prove that if there are at least two big eigenvalues and the covariance of the noise is non-zero along their eigendirections, then there exists a value, denoted by $\alpha_{EoS}$, such that when $\lambda_{\max} > \eta^{-1} + \alpha_{EoS}$, the size of a step on the regularizer of \cref{theo:intro} becomes bigger than $k$ steps of GD. 
This may effectively lead the expected trajectory away from GD's one in most regimes. 
Agreeing with the empirical studies \cite{jastrzebski_break-even_2020}, this phenomenon consists of a quick phase transition. 
Moreover, this breaking point arrives earlier for bigger effective learning rates $\eta / b$ and later for smaller ones, indeed,
\[
\alpha_{EoS}
\quad = \quad 
\frac{b}{\eta} \ \cdot \ (\text{positive quantity}).
\]

\subsection{With \textit{vs} Without Replacement.} 
A subtle long-standing question is whether the fact that batches are disjoint and not sampled i.i.d.\ practically matters. 
In other words, if the induced dependency is strong enough for the trajectories SGD without replacement to get attracted to different minima than SGD with replacement or noise-injected GD. This is crucial to understand because, although SGD without replacement is faster and widely used in practice, most theoretical developments focus on GD or algorithms with independent steps.
We believe this article makes a significant step towards understanding whether these algorithms' trajectories qualitatively differ.

Our findings reveal qualitative differences in the trajectories of SGD with and without replacement through the loss landscape \S \ref{section:with_repl}. While they may not necessarily converge to different minima, \textit{how} they traverse the flat areas of the landscape is distinct, \cref{fig:intro,fig:intro_2,fig:W_comparison}.

\begin{itemize}
\vspace{-0.15cm}
    \item \textbf{Drift powered \textit{vs} Diffusion powered.} Just as in the neighborhood of flat areas (see \S \ref{section:intro_saddles} and \S \ref{section:implications}), SGD without replacement \textit{regularizes faster} than SGD with replacement \cref{fig:intro_2}. This occurs because the implicit regularization is non-zero already on the level of the average of the trajectory, i.e., SGD with replacement has the same drift than GD, SGD without replacement does not.

    \item \textbf{Smaller Oscillations:} In flatter areas, the noise/diffusion of SGD without replacement comes from the second order term in the Taylor expansion of the trajectory. The noise of SGD with replacement always comes from the first order terms. This implies that the trajectory of SGD without replacement has usually smaller variance \cref{fig:intro,fig:intro_2}. 
\end{itemize}

\begin{figure}[ht!]
\begin{mdframed}
{\centering \large \textbf{Different Behaviors of the  
SGDs: Escaping Local Minima}\\[0.3cm]}
    \begin{minipage}{.56\textwidth} 
        \centering
        \includegraphics[width=\linewidth,height=0.7\linewidth]{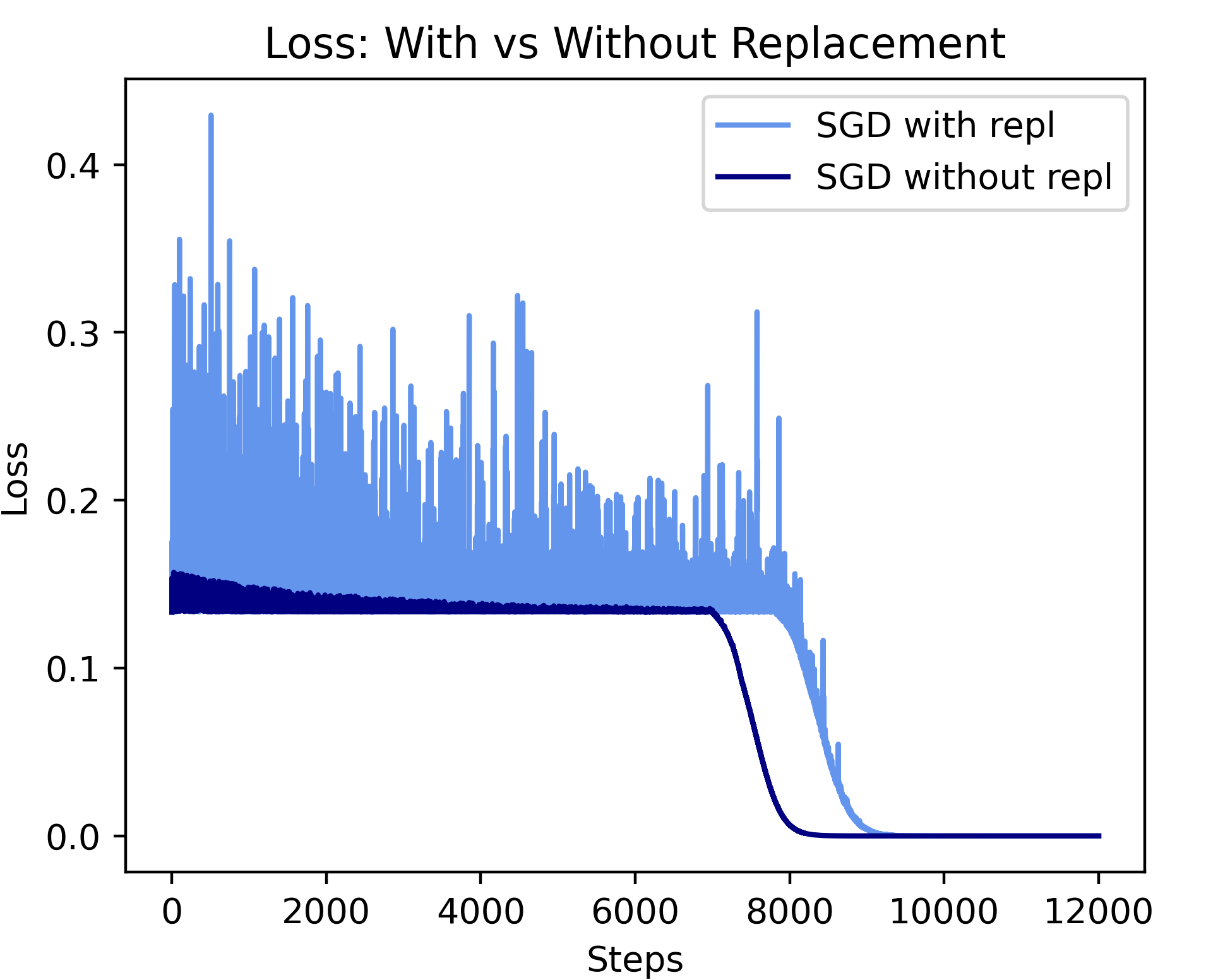}
    \end{minipage}
    \hspace{-0.015\textwidth}
    \hfill
    \begin{minipage}{.44\textwidth} 
    \caption{This is the behavior of SGDs with replacement (lighter blue) \textit{vs} SGD without replacement (darker blue) starting from a spurious local minimum of the loss of a ReLU network with the same hyperparameters.\\
    SGD without replacement escapes local minima (by traveling flat areas) in which GD converged faster than SGD with replacement and with much smaller oscillations. They both converge to a global minimum. See \cref{fig:W_comparison}.
    }
    \label{fig:intro_2}
    \end{minipage}
    \hspace{0.025\textwidth}
\end{mdframed}
\end{figure}

In conclusion, our results show that SGD without replacement implicitly regularizes by biasing the dynamics towards areas where it has lower variance. This effect results from the dependency between steps, manifesting as a drift-like phenomenon, distinct from effects attributable to the diffusion or, e.g., Fokker-Planck arguments. We demonstrate that this leads to a form of regularization, enabling the algorithm to navigate through flat areas more quickly and with fewer oscillations than expected.

\subsection{Outline of the Remainder of the Article}

We start with an overview of the problem in \S \ref{section:motivations}. Specifically, \S \ref{section:training} explores how neural networks are trained, the reasons for such approaches, and introduces SGD without replacement. 
In \S \ref{section:generalization}, we discuss our goals and where they stem from. \S \ref{section:challenges} Sheds light on the unusual mathematical challenges that we face while tackling the problem while reviewing the literature. We conclude this part with \S \ref{section:regime} in which we discuss in what regime neural networks are trained, a relevant theory must explain what happens in this scenario.
Following this introductory section, we present our main result in \S \ref{section:without_repl}. We will later discuss deeply its ingredients in \S \ref{section:nature}. 
The subsequent sections delve into its implications. In particular, we highlight the way the SGD without replacement travels the landscapes and the manifold of minima in \S \ref{section:regularization}, it regularizes the Hessian in \S \ref{section:Hessian}, it escapes saddles in \S \ref{section:saddles},
and the way it avoids the edge of stability in \S \ref{section:eos}. 
We discuss the differences between the two variants of SGD in \S \ref{section:with_repl}.
We conclude the paper with the conclusions and a discussion of the applicability and the limitations of our results \S \ref{section:comparison}.

\tableofcontents

\section{The Problem}
\label{section:motivations}

\subsection{Training Neural Network and the SGD}
\label{section:training}

\paragraph{How we train neural networks and why.}
Neural networks are commonly trained by optimizing a loss function using SGD without replacement, see, e.g., \cite{sun_optimization_2020}, and its more refined variations like Adam \cite{kingma_adam_2014}. SGD without replacement is indeed the default in widely utilized libraries such as PyTorch\footnote{\url{https://pytorch.org/docs/stable/optim.html}} and TensorFlow\footnote{\url{https://www.tensorflow.org/api_docs/python/tf/keras/optimizers/experimental/SGD}}.
Its widespread use results from early machine learning research that highlighted its practical generalization capabilities and its competitive computational complexity \cite{bottou_curiously_2009,bottou_stochastic_2012,lecun_efficient_2012}. 
Indeed, SGD without replacement is much faster, and usually better, than Gradient Descent and SGD with replacement. Precisely:
\begin{itemize}
    \item It converges with fewer steps than, e.g., SGD with replacement on many practical problems, see \cref{fig:intro} and \cite{bottou_curiously_2009,recht_parallel_2013,gurbuzbalaban_why_2021,haochen_random_2019, mishchenko_random_2020}.
    \item The step itself is much faster. It just needs direct access to the memory, not sampling as for SGD with replacement \cite{bengio_practical_2012}.
\end{itemize}

For instance, in strongly convex settings SGD without replacement has better convergence than SGD with replacement. In particular, if the number of epochs is bigger than $O(\sqrt{n})$ then the convergence rate of SGD without replacement is faster and, asymptotically, is $O(1/time^2)$, while the one of SGD with replacement is $O(1/time)$.    \cite{bottou_curiously_2009,gurbuzbalaban_why_2021,haochen_random_2019, mishchenko_random_2020}. Additionally, the oscillation at convergence is often smaller for SGD without replacement, frequently as low as  $O(\eta^2)$ compared to $O(\eta)$ for SGD with replacement, see \cref{fig:intro} and \cite{mishchenko_random_2020}.

\subsection{Implicit Regularization to Explain Generalization}
\label{section:generalization}

\paragraph{Generalization.}
Two primary objectives of machine learning theory are to understand why neural networks generalize and how they internally represent the data and functions they learn. Traditional generalization theories face various challenges in deep learning, see, e.g., \cite{bartlett_nearly-tight_2019}. 
Some of them being overparameterization \cite{zhang_understanding_2017,neyshabur_exploring_2017}, benign overfitting \cite{bartlett_benign_2020}, or double descent phenomenon \cite{belkin_reconciling_2019}.
Consequently, these challenges necessitate a re-conceptualization of generalization theory, and probably an optimization-dependent one, \cite{zhang_understanding_2017,zhang_understanding_2021}. 

\paragraph{Implicit Regularization.}
{Indeed, an intriguing observation is that, in many relevant settings, different optimization procedures converge to global minima of the training loss. However, they consistently lead to different levels of test loss, or neural networks that represent functions of different kinds of regularity, see e.g. \cite{gunasekar_implicit_2017,neyshabur_exploring_2017,blanc_implicit_2019}.
Moreover, \cite{jiang_fantastic_2019} and previous works note that empirically the features of the optimization procedure and trajectory are among those that best correlate with generalization.
This motivated the machine learning/optimization community to shift their interests.
From studying convergence \textit{rates} of algorithms in regular (such as convex) settings, to investigating the \textit{location} of convergence for non-convex landscapes with multiple stationary points; that is, the implicit effect of the optimizer.}

\subsection{Previous Work and Challenges}
\label{section:challenges}

For the reasons above, part of the research community focused on understanding the role of the optimizer and its interaction with the optimization landscape.
The primary objectives are to (i) gain theoretical insights into the role of noise, the interplay between algorithms and particular landscapes, the effect of the discretization, and (ii) develop improved algorithms with better performance.
There are various mathematical challenges in trying to perform an analysis of the trajectory of SGD, indeed, many standard assumptions are not satisfied in this setting.
The three ingredients that have a role in determining the training trajectory are the geometry of the landscape, the effect of discretization, and the effect of the noise. We quickly review the role of these ingredients and the related mathematical challenges in what follows.

\paragraph{The landscape.}
Some phenomena occurring during the training of large-scale machine learning models are more influenced by the geometry of the landscape rather than the action of discretizing the gradient flow or the noise. 
At first impact, in modern machine learning, the loss landscapes seem really not to satisfy any standard regularity condition:  (i) they are not even locally convex \cite{liu_toward_2020,safran_effects_2021}; (ii) many spurious local minimizers are present \cite{safran_spurious_2018,safran_effects_2021}, and (iii) there are also many strict and high-order saddles \cite{kawaguchi_deep_2016}, even though the used algorithms empirically tend to skip them \cite{dauphin_identifying_2014,goodfellow_qualitatively_2015}, \cite[chapter 8]{goodfellow_deep_2016}, \S \ref{section:saddles}. 
This means that, generally, it is not possible to leverage a strong characterization of the geometry of the manifold on which the trajectory lies, unlike, e.g., for convex optimization. 
This implies that the only feasible way to analyze the trajectory is \textbf{locally}. Moreover, the local analysis may have no implications on any global convergence. 
Another difficulty in dealing with the landscape is, e.g., that it is not always possible to control the effect of nonlinearity. For instance, in ReLU networks, predicting the landscape after the activation pattern changes is challenging. 
A very interesting phenomenon known as mode connectivity has also been observed: local minimizes, and stationary points in general, of a neural network's loss function, are connected by simple paths \cite{freeman_topology_2017, garipov_loss_2018, draxler_essentially_2019}, particularly in overparameterized models \cite{venturi_spurious_2020, liang_understanding_2018, nguyen_loss_2019, nguyen_connected_2019,kuditipudi_explaining_2020}.
All this means that the successful algorithms can escape saddles quickly and can travel through flat areas fast towards a better generalizing minimum. GD does not do it. SGD, instead, empirically seems particularly suitable for these landscapes (\cref{fig:intro,fig:1}), as our results show, see, e.g., \S \ref{section:regularization} and Proposition \ref{prop:flat_areas}.

\begin{figure}[ht!]
\begin{mdframed}
    \begin{minipage}{.5\textwidth}
        \centering
        \includegraphics[clip, trim=0cm 0.8cm 1cm 3.5cm, width=0.99\linewidth, height=1.3\linewidth]{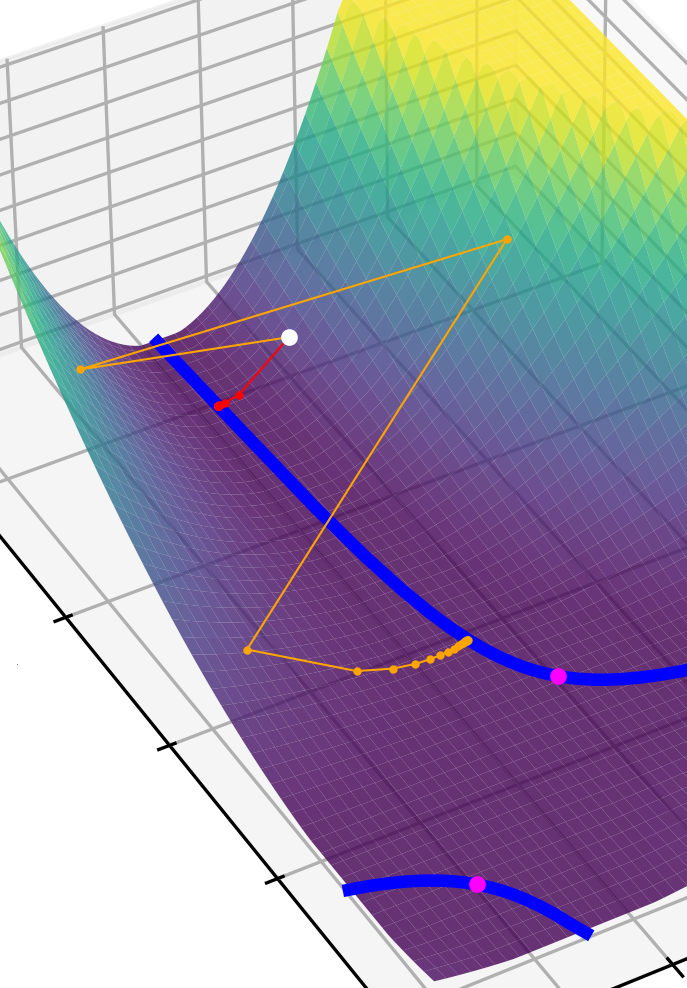}
    \end{minipage}
    \begin{minipage}{.5\textwidth}
        \centering        \includegraphics[width=0.99\linewidth,height=1.3\linewidth]{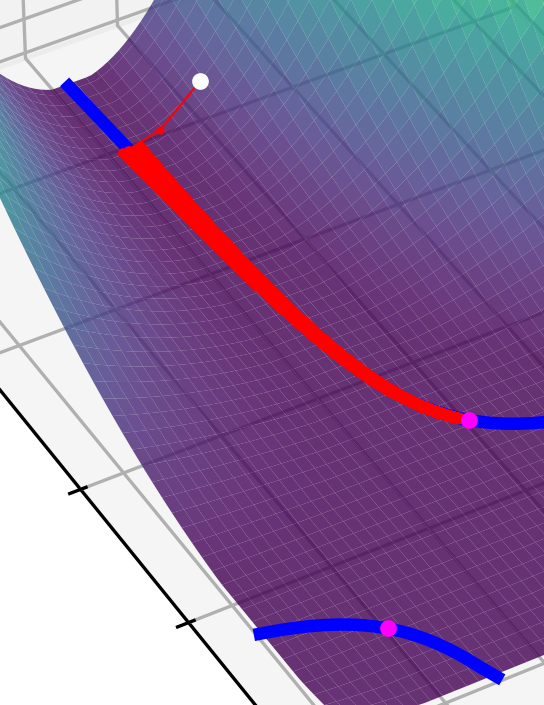}
    \end{minipage}
        \caption{
        The setting is the same as \cref{fig:intro} outlined in \S \ref{section:setting_plots}. On the left we have the trajectories of full-batch GD for \red{small} (red) and {\color{orange}big} (orange) learning rates. On the right the trajectory of SGD without replacement with small learning rate until convergence.
        On the left, we see that by increasing the step size, the algorithm identifies a solution closer to the lowest norm one. This implicit regularization effect arises from discretization, consistent with \cite{lewkowycz_large_2020, jastrzebski_catastrophic_2021}.
        However, no matter the learning rate, we see that GD stops as soon as it gets to a stationary area, while SGD without replacement navigates with an oscillatory trajectory the manifold of minima, converging to the lowest norm one.
        }
        \label{fig:1}
\end{mdframed}
\end{figure}

\paragraph{The role of the discretization.}
Some of the implicit regularization effects happening during the training are due to the discretization process.
A moderately large learning rate indeed is empirically observed to yield better generalization performance \cite{lecun_efficient_2012,lewkowycz_large_2020,jastrzebski_catastrophic_2021}. This means that the effects of discretization and noise are generally benign.
These effects include the one described in \cite{cohen_gradient_2021,cohen_adaptive_2022}, known as the edge of stability: often GD will start oscillating around the manifold of minima instead of following the gradient flow, while anyway steadily converging \cite{damian_self-stabilization_2023}. 
All this suggests that a compelling analysis must include the cases of non-vanishing yet \textbf{finite learning rate}, a \textbf{large Hessian}, or alternatively, $(learning \ rate) \cdot Hessian > 1$.
In classical numerical analysis, the effects of discretizing differential equations have been studied for long. A useful tool for that \cite[Chapter IX]{hairer_geometric_2006} is backward error analysis, introduced with the pioneering work of \cite{wilkinson_error_1960}. While in the 1990s it was used to study the stability of the discretizations, these techniques have been recently used to tackle implicit regularization of GD \cite{barrett_implicit_2021}, SGD without replacement \cite{smith_origin_2021}, SGD with momentum \cite{ghosh_implicit_2023}, Adam \cite{cattaneo_implicit_2023}. Unfortunately, this line of work usually applies to small step sizes. This article attempts to get around this limitation.

\paragraph{The effect of the noise.}
The noise of SGD has been considered by numerous authors as a potential factor for explaining generalization in neural networks. Empirically, both the size of the noise \cite{keskar_large-batch_2016, jastrzebski_three_2018} and the shape or direction of the noise indeed play crucial roles \cite{wen_interplay_2019, haochen_shape_2020, smith_origin_2021}.
That said, despite extensive research on this phenomenon, it remains an enigmatic aspect of deep learning theory and needs further investigation. 

What we do know from classical literature is that the fact of having random diffusive fluctuations implicitly biases the dynamics towards flatter areas. This is a well-known phenomenon in physics called thermophoresis, and it is mathematically explained with Fokker-Plank-like equations. There are many works in the ML community about this kind of effect, most notably \cite{chaudhari_stochastic_2018,haochen_shape_2020,orvieto_explicit_2023,chen_stochastic_2023}.
That said, these papers are about dynamics where the noise is regular and well-behaved, e.g.\ every step is independent, the noise does not vanish along the trajectories, etc. In particular, they are dealing with injections of Gaussian noise or dynamics assimilable to geometric Brownian motions.

In practice, the noise at every step is not Gaussian, does not admit lower bounded variance, etc. In particular, it vanishes in the global minima and it \textbf{has a shape} and a structure.
Moreover, in the case of the algorithms used in practice, as SGD without replacement, the noises of different steps \textbf{are not independent}, nor centered.
In particular, in SGD without replacement, the noise comes from reshuffling the dataset once every epoch, not from sampling at every step. The batches are disjoint, thus dependent on each other, and given the first $k-1$ the $k-th$ is deterministic. In particular, batches sampled are thus exchangeable but not independent random variables. We show in this paper that this dependence and shape may have an overlooked benign effect on the training.

Some of the conclusions the community got regarding SDEs trajectories, in some settings, may apply to SGD with replacement \cite{chen_stochastic_2023, li_what_2022}. 
However, SDE approximation limits are ill-posed \cite{yaida_fluctuation-dissipation_2018} and on top of that never behave as SGD apart for extremely particular cases, as very small learning rate for scale-invariant neural networks in a setting in which the variance is bigger than the gradient  \cite{li_validity_2021}. Moreover, \cite{li_validity_2021} questioned the applicability of CLT in this context, so we are not even sure in what terms we can speak of these diffusion-powered effects. On top of all the above, \cite{haochen_shape_2020} showed the differently shaped noises (Langevin dynamics, label noise, etc.) converge to different minima.
Finally, \cite{damian_label_2021,li_what_2022} show that even in the case of diffusive independent noise at every step, as label nose injection or SGD with replacement, the implicit regularization effect is due to drifts coming from the higher order terms in Taylor, not from the diffusion-powered regularization.

To the knowledge of the author, it was thus proved neither mathematically, nor empirically, that these results and a Fokker-Plank-like theory could explain the implicit regularization of SGD with replacement. Moreover, there exists no diffusion-related result applicable to SGD without replacement, nor any empirical observation that that may be the case \cite{smith_origin_2021}.

\subsection{The Real-World Regime}
\label{section:regime}
The product $c := \eta \cdot k$ between the learning rate and the number of steps in an epoch plays a crucial role in the effect of SGD.
Empirically, for instance, studies found that the size of the SGD effect scales with $c$, or analogously with $\frac{\eta}{b-1} = \frac{c}{n}$.
This is the so-called linear scaling rule in the literature \cite{goyal_accurate_2017,jastrzebski_three_2018,he_control_2019}.
Predictably, $c$ comes out in our regularizer too, see \cref{eq:real_world_effect} or \cref{theo:intro}, for instance. 
From a theoretical point of view, the reason is that we can rewrite the Taylor expansion as a sum of products of $c$ times the derivatives on the loss. So, for instance, the first terms can be rearranged as
    \[
    - \ \eta k \nabla L 
    \ + \ \eta^2 \binom{k}{2} \nabla^2 L \nabla L
    \ - \ \eta^3 \binom{k}{3} (\nabla^2 L)^2 \nabla L
    \ + \ O(\eta^3)
    \]
    In every term $\eta$ and $k$ appear at the same order, thus this can be rewritten as
    \[
    - \ c \nabla L 
    \ + \ \frac{c^2}{2} \nabla^2 L \nabla L
    \ - \ \frac{c^3}{6}  (\nabla^2 L)^2 \nabla L
    \ + \ \ldots
    \]
    plus an error of $O(1/k)$.
For this reason, most analyses require $c = \eta \cdot k$ to be much smaller than $1$, or analogously $c \nabla L, c \nabla^2 L \ll 1$. This is the case for instance of \cite{smith_origin_2021,roberts_sgd_2021}.
In practice, however, this is often not the case as $c$ or $c\nabla^2 L$ are not small, see \cref{table:1} and \cite{cohen_gradient_2021}.
The only element of the form $c \nabla^i L$ that we can be sure tends to become small during the middle and later stages of training is thus $\eta k \nabla L$, as the gradient approaches zero and the learning rate, $\eta$, gets annealed. Our analysis only requires a related less stringent assumption \S \ref{section:generality}.

\begin{table}[ht!]
\begin{tabular}{c|c|c|c|c|c}
Dataset
& Model 
& $\substack{ \text{Training} \\ \text{set size }n}$ 
& $\substack{\text{Step sizes } \eta\\ \text{range}}$
& $\substack{\text{Batch size } b\\ \text{range}}$
& $\substack{c \ = \ \eta \cdot k\\ \text{range}}$
\\[1mm]
\hline
    MNIST\footnotemark[1] & MLP & 60k & 0.1 - 0.01 & 32-64 & 190 - 9.4 \\[1mm]
    Cifar10(0)\footnotemark[2] & DenseNet-BC-190\footnotemark[4] & 50k & 0.1 - 0.001 & 64 & 78 - 0.8\\[1mm]
    Cifar100 \footnotemark[2] & ResNet-BiT\footnotemark[5] & 50k & 0.03 - 0.0003 & 4096 & 0.36 - 0.004\\[1mm]
    ImageNet\footnotemark[3] & ResNet152\footnotemark[6] & 1.2M & 0.1 - 0.001 & 256 & 470 - 4.7 \\[1mm]
    \hline
    \hline
\end{tabular}
\caption{
We list some commonly used datasets, their SotA models, and the choice of hyperparameters used in training.
In these cases, we show that the size of $c$ may range between units and hundreds. Note that in most of the cases in the table above, SGD is equipped with momentum with $\beta = 0.9$. This implies that the effective step is bigger $\eta \ (1 + 0.9 + 0.81 + \ldots) = 10 \cdot \eta
$ making the effective $c$ ten times bigger.
}
    \label{table:1}
\end{table}

\footnotetext[1]{\url{http://yann.lecun.com/exdb/mnist/}}
\footnotetext[2]{\url{https://www.cs.toronto.edu/~kriz/cifar.html}}
\footnotetext[3]{\cite{deng_imagenet_2009}}
\footnotetext[4]{\cite{huang_densely_2018}}
\footnotetext[5]{\cite{kolesnikov_big_2020}}
\footnotetext[6]{\cite{he_deep_2015}}


\section{Implicit Bias of SGD Without Replacement}
\label{section:without_repl}

We show in this section that relative to full-batch GD or SGD with replacement, SGD without replacement and shuffle once implicitly add a step on a regularizer that penalizes a weighted trace of the covariance of the loss gradients.
This is not the first such result but we go way beyond what is known in the literature. Indeed, \cite{roberts_sgd_2021} showed earlier that SGD has an implicit bias and gave some insights on the generalization benefits of it, while \cite{smith_origin_2021} showed that in a small learning rate regime SGD without replacement in expectation follows a gradient flow on a modified loss.

Technically, we analyze the optimization dynamics of SGD without replacement of one epoch, or less, at once.
\begin{itemize}
    \item \textbf{Deviation:} We first compute the deviation between trajectories Proposition \ref{theo:SGD_effect_fixed_batches}.
    \item \textbf{Moments:} Then we determine its moments with the machinery developed in the appendix \S \ref{section:exp_app}.
\end{itemize}%
In what follows, we set the notations in \S \ref{section:notations} and proceed stating our main results.

\subsection{Notations and Setting}
\label{section:notations}
We denote by $D = \{ z_i \}_{i = 1, 2, \ldots, n} \subseteq V$ the training set of size $n$. We have a parametric function  (loss) $L \colon (\vartheta, z) \mapsto L(\vartheta, z) \in \R$ that takes as input the parameters $\theta \in \Theta$ and the data $z \subseteq V$ and outputs a real number, admitting 3 weak derivatives in the parameters $\vartheta$.
For every set $B \subseteq V$ we define $L(\vartheta, B) := \tfrac{1}{|B|} \sum_{z \in B} L(\vartheta, z)$. The goal of the optimization procedure is to find a minimum $\theta^* \in \Theta$ for the objective function $L(\vartheta, D)$.

For readability, we will often omit the inputs of the function $L$. Precisely, whenever we omit the parameter $\vartheta$ we are evaluating the loss at the beginning of the epoch, we will denote the value of the parameters at the beginning of the epoch with $\theta$.
Whenever we omit the set of inputs we are evaluating the loss over the whole training set $D$. Moreover, all the derivatives we will take will be in the parameters $\vartheta$ and all the expectations will be empirical expectations over a set $B \subseteq V$.
As an example
\[
L \text{ means }L(\theta, D) \quad \qandq \quad \nabla L(B) \text{ means } 
 \nabla_\theta L(\theta, B).
\]
We denote by $\nabla^i L$ the tensor of the i-th derivative of the function $L$ in the parameters.
For every parameter vector $\theta$, we will work without loss of generality in an orthonormal basis of eigenvectors of the full batch Hessian. Precisely, we denote as $\theta_i$ the eigenvector for the $i$ biggest eigenvalue $\lambda_i$ of the Hessian $\nabla^2 L$.

Moreover, we will denote by $\theta^{SGD}(t)$ the parameters after the $t-th$ step of SGD without replacement or, analogously, of Shuffle Once\footnote{Shuffle Once is the version of mini-batch SGD where the training set is shuffled at the beginning of the training, partitioned, and at every epoch uses the same batches in the same order.} (SO) starting from $\theta^{SGD}(0) = \theta$ with learning rate $\eta > 0$.
In this section, we will take also other algorithms in exam, such as SGD with replacement, GD, or GD with noise injected. Analogously to SGD without replacement, we will denote by $\theta_i$ the parameters after the $i-th$ step of the other algorithms considered, with initialization $\theta(0) = \theta$ and the learning rate $\eta > 0$.

Let $b \in \N$ be the batch size and $k \in \N$ the number of optimization steps considered at once. Generally, $k \leq n/b$ will be and can be thought of as, the number of batches or the number of steps in an epoch. We denote by $B_i \subseteq D$ the batch used in the $i-th$ step, $i \leq k$.
For a vector or matrix we denote the components with the indices in subscripts, e.g., for $A \in \R^{d\times d}$, for $i,j \leq d$, we denote $A_{i,j}$ the $(i,j)$ component of the matrix $A$. 
Recall $c:= \eta k$ is the "effective learning rate".
We denote by $H : = \frac{c^2}{2n}\big( \E[(\nabla^2 L)^2] - (\nabla^2 L)^2 \big)$.

\subsection{A Weighted Trace of the Covariance}
Assume in this subsection that the product between the matrix $H$ and the full batch gradient $\nabla L$ is small. This is often the case, indeed, e.g., for Gaussian data in late training we have that the size of it is much smaller than the regularizer.
\[
\text{Size of }H \nabla L \quad \sim \quad \frac{c^2}{n} (\nabla^2 L)^2 \underbrace{\nabla L}_{\to \ 0}
\quad \ll \quad \frac{c}{2n} \big[I + c\nabla^2 L + \ldots \big] \cdot \nabla \underbrace{\mathrm{trace}\big( \Cov( \nabla L(z) \big)}_{\sim \ bigger }
\]
the RHS being the size of the regularizer.
In this setting, we find that SGD without replacement, in expectation over batch sampling makes an additional step along every direction on a novel regularizer.
These additional steps can be interpreted as penalizing a weighted trace of the covariance of the gradients over the single data points.
\begin{theorem}
\label{theo:SGD_bias}
In the notations of \S \ref{section:notations} assume $H\nabla L \ll 1$. Up to a multiplicative error of size $\eta^{\alpha} k \nabla L + 1/n$ with $\alpha \geq 1$, after one epoch of SGD without replacement starting from $\theta$ the parameters are different in expectation than the ones after the same number of steps of GD or SGD with replacement with the same learning rate and number of steps. This difference on the $i-th$ component $\theta_i$ of the parameters corresponds to
\[
-\frac{\eta}{b-1} \cdot \frac{d}{d\theta_i}
\mathrm{trace}\big( S_i \cdot \Cov_{z \in D}( \nabla L(z)) \big)
\]
where $S_i$ is a diagonal matrix and it is a function of the full batch Hessian $\nabla^2 L$.
\end{theorem}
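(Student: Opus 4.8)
The plan is to expand the one-epoch update of SGD without replacement as a Taylor series in $\eta$ around $\theta$, then take the expectation over the uniformly random partition of $D$ into batches $B_1,\dots,B_k$, and finally compare term-by-term with the analogous expansion for GD / SGD with replacement, which is deterministic / unbiased. Concretely, writing $g_j := \tfrac{1}{b}\sum_{z\in B_j}\nabla L(\theta,z)$ for the batch gradient and unrolling the recursion $\theta^{SGD}(j+1)=\theta^{SGD}(j)-\eta\,\nabla L(\theta^{SGD}(j),B_{j+1})$, one gets (this is Proposition~\ref{theo:SGD_effect_fixed_batches}, which I assume) an expression of the form $\theta^{SGD}(k)-\theta = -\eta\sum_j g_j + \eta^2\sum_{j<\ell}\nabla^2 L(\theta,B_\ell)\,g_j + O(\eta^3)$, where the $O(\eta^3)$ terms carry factors $\binom{k}{3}$ etc. and remainders controlled by the locally bounded third derivative. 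The leading term $-\eta\sum_j g_j = -\eta k\,\nabla L(\theta,D) = -c\,\nabla L$ is exactly the GD term and cancels in the difference; the interesting content is in the second-order term and its higher-order companions.

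The key step is computing $\E$ of the cross terms $\sum_{j<\ell}\nabla^2 L(B_\ell)\,g_j$ over the random partition. Here the without-replacement structure is essential: because $\sum_j g_j = k\,\nabla L$ is deterministic, the fluctuations of $g_j$ around $\nabla L$ are negatively correlated across $j$, and $\E[\,\text{(fluctuation of }\nabla^2L(B_\ell))\otimes(\text{fluctuation of }g_j)\,]$ for $j\neq\ell$ picks up a $-\tfrac{1}{n-1}$-type factor from sampling without replacement (this is the combinatorial identity for covariances of means of disjoint samples, which I'd package via the appendix machinery in \S\ref{section:exp_app}). Splitting $\nabla^2 L(B_\ell) = \nabla^2 L + (\nabla^2 L(B_\ell) - \nabla^2 L)$ and $g_j = \nabla L + (g_j - \nabla L)$, the ``mean$\times$mean'' piece reproduces the unbiased/GD contribution $\binom{k}{2}\eta^2\nabla^2 L\,\nabla L$ and again cancels in the difference, while the ``fluctuation$\times$fluctuation'' piece yields, after summing the $\binom{k}{2}$ ordered pairs and using the $-\tfrac{1}{n-1}$ factor, something proportional to $\tfrac{\eta k}{b-1}\,\E_{z\in D}\big[(\nabla^2 L(z)-\nabla^2 L)(\nabla L(z)-\nabla L)\big]$ up to the stated errors. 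The ``mean$\times$fluctuation'' cross pieces vanish in expectation. Recognizing $\E_z[(\nabla^2 L(z))(\nabla L(z)-\nabla L)] = \E_z[\nabla^2 L(z)\nabla L(z)] - \nabla^2 L\,\nabla L = \tfrac12\nabla\,\mathrm{trace}\,\Cov_z(\nabla L(z))$ (differentiating the scalar $\tfrac12\|\nabla L(z)-\nabla L\|^2$ and averaging) rewrites the whole correction as a gradient, which is where the ``step on a regularizer'' interpretation comes from.

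Then I would iterate: the same mechanism applied to the higher-order terms $\eta^m\binom{k}{m}(\nabla^2 L)^{m-1}$-type contributions produces, in expectation, corrections that assemble into a geometric-type series $\sum_m (-c\nabla^2 L)^{m}/(\text{stuff})$ acting on the same covariance gradient; resumming this series is exactly what produces the Hessian-dependent diagonal matrix $S_i$ with the entries displayed in \eqref{eq:real_world_effect} (the $(c\lambda_i c\lambda_j)^{-1}$ and $1+(-\eta\lambda_i)^k$ structure are the signatures of such a resummation, with one index coming from the ``$\nabla^2 L(B_\ell)$ slot'' and one from the ``$g_j$ slot''). The assumption $H\nabla L\ll 1$ is what lets me drop the companion family of terms that would instead involve the covariance of the Hessian times $\nabla L$ (the ``additional term'' mentioned in the informal statement), leaving only the covariance-of-gradients regularizer at this level of accuracy. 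The error bookkeeping — tracking that everything discarded is $O(\eta^\alpha k\nabla L + 1/n)$ with $\alpha\geq1$ — needs the local bound on $\nabla^3 L$ along the trajectory and the local-movement hypothesis $\eta k\|\nabla L\|\ll1$ inherited from \eqref{eq:setting_smith}.

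The main obstacle I expect is the combinatorics and the resummation in the second and third paragraphs: getting the covariance identities for statistics of disjoint batches exactly right (the factors of $b$, $n$, $b-1$, $n-1$, and the $\binom{k}{m}$'s all have to conspire so the $\eta$'s and $k$'s pair up into powers of $c$), and then showing that the resulting series in $(-c\nabla^2 L)$ genuinely converges / closes into the claimed $S_i$ rather than just being a formal series — this is precisely the step where prior work stays in the $c\lambda\ll1$ regime and where allowing $c\lambda=O(1)$ forces one to be careful about which remainders are actually small. The cancellation of the GD/with-replacement part is conceptually the crux but should be routine once the expansion is set up; the delicate part is certifying the size of what's thrown away uniformly along the epoch.
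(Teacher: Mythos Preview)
Your proposal is correct and follows essentially the same route as the paper: Taylor-expand the epoch (Proposition~\ref{theo:SGD_effect_fixed_batches}), compute expectations over the without-replacement partition via the $-1/(n-1)$ covariance identities (Lemma~\ref{lemma:chain_cond_exp} and Corollary~\ref{cor:chain_cond_exp_var}), use $\Cov_z(\nabla^2 L(z),\nabla L(z))=\tfrac{1}{2}\nabla\,\E_z\|\nabla L(z)-\nabla L\|^2$ to recognize the gradient-of-a-regularizer structure, invoke $H\nabla L\ll 1$ to discard the Hessian-covariance branch (your ``companion family''), and resum the remaining series in $(-\eta\nabla^2 L)$ to obtain $S_i$. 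The one presentational difference is that the paper makes the ``it is a derivative'' step manifest at all orders at once by introducing the potential $V_{i,j}$ in \eqref{eq:def_V_i} and exploiting symmetry under batch-order averaging (\S\ref{section:order}, \eqref{eq:deriv_1}), whereas you argue it at lowest order via the identity and then iterate; but this is the same mechanism, and your identification of the combinatorics/resummation as the main labor is exactly where the paper spends its effort.
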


We now dig into the effect and shape of $S_i$.
Let us fix from now on an orthonormal basis $\{ \theta_1, \theta_2, \ldots \}$ of eigenvectors for the Hessian $\nabla^2 L$. In the case in which $H \ll 1$ at $\theta$ stationary point, we can rewrite this bias along the eigenvector $\theta_i$ corresponding to the eigenvalue $\lambda_i$ as
\begin{equation}
    - \frac{\eta}{b-1}\ \sum_{j}
    (S_i)_{j,j} \ \frac{d}{d\theta_i} Cov_{z \in D}(\nabla L(z))_{j,j} 
\end{equation}
where
\begin{equation}
    (S_i)_{j,j} \quad = \quad
    \frac{1}{2}
    \sum_{h=0}^{k+2} (-\eta \lambda_i)^h\sum_{l = h+2}^k \binom{k}{l} (-\eta \lambda_j)^{l-h-2}.
\end{equation}
Note that the entries $(S_i)_{j,j}$ are approximately
\begin{equation}
\label{eq:R_ij}
    (S_i)_{j,j}
    \quad \sim \quad \frac{c}{2}\cdot
    \begin{cases}
        \frac{1}{2}    
        \!\! \qquad \qquad \qquad \qquad \qquad \quad \text{ constant}   \qquad
        & \text{ if } c\lambda_i, c\lambda_j \text{ small}\\
        (c\lambda_i)^{-1}    
        \hspace{0.04cm}\quad \qquad \qquad \quad \qquad \text{ small}
        & \text{ if } c\lambda_i \gg 0, c \lambda_j \text{ small} \\
        (c\lambda_j)^{-1}       
        \quad \qquad \qquad \qquad \quad \text{ small}
        & \text{ if } c \lambda_i \text{ small}, c\lambda_j \gg 0 \\
        \frac{1 + (-\eta \lambda_i)^{k}}{c\lambda_i \cdot c\lambda_j} \quad \qquad \qquad \qquad \ \,
        \substack{\text{very small }\\\text{or very big}}
        & \text{ if } c\lambda_i, c\lambda_j \gg 0 \\
        \Omega\big((c\lambda_j)^{-2}\exp(-c\lambda_j)\big)
        \qquad \substack{\text{exponen-}\\\text{tially big}}
        & \text{ if } c\lambda_j \ll 0
    \end{cases}
    \end{equation}

\subsection{Small Learning Rate or Small Hessian Regime.}
\label{section:bias_small}
Assume that the full-batch Hessian multiplied by the effective learning rate $c\nabla^2 L \ll 1$ is small but has arbitrary "variance" $H$. 
Then we can conclude that the regularizer's step coincides with 
\[
-\frac{\eta}{b-1} \nabla \E_{z \in D} [ \mednorm{\nabla L (z) - \nabla L}_{S_0}]
\]
where $S_0$ is
\[
S_0 \quad = \quad
    \frac{c}{4}
    H^{-1/2} \sqrt{\pi} \ \mathrm{erf}(\sqrt{H}) \left( 2+ 2H^{-1/2}(e^{-H}-1) - c\nabla^2 L\right)
    .
\]
When also $H \ll 1$ this becomes approximately $S_0 = \frac{c}{4}I$ and the regularizer is
\[
\frac{c}{4} \var{\mednorm{\nabla L}}.
\]
This extends prior theoretical findings by \cite{smith_origin_2021} and empirical findings by \cite{jastrzebski_catastrophic_2021}.

\begin{theorem}
\label{theo:SGD_bias_eta}
In the notations of \S \ref{section:notations}, let us assume that $\eta k \nabla L(.) \ll 1$.
Then up to a multiplicative error of size $\eta^{\alpha} k \nabla L + 1/n$ with $\alpha \geq 1$,
in expectation over batch sampling, $k$ steps of SGD without replacement and Shuffle Once\footnote{Shuffle Once is the version of mini-batch SGD where the training set is shuffled at the beginning of the training, partitioned, and at every epoch uses the same batches in the same order.} differ from the same number of steps of SGD with replacement or GD, by the following additional preconditioned steps with learning rate $\tfrac{\eta}{b-1}$ on some regularizers
\[
\quad - \quad \frac{\eta}{b-1} \ \sum_{i = 0}^{k-2} A_i
\nabla \text{Regularizer}_i
\]
where $A_i = \E_{\left[ \substack{\text{expectation over}\\ \text{batches' sampling}} \right]} \left[ \prod_{j > i}-\eta \nabla^2 L(B_j) \right]$ and 
\[
\text{Regularizer}_i \quad = \quad
\E_{z \in D}[ \mednorm{\nabla L(z) - \nabla L}_{S_i}  ]
\ + \
\nabla L^\top \mathcal{S}_i \nabla L .
\]
Where the matrices $A_i, \ S_i$, and $\mathcal{S}_i$, $i \leq k-2$, are approximately series of powers of $\nabla^2 L$ and $H$.
\end{theorem}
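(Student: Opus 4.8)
The plan is to start from the exact recursion $\theta^{SGD}(i) = \theta^{SGD}(i-1) - \eta\,\nabla L(\theta^{SGD}(i-1), B_i)$ for a \emph{fixed} partition $B_1,\dots,B_k$ of $D$, so that $\theta^{SGD}(k) - \theta = -\eta\sum_{i=1}^k \nabla L(\theta^{SGD}(i-1),B_i)$. I would Taylor-expand each summand around $\theta$ to third order — this is exactly where the hypothesis that $L$ admits three weak derivatives enters — and substitute the expansion recursively into itself, using $\theta^{SGD}(i-1)-\theta = -\eta\sum_{j<i}\nabla L(\theta^{SGD}(j-1),B_j)$. Since the total displacement is governed by $\eta k\mednorm{\nabla L}$ (our standing assumption $\eta k\nabla L(\cdot)\ll 1$), the third-order remainder accumulated over the whole epoch is of order $\eta^\alpha k\mednorm{\nabla L}$ with $\alpha\ge 1$, and likewise any term carrying a factor $\eta^{j}k\mednorm{\nabla L}$ with $j\ge 2$ can be absorbed into the stated multiplicative error. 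Crucially I would \emph{not} expand in powers of $\eta\nabla^2 L$: the Jacobians $-\eta\nabla^2 L(B_j)$ propagate a perturbation created at step $i$ forward through all later steps and must be kept to all orders, which is why $S_i$, $\mathcal S_i$, $A_i$ end up being series rather than polynomials. This produces Proposition~\ref{theo:SGD_effect_fixed_batches}: an explicit expression for $\theta^{SGD}(k)-\theta$ as a nested sum of products of $\nabla L(B_i),\nabla^2 L(B_i),\nabla^3 L(B_i)$ indexed by increasing step numbers. I would run the identical expansion for GD ($B_i\equiv D$) and for SGD with replacement; for the latter, taking the expectation over the i.i.d.\ batches and using unbiasedness collapses it to the GD expression, recovering the known fact that GD and SGD with replacement have the same expected trajectory.

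\textbf{Stage 2 (expectation over the random partition).} The new content is to compute $\E[\cdot]$ of the Stage-1 expression over a uniformly random partition of $D$ into $k$ cells of size $b$. By exchangeability of the cells, each monomial $\nabla^{a_1}L(B_{i_1})\cdots\nabla^{a_m}L(B_{i_m})$ has an expectation depending only on the coincidence pattern of the indices; unlike the i.i.d.\ case the cells are \emph{dependent}, and it is precisely this dependence that generates the regularizer. For instance $\Pr[z\in B_1,\,w\in B_2]=\frac{b(b-1)}{n(n-1)}$ for $z\ne w$ and $0$ for $z=w$, so expanding these factors gives, beyond the ``independent'' term $\nabla^2 L\,\nabla L$, a correction $-\frac{1}{n-1}\bigl(\E_{z}[\nabla^2 L(z)\nabla L(z)] - \nabla^2 L\,\nabla L\bigr)$, which, via $\nabla^2 L(z)\nabla L(z)=\tfrac12\nabla\mednorm{\nabla L(z)}^2$, is exactly a piece of $-\tfrac{1}{n-1}\nabla\,\mathrm{trace}\,\Cov_{z}(\nabla L(z))$. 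Iterating this bookkeeping at every order: terms that differ only in how many later steps propagate the perturbation collect into $A_i=\E\bigl[\prod_{j>i}(-\eta\nabla^2 L(B_j))\bigr]$; the combinatorial coefficients $\binom{k}{l}$ attached to the covariance and to the $\nabla L^\top(\cdot)\nabla L$ pieces, together with $\tfrac1{n-1}$ and $n=bk$, resum into $\tfrac{\eta}{b-1}$ times the series $S_i$, $\mathcal S_i$ in powers of $\nabla^2 L$ and $H=\tfrac{c^2}{2n}(\E[(\nabla^2 L)^2]-(\nabla^2 L)^2)$; and all coincidence patterns of depth $\ge 2$ (three or more indices colliding, or the sub-leading $\tfrac1{n-2},\tfrac1{n-3},\dots$ corrections) contribute only $O(1/n)$ and land in the error term. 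The final check is that the surviving cross-terms genuinely assemble into gradients, which follows from $\nabla^2 L(z)\nabla L(z)=\tfrac12\nabla\mednorm{\nabla L(z)}^2$ and its higher-order analogues, letting one recognize $\E_{z}[\mednorm{\nabla L(z)-\nabla L}_{S_i}] + \nabla L^\top\mathcal S_i\nabla L$ up to the allowed error.

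\textbf{Main obstacle.} I expect the bottleneck to be the combinatorial resummation in Stage 2: organizing the partition-expectations so that the $1/(n-j)$ corrections cleanly separate into a leading $\tfrac{1}{n-1}$ (equivalently $\tfrac{1}{b-1}$) times the claimed regularizer plus an $O(1/n)$ remainder, while simultaneously matching the coefficient of each $(\nabla^2 L)^{p}H^{q}$ against the closed forms for $A_i$, $S_i$, $\mathcal S_i$. A secondary difficulty is analytic rather than combinatorial: controlling the third-order Taylor remainder \emph{uniformly} along the discrete trajectory and not just at $\theta$, which is why one needs local boundedness of $\nabla^3 L$ together with the small-displacement hypothesis, and why the error is multiplicative in $\eta^\alpha k\mednorm{\nabla L}$ rather than additive.
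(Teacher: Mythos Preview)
Your two-stage architecture (fixed-batch Taylor expansion keeping the Hessian propagators to all orders, then expectation over the without-replacement sampling) is the same as the paper's, and your description of the expectation machinery (coincidence patterns, leading $1/(n-1)$ correction producing the covariance, deeper coincidences contributing $O(1/n)$) matches the paper's Lemma~\ref{lemma:chain_cond_exp} and Corollary~\ref{cor:chain_cond_exp_var} essentially verbatim.

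The one substantive divergence is in how you obtain the \emph{gradient-of-a-regularizer} form. You propose to recognize it term-by-term from $\nabla^2 L(z)\nabla L(z)=\tfrac12\nabla\mednorm{\nabla L(z)}^2$ and unspecified ``higher-order analogues''. The paper instead uses a symmetrization/potential trick: it introduces the scalar
\[
V_{i,j}(\theta)\ =\ \nabla L(B_j)^\top\Bigg[\prod_{h=i+1}^{j-1}\bigl[I-\eta\nabla^2 L(B_h)\bigr]\Bigg]\nabla L(B_i),
\]
applies $\nabla_\theta$, and notes that the product rule yields (a) exactly the fixed-batch deviation term of Proposition~\ref{theo:SGD_effect_fixed_batches}, (b) the same term with the roles of $i$ and $j$ swapped, and (c) a third-derivative piece from differentiating the middle product. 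By exchangeability of the batches, (a) and (b) have the \emph{same} expectation, so the expected deviation equals $\tfrac12\nabla_\theta\E[\sum_{i<j}V_{i,j}]$ with (c) absorbed into the error. This delivers the gradient structure in one stroke, uniformly across all orders in $\eta\nabla^2 L$, and simultaneously explains why differentiating the weight matrices $S_i$ (which depend on $\theta$ through $\nabla^2 L$) only produces error-sized contributions --- a point your plan leaves implicit. Your route should still arrive at the same place, but with noticeably more bookkeeping; the paper's trick is what makes the $A_i,S_i,\mathcal S_i$ structure fall out cleanly rather than having to be matched coefficient-by-coefficient. (Minor slip: your $\Pr[z\in B_1,\,w\in B_2]$ should be $b^2/\bigl(n(n-1)\bigr)$, not $b(b-1)/\bigl(n(n-1)\bigr)$; this does not affect the argument.)
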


The proof of these results can be found in the appendix. We will give a deeper look into the nature of the regularizer in \S\ref{section:nature} and of the matrices $S_i$s in Proposition \ref{prop:S}. We deal with the error and the regimes in which \cref{theo:SGD_bias,theo:SGD_bias_eta} apply in \S \ref{section:error}.

\section{Implicit Regularization}
\label{section:regularization}

There are three notions of low complexity minima that have been discussed in literature: (i) the lowest norm minimum, i.e., the closest to the origin; (ii) the flattest minima, i.e., the one which some norm of the Hessian of the loss is minimized; and (iii) the "lowest variance" minima, i.e., the one that minimizes some function of the eigenvalues of the covariance of the gradients. In certain cases these three notions coincide, this is for instance the case of optimizing $\sum_{(x,y)}(abx-y)^2$ in $a,b \in \R$ which is the setting of \cref{fig:intro}. In some cases, as for overparameterized linear regression, the covariance of the gradients and the Hessian are the same on any solution and what it matters is just the norm of the minimum we find. In other cases these may differ. We argue here that SGD find a minimum of the third kind and what is the machinery that induces this effect.

\subsection{Decoupling Descent and Regularizing Phases}
Theorem \cref{theo:SGD_bias} indicates that using SGD effectively adds a regularization term proportional to a weighted trace of the covariance matrix across different parameter dimensions. While the Hessian of the loss is not explicitly regularized, the regularization effect intensifies in the directions relative to its smallest eigenvalues.
The role of $S_i$ in \cref{theo:SGD_bias}, indeed, is to approximately project or restrict the part of the dynamics relative to the regularizer on the flat directions, i.e., the kernel of the Hessian. In the case in which either big (but less thant $\eta^{-1}$) or small eigenvalues are present in the Hessian, as \cref{fig:intro}, $S_i$ is approximately zero along the directions admitting big curvature and $c/4$ along the eigendirections relative to the small ones, see \cref{eq:R_ij}. For a vector $v$
\[
S_i \cdot v \quad \sim \quad \frac{c}{4} \cdot \underbrace{\Pi_{flat}}_{\text{projection}} (v) \qquad \substack{\text{projects the vector } v \text{ along the directions}\\
\text{in which the Hessian is degenerate.}}
\]
This means that if two minima or stationary points are in the same connected manifold of stationary points, SGD without replacement will move towards the one admitting a lower trace of the covariance, so the flatter one. 

This means that we can understand the dynamics of SGD without replacement from an insightful and completely novel point of view.
\begin{mdframed}
    Assume SGD without replacement is traveling an approximately flat area. The dynamics along the eigendirection of the eigenvalue $\lambda$ is dominated by
    \begin{itemize}
        \item The \textbf{regularization} effect if $c \lambda$ is small or negative.
        \item The trajectory of \textbf{SGD with} replacement (and in expectation of GD) if $c \lambda$ is big but the assumptions of Proposition \ref{prop:breaking_eos} are not satisfied, e.g., if $\lambda \leq 1/\eta$.
        \item The \textbf{regularization} effect, which is very intense, in the case of Proposition \ref{prop:breaking_eos}.
    \end{itemize}
\end{mdframed}

\subsection{Suitability of SGD for Overparameterization}
Penalizing the weighted trace of the covariance in the eigendirections of the small eigenvalues of the Hessian corresponds to traveling \emph{along the flat directions} in the direction that makes the gradients less noisy. To have a mental picture of this behavior, this means that doing SGD without replacement coincides with: using GD on the directions spanned by the signal, regularizing along the perpendicular directions. In contrast, GD would live these components untouched as it happens with the perpendicular components in overparameterized linear regression.

This decoupling may explain the suitability of SGD without replacement for non-convex objective/loss functions with the same geometry as the ones of large scale neural networks.
Recent findings \cite{liu_toward_2020,nguyen_connected_2019,zhang_embedding_2022}, indeed, indicate that the loss-landscapes of neural networks present highly degenerate Hessians figuring high dimensional manifolds of stationary points.
Most of these stationary points generalize poorly,
SGD, however, converges to almost-global loss minima that generally generalize better \cite{gunasekar_implicit_2017,neyshabur_search_2015}.
We believe that the discovery that SGD without replacement travels flat manifold quickly with this machinery shed lights on why the algorithm is so successful in training large scale neural networks.

\subsection{The Phases of SGD in the Experiments}
The behavior highlighted above can be observed in \cref{fig:intro,fig:1} in the setting described in \S \ref{section:setting_plots}. In these plots we see that SGD without replacement approaches the manifold of minima as GD. However, the regularizer starts pushing in the direction in which the trace of the covariance gets smaller. Precisely, SGD without replacement, once around the manifold of minima, starts traveling in the direction of the wider (less noisy) minima that are present on that manifold.

This phenomenon can also be seen in the experiment with ReLU networks on the synthetic dataset of \cref{fig:intro_2,fig:W_without}. 
In that case we are even able to highlight which parts of the dynamics are led by the regularization and which by the gradient descent effect.

\begin{figure}[ht!]
\begin{mdframed}
    {\centering \large \textbf{Decoupling the Dynamics: the Phases}\\[0.3cm]}
    \centering
    \includegraphics[width=\linewidth, height=0.4\linewidth]{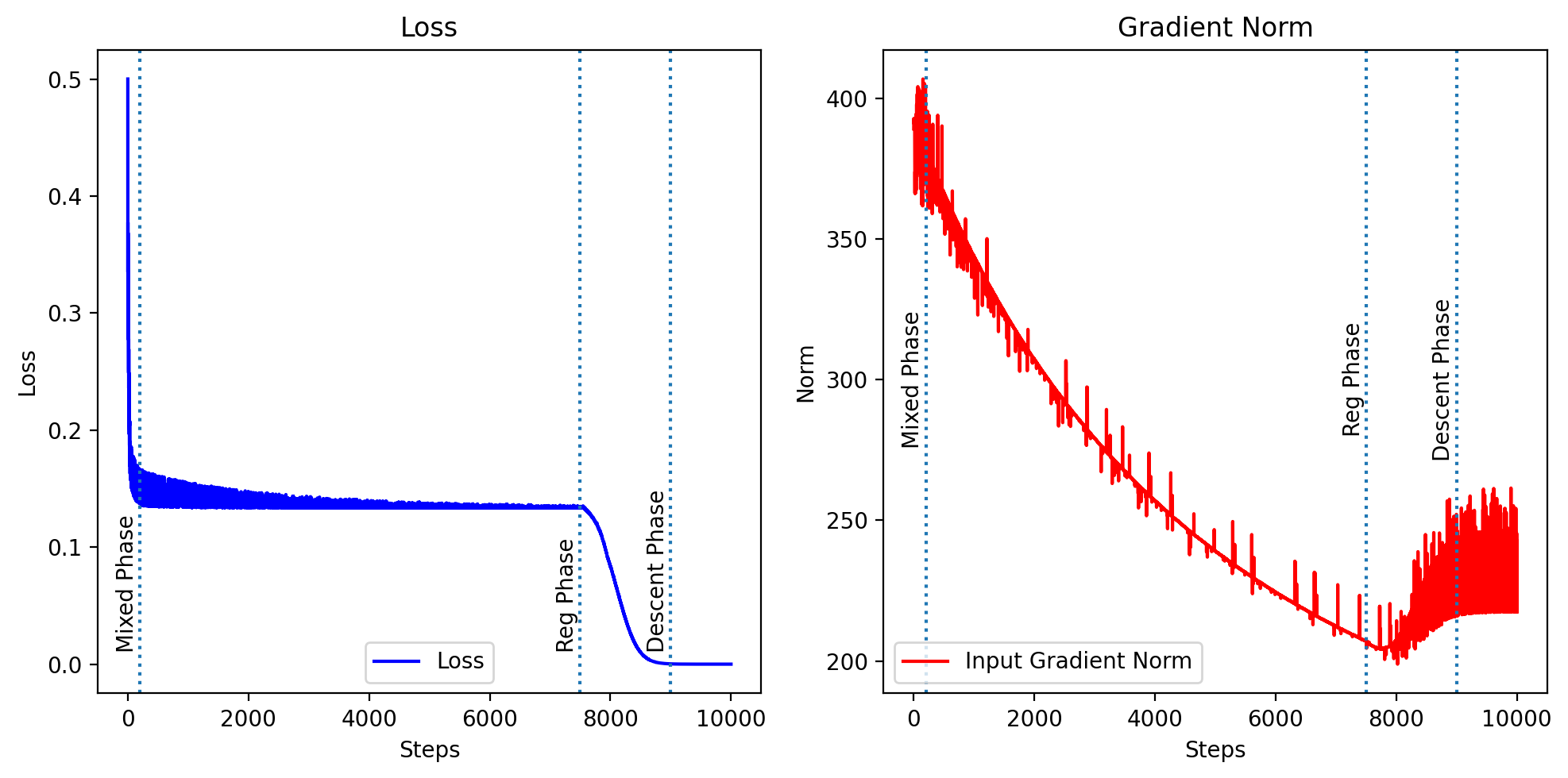}
    \caption{This is dynamics of SGDs without replacement for ReLU network on a synthetic dataset with fixed learning rate.\\
    We can observe how first we have a convergence (mixed) phase, then a regularization phase, then a convergence phase again which ends up in a global minimum. Convergence to the global minimum was impossible without the regularization phase, e.g., in the case of GD.
    }
    \label{fig:phases}
\end{mdframed}
\end{figure}

\subsection{The Speed of Regularization}
We discussed how SGD without replacement travels flat regions as neighborhoods of high-order saddles or spurious minima thanks to the steps on the regularizer. We formalize this here in the following proposition. We prove under what assumption SGD travels these flat manifolds and with which speed.
\begin{proposition}[Traveling Flat Regions.]
\label{prop:flat_areas}
Let $v$ be a vector in the kernel of the Hessian of the loss. Assume $\mednorm{v}=1$ and
    \[
    u
    \quad := \quad
    \frac{1}{n} \sum_{z \in D} \langle v, \nabla^2 L(z) \nabla L(z) \rangle
    \quad \neq \quad 0.
    \]
Let us assume also that the third derivative is bounded in a neighborhood of the trajectory.
Then, SGD without replacement travels distance $1$ in the direction of $v$, i.e., $\langle\theta^{SGD}_* - \theta, v \rangle = 1$ in a number of epoch
        \[
            \# \text{epochs}
            \quad = \quad
             \frac{2b}{\eta c^2 |u|}.
        \]
\end{proposition}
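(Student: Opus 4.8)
The plan is to read off the expected per-epoch displacement of SGD without replacement in the direction $v$ from \cref{theo:SGD_bias}, show that it is a constant drift of magnitude $\sim\tfrac{\eta c^2}{2b}\,|u|$, and then accumulate it over epochs, so that the number of epochs needed to cover distance $1$ is its reciprocal. Since $\ker\nabla^2 L$ is a subspace, I may choose the eigenbasis so that $v=\theta_m$ with $\lambda_m=0$, and — as the statement is about traveling a flat region (cf.\ \S\ref{section:regularization}) — I take $\theta$ to be a stationary point of $L(\cdot,D)$ with $v$ tangent to the manifold of stationary points through $\theta$; the trajectory will then be argued to remain near this manifold.

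First I would invoke \cref{theo:SGD_bias} (whose hypothesis $H\nabla L\ll1$ holds since $\nabla L=0$): the expected displacement over one epoch in the component $\theta_m$, relative to GD, is $-\tfrac{\eta}{b-1}\,\tfrac{d}{d\theta_m}\,\mathrm{trace}\big(S_m\cdot\Cov_{z\in D}(\nabla L(z))\big)$. Two inputs reduce this to the stated drift. First, at a stationary point full-batch GD does not move, so SGD without replacement's entire expected motion along $v$ is this regularizer step. Second, the derivative of the trace of the covariance along $v$ is exact: since $\nabla_\theta\mednorm{\nabla L(z)}^2=2\,\nabla^2L(z)\nabla L(z)$ and $\nabla^2L\,v=0$, one gets $\tfrac{d}{d\theta_m}\,\mathrm{trace}\big(\Cov_{z\in D}(\nabla L(z))\big)=2u$. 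It then remains to evaluate the weight $S_m$ on the kernel: by \eqref{eq:R_ij} and Proposition \ref{prop:S}, for $\lambda_m=0$ the entries $(S_m)_{j,j}$ are, to leading order, the same constant $s_0\sim\tfrac{c^2}{4}$ along the flat directions $j$ and are suppressed by $(c\lambda_j)^{-1}$-type factors along the curved ones (where in addition the covariance is small near a stationary point), and differentiating $S_m$ itself contributes only at lower order. Collecting these, the expected per-epoch displacement along $v$ is $-\tfrac{2\eta s_0}{b-1}\,u\sim-\tfrac{\eta c^2}{2(b-1)}\,u$, i.e.\ of magnitude $\tfrac{\eta c^2|u|}{2(b-1)}$, which is $\tfrac{\eta c^2|u|}{2b}$ up to the $1/n$-type error already built into \cref{theo:SGD_bias}.

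To close, I would iterate: applying \cref{theo:SGD_bias} at the start of each epoch, and using that along a zero eigendirection the $k$-step GD Jacobian $(I-\eta\nabla^2L)^k$ acts as the identity — so no contraction or amplification of the displacement accumulated in earlier epochs — after $N$ epochs $\langle\theta^{SGD}_*-\theta,v\rangle\approx\sum_{t=1}^{N}\tfrac{\eta c^2}{2b}\,u(\theta_t)$, and setting this to $1$ gives $N=\tfrac{2b}{\eta c^2|u|}$. The hard part will be controlling this sum, because over $N\sim\tfrac{b}{\eta c^2|u|}$ epochs the parameters travel distance $O(1)$, so $u$, the Hessian and its kernel, and the local covariance cannot be treated as frozen — this is exactly what the assumed local bound on $\nabla^3L$ buys: it gives $|u(\theta_t)-u(\theta_0)|\leq\mednorm{\nabla^3L}\cdot\mednorm{\theta_t-\theta_0}$ and analogous bounds on the tilt of $\ker\nabla^2L$ and the drift of $\Cov$, so the per-epoch drift stays within a constant factor of its initial value and the trajectory does not leave the flat region; it also makes the multiplicative error $\eta^\alpha k\nabla L+1/n$ of \cref{theo:SGD_bias}, summed over $N$ epochs, lower order. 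Finally, since \cref{theo:SGD_bias} controls the expectation over batch orderings, the statement is about the expected trajectory; for the realized one I would additionally note, as in \S\ref{section:with_repl}, that the fluctuations of SGD without replacement enter only at second order in $\eta$ and are hence negligible against the drift on this horizon. The argument mirrors the saddle-escape estimate of Proposition \ref{prop:strict_saddles}, with the exponential factor $1/(c\lambda)$ there replaced here by the purely linear accumulation $1/(\text{drift per epoch})$ because $\lambda_m=0$.
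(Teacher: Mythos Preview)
Your approach is exactly the paper's: the Saddles appendix (\S\ref{appendix:stationary_point}) computes the per-epoch drift $\alpha$ along $v$ from the regularizer of \cref{theo:SGD_bias} (the GD part vanishes at a stationary point), observes that on a kernel direction the accumulation is purely linear, $\theta_m-\theta=m\alpha$, and inverts. One slip: you read $s_0\sim c^2/4$ from \eqref{eq:R_ij} and Proposition~\ref{prop:S}, but both give $(S_i)_{j,j}\sim\tfrac{c}{2}\cdot\tfrac{1}{2}=\tfrac{c}{4}$ on flat directions, which would yield a per-epoch drift $\tfrac{\eta c|u|}{2(b-1)}$ and hence $m\approx\tfrac{2b}{\eta c|u|}$; the paper's own appendix carries the same normalization ambiguity (compare its line $\theta_m-\theta=m\cdot\tfrac{\eta}{b-1}\tfrac{c}{2}u$ with the earlier identification $S=\lambda^{-2}(\exp(-c\lambda)+c\lambda-1)\to c^2/2$), so this is a shared constant-level inconsistency rather than a gap in your argument.
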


\section{Shaping the Hessian}
\label{section:Hessian}
We show here that \cref{theo:SGD_bias_eta} is potentially enough to explain several empirically observed phenomena about implicit regularization.

\subsection{Literature Review}
One of the papers that started the whole line of research on the implicit regularization of algorithms was \cite{keskar_large-batch_2016}. They empirically observed in vision tasks that a lower batch size led, generally, to a wider (or less sharp) minimum.
Precisely, they noted that to minima found by smaller batch sizes correspond smaller values of a quantity which is generally increasing in the number of large eigenvalues or in the size of the trace of the Hessian. 
Similarly, \cite{jastrzebski_catastrophic_2021}, observed that big learning rate SGD has an effect similar to penalizing
\[
\mathrm{trace} \! \left( \E \left[   \nabla L  \nabla L^\top   \right] \right)
\quad = \quad
\E \left[   \mednorm{\nabla L}^2   \right]
\]
that in the image classification task in which they work coincides with the trace of the Fisher matrix.
In related settings, the Fisher matrix has been shown to approximate the Hessian during the training; in particular, there is an overlap between the top eigenspaces of the Hessian and its eigenspaces \cite{jastrzebski_three_2018, martens_new_2020, thomas_interplay_2020}.
Furthermore, \cite{jastrzebski_catastrophic_2021} shows that, in practice, penalizing it consistently improves generalization, reduces memorization, and regularizes the trace of the final Hessian. 
There exist multiple other studies along these lines. Some recent ones include \cite{lu_benign_2023} who observe that a bigger learning rate "prevents the over-greedy convergence and serves as the engine that drives the learning of less-prominent data patterns", this aligns with the regularization phenomenon that we unveil. Analogously, \cite{geiping_stochastic_2021} showed how full-batch GD strongly regularized perform as SGD explicitly using a penalization very similar to the regularizer we find.

\subsection{Empirical Observation}
When training ReLU networks often the trajectory quickly converges to flat areas (saddles or spurious minima) and then travels slowly the flat area until it escapes, see \cref{fig:W_without} and the important discussion started by \cite{power_grokking_2022}.
Our intuition is that in the phases of quick convergence the trajectory follows gradient flow or gradient descent, while in the others the area is quite flat, \cref{theo:SGD_bias_eta} applies, and the regularization effect it unveils has visible effects.
\begin{mdframed}
    This argument may imply that the implicit regularization effect highlighted in \cref{theo:SGD_bias_eta} may be the key ingredient in explaining why and how
    SGD converges to almost-global loss minima even though spurious minima of the loss exist \cite{dauphin_identifying_2014,safran_spurious_2018,zhang_embedding_2022}. See \cref{fig:W_without} for an experiment.
\end{mdframed}

Moreover, as we can see in \cref{fig:W_without}, SGD often has the effect of shrinking to zero the already small eigenvalues of the Hessian. 
We show below in \S \ref{section:keskar} that this can also be seen as a consequence of \cref{theo:SGD_bias_eta}.
In particular, the effect unveiled may be even better than other possible regularization effects as it shrinks to zero the already small eigenvalues that may be related to overfitting, memorization, or noise. However, it leaves untouched (unlike, e.g., weight decay) the big ones, usually carrying information about the classes and the learning task \cite{papyan_traces_2020}.
\begin{mdframed}
    This argument implies that the implicit regularization effect highlighted in \cref{theo:SGD_bias_eta} may be the key ingredient in explaining why and how SGD shapes the spectrum of the Hessian of the loss, e.g., producing clusters of large outlying eigenvalues and sending small eigenvalues to zero in the course of training
    \cite{keskar_large-batch_2016,sagun_eigenvalues_2016, papyan_traces_2020,jastrzebski_catastrophic_2021}.
\end{mdframed}

\begin{figure}
\begin{mdframed}
{\centering \large \textbf{Evidences of Regularization: 
SGD Escapes Local Minima}\\[0.3cm]}
    \begin{minipage}{.5\textwidth} 
        \centering
        \includegraphics[width=\linewidth,height=0.7\linewidth]{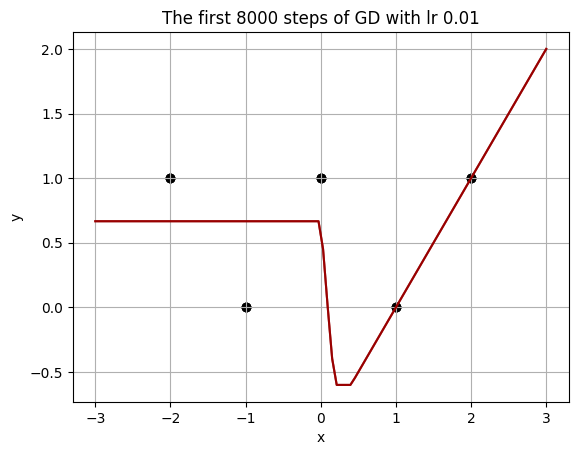}
    \end{minipage}
    \hfill
    \begin{minipage}{.5\textwidth} 
        \centering
        \includegraphics[width=\linewidth,height=0.7\linewidth]{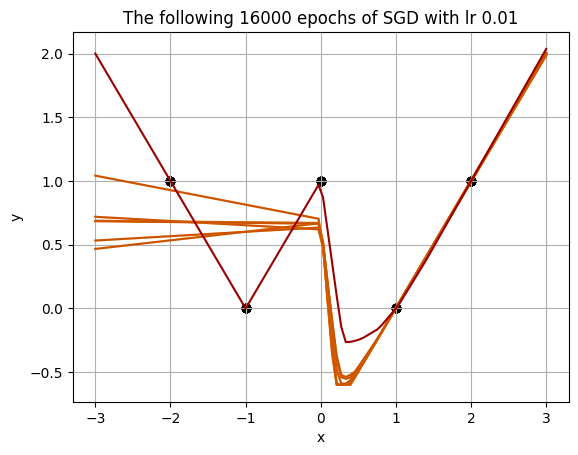}
    \end{minipage}
    \caption{
    \small
        We see here the SGD without replacement escapes local minima to which GD converged. Precisely, we can see a regularization phase while traveling a flat area followed by a grokking effect that implies convergence to a global minimum.
        \\
        We fit "W" shaped one-dimensional dataset with a shallow-ReLU network and MSE. We run GD which converges to the function represented to the right (in less than 400 steps). This point is a spurious local minimum. We then run SGD without replacement with the same learning rate.
        The orange functions above are the functions represented by the neural network at intermediate steps, the red function is the function of the neural network at convergence. 
        We can see below that SGD travels the flat area regularizing the top eigenvalues of the Hessian and the gradients of the model in both the parameters space the inputs.
    }
    \includegraphics[width = \linewidth]{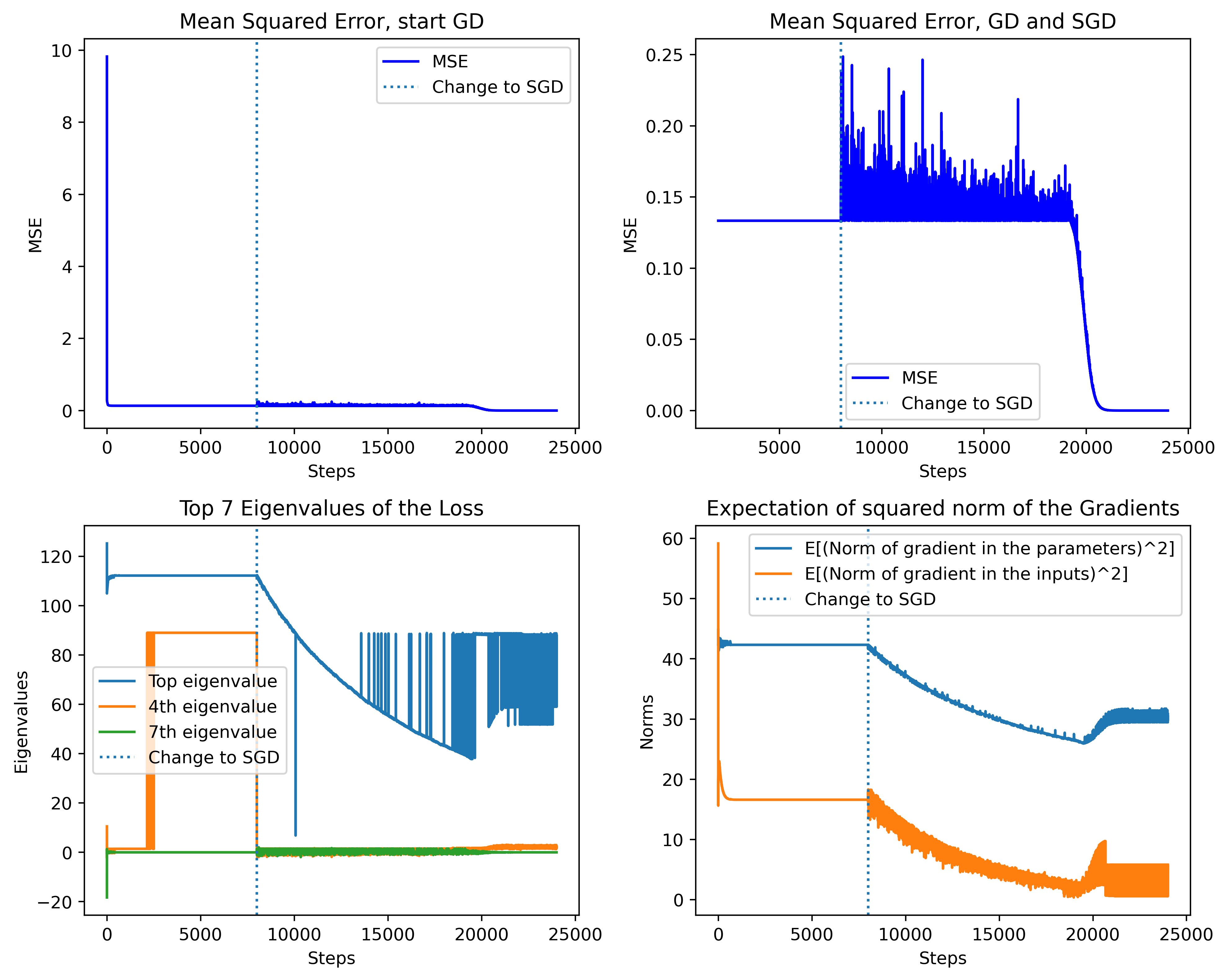}
    \label{fig:W_without}
    \vspace{-0.5cm}
\end{mdframed}
\end{figure}

\subsection{Implicit Regularization due to \cref{theo:SGD_bias_eta}}
\label{section:keskar}

We show here that \cref{theo:SGD_bias_eta} explains why the phenomena discovered \cite{jastrzebski_catastrophic_2021} and \cite{keskar_large-batch_2016} arise from the behavior of SGD without replacement in any flat area of the landscape, that being a saddle, a local minimum, or late training in general. This is the first theoretical result, to the knowledge of the author, that explains and makes more precise these important observations. That said, as a matter of facts, these phenomena may be due \textit{also} to implicit regularization effects that acts in the other phases of the training in which \cref{theo:SGD_bias_eta} is not applicable.

\paragraph{Explaining \cite{jastrzebski_catastrophic_2021}.}
Note that in any flat region of the landscape the covariance of the gradients coincides with matrix considered by \cite{jastrzebski_catastrophic_2021},
\[
\Cov_{z \in D}\big( \nabla L (z) \big) 
\quad = \quad
\E_{z \in D}[ \nabla L (z) \nabla L(z)^\top] - \underbrace{\nabla L \nabla L^\top}_{\sim \ 0}
\quad \sim \quad
\E_{z \in D}[ \nabla L (z) \nabla L(z)^\top].
\]
Moreover, \cref{theo:SGD_bias_eta} implies that SGD without replacement implicitly penalizes terms of the following form with a strength that is directly proportional to $\eta^2n/b^2$.
\[
\mathrm{trace}\big(S \cdot Cov(\nabla L) \big) 
\quad \sim \quad
\mathrm{trace}\big(S_i \cdot \E_{z \in D}[ \nabla L (z) \nabla L(z)^\top] \big).
\]
This formalizes and explains the observations of \cite{jastrzebski_catastrophic_2021}. Indeed, this second matrix corresponds to the Fisher information matrix in the case of cross entropy and they empirically observed that a higher effective learning rate $\eta/b$ (corresponding to bigger steps on the regularizer we found) better regularizes its trace.

\paragraph{Explaining \cite{keskar_large-batch_2016}.}
\cite{thomas_interplay_2020} showed that the trace of the Hessian is approximately the trace of the Fisher matrix along the learning trajectory of image classification tasks. 
The surprising part of \cite{thomas_interplay_2020} is that this is the case along most of the trajectory and not only in the final part, as shown by the following (well-known) theorem. This means that generally in the first part of the training SGD already correctly learns the vast majority of the labels and the major part of the learning is just regularization by SGD and other elements of the training.
\begin{proposition}[Regularizing the trace of Hessian]
\label{theo:reg_Hessian}
    Assume all the training data are correctly classified and we use cross-entropy loss. Then
    \[
    Fisher \quad = \quad \E_{z \in D} [ \nabla L \nabla L^\top] \quad = \quad \nabla^2 L \quad = \quad Hessian.
    \]
    In particular, the local minima of $\mathrm{trace}(S \cdot \text{Fisher})$ coincides with the local minima of $\mathrm{trace}(S \cdot \text{Hessian})$ for all $S$.
\end{proposition}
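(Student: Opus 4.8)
The plan is to reduce the claim to the classical Gauss--Newton decomposition of the cross-entropy Hessian together with the information-matrix identity, and then to observe that the hypothesis that all training points are correctly classified kills the only term by which the Hessian, the Fisher, and the empirical second moment of the per-sample gradients differ.

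First I would fix a data point $z=(x,y)\in D$, write $f(\theta,x)\in\R^{C}$ for the model logits on $x$, $p=p(\theta,x)=\mathrm{softmax}(f(\theta,x))$ for the predicted class probabilities, $e_{y}$ for the one-hot vector of the label $y$, and $J=\partial f/\partial\theta$ for the logit Jacobian, so that the per-sample loss is $\ell_z(\theta)=-\log p_{y}$. Differentiating twice in $\theta$ yields the two exact identities
\[
\nabla_\theta \ell_z \;=\; J^{\top}(p-e_{y}),
\qquad
\nabla^{2}_\theta \ell_z \;=\; \underbrace{J^{\top}\!\big(\mathrm{diag}(p)-pp^{\top}\big)J}_{\text{Gauss--Newton term}} \;+\; \underbrace{{\textstyle\sum_{c}}\,(p_{c}-\mathbf{1}[c=y])\,\nabla^{2}_\theta f_{c}}_{\text{residual term}},
\]
while the one-line computation $\mathrm{Cov}_{y'\sim p}(e_{y'})=\mathrm{diag}(p)-pp^{\top}$ shows that the Fisher information of the model's predictive distribution at $x$ is exactly the Gauss--Newton term, $F(x)=J^{\top}\!\big(\mathrm{diag}(p)-pp^{\top}\big)J$. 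Averaging over $z\in D$ then exhibits the full-batch Hessian $\nabla^2 L$, the empirical gradient moment $\E_{z\in D}[\nabla L(z)\nabla L(z)^{\top}]$, and the Fisher $\E_{z\in D}[F(x_z)]$ as a common Gauss--Newton part plus, respectively, the averaged residual term, an extra $\E_{z}[J^{\top}(p-e_y)(p-e_y)^{\top}J]$ defect, and nothing.

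Next I would feed in the hypothesis. I read ``all training data correctly classified'' as: on $D$ the model realises the labels, i.e.\ the misfit $r_z:=p(\theta,x_z)-e_{y_z}$ vanishes for every $z$ (equivalently the cross-entropy is exactly minimised; in the probabilistic reading this is precisely the regime where the information-matrix identity $\E_{y\sim p}[\nabla^2\ell]=\E_{y\sim p}[\nabla\ell\,\nabla\ell^{\top}]=F$ holds along $D$). With $r_z=0$ the residual term in the Hessian drops, so $\nabla^{2}_\theta\ell_z=J^{\top}\!\big(\mathrm{diag}(p)-pp^{\top}\big)J=F(x_z)$; simultaneously $\nabla_\theta\ell_z=J^{\top}r_z=0$, so the empirical moment collapses onto the same matrix. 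Summing over $D$ gives $\mathrm{Fisher}=\E_{z\in D}[\nabla L(z)\nabla L(z)^{\top}]=\nabla^2 L$. The ``in particular'' is then immediate: once $\mathrm{Fisher}(\theta)=\nabla^2 L(\theta)$ as matrices --- for $\theta$ ranging over the set of correctly-classified parameters, e.g.\ the interpolation manifold relevant to \S\ref{section:keskar} --- the scalar functions $\theta\mapsto\mathrm{trace}(S\cdot\mathrm{Fisher})$ and $\theta\mapsto\mathrm{trace}(S\cdot\mathrm{Hessian})$ agree for every fixed $S$, hence have the same critical points and the same local minima.

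The step I expect to be the genuine subtlety is not any of these computations, which are textbook, but pinning down the reading of ``correctly classified'': at an exact interpolating solution for hard one-hot labels the Gauss--Newton matrix itself degenerates (since $\mathrm{diag}(e_y)-e_ye_y^{\top}=0$), so the robust content of the statement is really the vanishing of the residual/pull-back term and the information-matrix identity --- that once the data is fit one cannot distinguish a Fisher penalty from a Hessian penalty. I would therefore state the hypothesis at that level of generality and flag the degenerate special case explicitly.
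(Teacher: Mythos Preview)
Your argument is correct and, in fact, more careful than the paper's own proof. The paper does not go through the Gauss--Newton decomposition at all: it works directly at the level of the abstract information-matrix (Bartlett) identity. It reads ``correctly classified'' as ``on each $x_i$ the observed label $y_i$ may be treated as a draw from the model $p(\cdot\mid x_i,\theta)$'', which lets it swap the empirical average over $(x,y)\in D$ for $\E_{x\in D}\E_{y\sim p(\cdot\mid x,\theta)}[\cdot]$ and then invoke $\sum_y \nabla^2_\theta p(y\mid x,\theta)=\nabla^2_\theta 1=0$ to get $\E_{y\sim p}[\nabla^2 L]=\E_{y\sim p}[\nabla L\,\nabla L^{\top}]$ in one line. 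Your route via the explicit softmax Hessian splitting is more concrete and buys you something the paper glosses over: you make visible that under the literal reading $p=e_y$ all three matrices are actually zero (the Gauss--Newton block degenerates), so the identity is trivially $0=0=0$ there, and the non-trivial content lives in the approximate regime where the residual term is negligible. The paper's weaker probabilistic reading sidesteps this degeneracy but is correspondingly more informal about what ``correctly classified'' means. Either route establishes the proposition; yours is the more explicit one.
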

This is only one of the possible results that link the covariance of the gradients to the Hessian. The general idea is that those two matrices are both linked to the change in the gradients. Indeed, if we have big curvature the gradients will change more in a neighborhood of $\theta$.
This argument and \cref{theo:reg_Hessian} are instrumental to show that once most training points are correctly classified, SGD without replacement effectively minimizes a weighted trace of the Hessian. Indeed the following two regularizers correspond
\[
\mathrm{trace}\big(S \, \cdot \, 
\underbrace{\E\big[ 
\nabla L(z)\nabla L(z)^\top
\big]}_{ \substack{
\text{Empirical}\\
\text{Fisher Matrix}
}}\big)
\quad = \quad
\mathrm{trace}\big(S \, \cdot \, \nabla^2 L(\theta, D)\big)
\]
This is the first mathematical result, to the knowledge of the author, that explains and formalizes the observation of \cite{keskar_large-batch_2016}.
\cref{theo:SGD_bias_eta} thus explains frequent observations that SGD tends to discover minima with a sparsified Hessian, characterized by a few outlying large eigenvalues (although smaller than $\eta^{-1}$) and many smaller eigenvalues near zero, see, e.g., \cite{keskar_large-batch_2016, sagun_eigenvalues_2016, papyan_full_2019}. 
This implies that SGD without replacement has an implicit regularization effect which may be extremely benign for some learning tasks.

\subsection{Connecting Variance, Global Minima, and Flatter Models}
In the example in \cref{fig:W_without}, SGD travels the flat area regularizing the top eigenvalues of the Hessian and the gradients of the model in both the parameters space the inputs.
\cref{theo:SGD_bias_eta} implies that SGD penalizes the trace of the covariance of the gradients. We draw here a connection between these notions.
Note, indeed, that
\[
\mathrm{trace}\big( 
\E\big[ 
\nabla L(z)\nabla L(z)^\top
\big]\big)
\quad = \quad
\E_{z \in D}
\big[
    \underbrace{\mednorm{\nabla L(z)}^2}_{\substack{\text{Gradients'}\\ \text{Norm}}}
\big].
\]
When the loss function is of the form $L = \ell \circ f$ and the output dimension is one, we have that
\[ 
\E_{z \in D}\left[
\mednorm{\nabla L}^2
\right]
\quad = \quad
\E_{z \in D}
\left[
    \nabla_\theta f^\top \nabla \ell^\top \nabla \ell \nabla_\theta f
\right]
\quad = \quad
\underbrace{\E_{z \in D}
\left[
    \sum_i \mednorm{[\nabla \ell]_i}^2 \mednorm{[\nabla_\theta f]_i}^2
\right]}_{\text{on every output component}}
\]
denoting by $\nabla \ell := \frac{d \ell(z)}{dz}|_{z = f(\theta,x)}$ the residuals. As an example, if $L(\theta, z=(x,y)) = \frac{1}{2}\mednorm{f(\theta, x) - y}^2$ we have $\nabla \ell = f(\theta,x)-y$. This means that at every step, SGD without replacement penalizes what essentially is the sum over the output components $i$ of products of $\mednorm{[\nabla f]_i}^2$ and $\mednorm{[\nabla \ell]_i}^2$, i.e., the residuals and norm of the gradient of the model on the components of the output. When the dimension of the outputs is one, this is exactly $\mednorm{\nabla \ell}^2 \mednorm{\nabla_\theta f}^2$. This corresponds to lowering the size of $\nabla \ell$ and/or the size of $\nabla_\theta f$. Thus if we lower the variance we lower this product and viceversa.
So, with different weights given by the PD matrices $S_i$, the regularizer pushes towards areas where either $\mednorm{\nabla \ell}^2$ or $\mednorm{\nabla_\theta f}^2$ are smaller.
Moreover, note that the Hessian of the loss can be rewritten as
\begin{equation}
\label{eq:hessian}
\nabla^2 L \quad = \quad  \nabla^2 \ell \ \nabla f \ \nabla f^\top \ + \ \nabla \ell \ \nabla^2 f.
\end{equation}
This means that if either of the two terms $\nabla f$ or $\nabla \ell$ is smaller, the corresponding part of the Hessian is smaller.
We can thus conclude that the regularizers effect corresponds to

\begin{mdframed}
    The regularizer pushes towards locations in parameter space where either:
\begin{enumerate}[1)]
\vspace{-0.1cm}
    \item The squared residuals $\mednorm{\nabla \ell}^2$ are smaller. \\ I.e., it escapes towards a better-fitting stationary point, as a global minimum, or
\vspace{-0.1cm}
    \item The gradients of the model $\mednorm{\nabla f(\theta,x)^2}$ are smaller. \\
    This, in turn, may correspond to a stationary point with smaller Hessian, a wider minimum, and to a smaller norm of the gradient in the inputs $x$ of the function $f$. Thus potentially finding a better generalizing minimum by making the function represented by the neural network less oscillating and less prone to overfitting and memorization.
\end{enumerate}
\end{mdframed}
This effect was not observed in previous works, to the knowledge of the author, and is the opposite of what we expect from full-batch GD, since \cite[Appendix C]{cohen_gradient_2021} observed that along the trajectory of GD, $\mednorm{\nabla f(\theta,x)^2}$ keeps steadily increasing. Nonetheless, it agrees with empirical observations.

\section{Saddles}
\label{section:saddles}

We believe we make an important step in understanding why and how SGD without replacement escapes saddles so fast in practice. The reason is that the regularizer is simply not affected by most saddles. This is a fundamental difference between SGD with replacement and noise GD, which escape saddles only thanks to the diffusion of the noise.

\subsection{Literature Review}

\paragraph{Saddles are there.} Many theoretical works deal with the presence of saddles in the loss landscape of neural networks. For instance \cite{baldi_neural_1989} proved that in the landscapes of shallow linear networks, all the stationary points that are not global minima are saddle. Later, \cite{kawaguchi_deep_2016} proved the same for deeper networks, under more general assumptions on the data, showing also the presence of higher-order saddles. Later, plenty of work, as \cite{petzka_non-attracting_2020}, characterize large families of saddles (and local minima) in the loss landscape, for example in terms of stationary points of embedded smaller neural networks.

\paragraph{Escaping saddles with noise.}
Many influential works on the optimization side observed how SGD often empirically escapes them \cite{dauphin_identifying_2014,goodfellow_deep_2016}, even though the time required by GD to escape them may often be exponential \cite{du_gradient_2017}. 
An important number of influential theoretical work was produced on trying to explain why and how fast variants of SGD escape (at least the strict) saddles and developing new algorithms that provably escape faster.
For instance, \cite{lee_gradient_2016} proved that almost surely GD escapes saddles asymptotically, and \cite{ge_escaping_2015,jin_how_2017,jin_nonconvex_2021} proved that injecting Gaussian noise in the gradients makes GD escaping in $O(\lambda^{-2})$ time.

\paragraph{A diffusion-powered escape.} The conclusion of many works is that a noised version of GD can escape saddles if the noise is \textit{dispersive}, a concept essentially analogous to having the covariance of the noise positive in an escaping direction. This is for instance the case of Gaussian noise injection. In that case, with a high probability, the noise will eventually shoot the trajectory away from the saddles. Away enough that the gradient descent part of the algorithm will have a considerable size \cite{jin_nonconvex_2021}. 
Following this idea, under the dispersive noise assumption, \cite{ge_escaping_2015}, \cite{daneshmand_escaping_2018}, \cite{jin_nonconvex_2021}, and later \cite{fang_sharp_2019} proved that SGD with replacement escapes saddles in $O(\lambda^{-3.5})$ time, for proper choices of hyperparameters.
The phenomenon described by these works is diffusive in nature, the reason why SGD with replacement is escaping is about the variance term of the i.i.d.\ noise of each step. We are not aware of work on SGD without replacement escaping saddles. We, however, conjecture that it is possible to obtain a result similar to the ones above.

\subsection{SGD Without Replacement Escapes Faster}
We show here that SGD without replacement escapes saddles as well. However, our result is very different in nature from the ones produced in the past. We discover a \textit{drift}-powered escaping effect, not a diffusive one:
\begin{mdframed}
    SGD without replacement, unlike the other algorithms, escapes saddles simply because the implicit step on the regularizer does not vanish there. The regularizer biases the trajectory in escaping directions, if any is spanned by the gradients.
\end{mdframed}

The great news of this section is thus not that SGD may escape saddles, that was known already. The novelty is the way and the speed in which SGD escapes saddles. It is the nature of the effect that makes SGD without replacement escape saddles. SGD without replacement is thus empowered by two weapons against the saddles issue: (i) we believe that in case of dispersive noise, it escapes with a similar \textit{dispersive} machinery than \cite{jin_nonconvex_2021,fang_sharp_2019}, although we are not aware of works in this direction; and (ii) the regularizer induces a \textit{drift}-like effect biasing the trajectory towards escaping directions. The interaction and coexistence of these two effects make SGD without replacement escape faster. Moreover, SGD without replacement escapes even when initialized exactly at saddle points.

\begin{proposition}[Escaping strict saddles.]
\label{prop:strict_saddles}
Let $\theta$ be a higher-order saddle for the loss $L$. Let $v$ be an eigenvector of the negative eigenvalue $\lambda < 0$ of the Hessian of the loss. Assume
    \[
    u
    \quad := \quad
    \frac{1}{n} \sum_{z \in D} \langle v, \nabla^2 L(z) \nabla L(z) \rangle
    \quad \neq \quad 0.
    \]
Let us assume also that the third derivative is bounded in a neighborhood of the trajectory.
Then, SGD without replacement escapes saddle, i.e., the loss is at least $O(1)$ smaller, after
        \[
            \# \text{epochs}
            \quad > \quad
            2 \ \frac{\ln(\eta) + \ln(|u|) + 2\ln(c) - \ln(b) }{c\lambda}.
        \]
\end{proposition}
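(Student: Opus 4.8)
The plan is to track the expected displacement of the trajectory along $v$, namely $s(t):=\langle \E[\theta^{SGD}(tk)]-\theta,\,v\rangle$, and to show it grows geometrically starting from a nonzero ``kick'' produced by the regularizer (the same mechanism as in Proposition~\ref{prop:flat_areas}, but now $\lambda<0$ turns linear accumulation into geometric growth). By Theorem~\ref{theo:SGD_bias} applied at the base point at the start of each epoch, one epoch of SGD without replacement equals, in expectation, one epoch of GD plus the preconditioned step $-\tfrac{\eta}{b-1}\nabla\,\mathrm{trace}\big(S_i\,\Cov_{z\in D}(\nabla L(z))\big)$ along each coordinate $i$, up to the stated multiplicative error. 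At the saddle $\theta$ we have $\nabla L=0$, so the GD part contributes nothing at the first epoch, whereas the regularizer part does not. Differentiating $\mathrm{trace}\big(S_m\,\Cov_{z\in D}(\nabla L(z))\big)$ along $v=\theta_m$ at a stationary point yields $2\sum_j (S_m)_{j,j}\,\E_{z\in D}\big[(\nabla L(z))_j\,(\nabla^2 L(z))_{j,m}\big]$. Now $\sum_j (\nabla^2L(z))_{m,j}(\nabla L(z))_j=(\nabla^2L(z)\nabla L(z))_m$, and by the last line of \eqref{eq:R_ij} every weight $(S_m)_{j,j}$ is of the \emph{same exponentially large} order $\Theta\big(c\,(c\lambda)^{-2}e^{-c\lambda}\big)$ precisely because $c\lambda_m=c\lambda\ll 0$; hence the weighted sum equals, to leading order, $\Theta\big(c\,(c\lambda)^{-2}e^{-c\lambda}\big)\cdot u$, with $u$ the quantity in the statement. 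So the first epoch moves the trajectory along $v$ by
\[
\Delta\ :=\ \Theta\!\Big(\tfrac{\eta}{b}\,c\,(c\lambda)^{-2}e^{-c\lambda}\,|u|\Big)\ =\ \Theta\!\Big(\tfrac{\eta\,|u|}{b\,c\,\lambda^{2}}\,e^{-c\lambda}\Big),
\]
in an escaping direction (both $\pm v$ decrease the loss at a strict saddle, so fix the sign of $v$ so that $\Delta>0$).

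\textbf{Iterating.} For the later epochs I linearize the dynamics near the saddle. As long as $|s(t)|$ stays below a threshold $s_\ast\sim|\lambda|^{-1/2}$, the per-epoch update obeys $s(t+1)\ \geq\ (1-\eta\lambda)^{k}\,s(t)\ +\ \tfrac12\Delta$, where $g:=(1-\eta\lambda)^{k}=(1+\eta|\lambda|)^{k}\approx e^{-c\lambda}>1$ is the GD growth factor along the negative eigendirection, and the remaining contributions (the cubic Taylor terms, and the extra drift from the variation of $S$, $\nabla^{2}L$ and $\Cov_{z\in D}(\nabla L(z))$ as $\theta$ moves) are $O(M\,|s(t)|^{2})$ with $M$ a local bound on $\nabla^{3}L$, hence negligible next to the linear terms in this range; crucially, negative curvature can only accelerate the escape, so this is a genuine lower bound on $|s(t)|$. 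Unrolling, $|s(t)|\ \geq\ \tfrac12\Delta\,\dfrac{g^{t}-1}{g-1}$, which reaches $s_\ast$ once $g^{t}\geq 1+2 s_\ast(g-1)/\Delta$, i.e.
\[
t\ \geq\ \frac{\ln\!\big(1+2 s_\ast(g-1)/\Delta\big)}{\ln g}\ =\ \frac{1}{c|\lambda|}\Big(-\ln\eta-\ln|u|+\ln b+\ln c+O(\ln|\lambda|)\Big),
\]
using $\ln g\approx c|\lambda|$ and $\ln(g-1)\approx c|\lambda|$ (which cancels the $e^{-c\lambda}$ inside $\Delta$) and $s_\ast\sim|\lambda|^{-1/2}$. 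At such a $t$, the Taylor bound gives $L(\theta+s_\ast v)-L(\theta)\leq-\tfrac14|\lambda|\,s_\ast^{2}=-\Omega(1)$, so the saddle has been escaped. Writing $c\lambda<0$ to flip the sign and absorbing the subleading logarithmic terms ($\ln c$, $\ln|\lambda|$, $\ln s_\ast$, and the gap between $\E[\theta^{SGD}]$ and the actual trajectory) into a factor $2$ produces exactly the displayed threshold $\#\text{epochs}>2\,\frac{\ln\eta+\ln|u|+2\ln c-\ln b}{c\lambda}$.

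\textbf{Main obstacle.} The delicate point is controlling the accumulated error of Theorem~\ref{theo:SGD_bias} over the $\Theta\big(\ln(1/\eta)/(c|\lambda|)\big)$ epochs needed to escape: one must re-apply the one-epoch statement at a moving base point and show the error stays multiplicatively small (which uses $\eta^{\alpha}k\nabla L+1/n\ll1$ together with the $\nabla^{3}L$ bound to control how fast $S_i$, $\nabla^{2}L$ and $\Cov_{z\in D}(\nabla L(z))$ drift), and one must bound the gap between the expected trajectory and the iterated expected one-epoch map, which is of the variance order the paper shows to be small for SGD without replacement. A secondary subtlety is that the hypothesis involves the full sum $u=\tfrac1n\sum_{z}\langle v,\nabla^{2}L(z)\nabla L(z)\rangle$, whereas the drift is the $(S_m)_{j,j}$-weighted sum over $j$; this is exactly where one needs the observation that, since $\lambda<0$, all weights $(S_m)_{j,j}$ share the same exponential order, so the weighted sum is a positive constant times $u$ and cannot be annihilated by cancellation. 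Everything else reduces to the geometric-series estimate above.
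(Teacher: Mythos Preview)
Your overall architecture is the paper's: the regularizer supplies a nonzero per-epoch kick along $v$, GD amplifies it by the factor $g=(1-\eta\lambda)^k\approx e^{-c\lambda}>1$, and once the displacement reaches order $|\lambda|^{-1/2}$ the quadratic term makes the loss drop by $O(1)$. The paper's appendix writes the same recursion $\theta_m-\theta=-(\lambda\beta)^{-1}\big((1+\lambda\beta)^m-1\big)\alpha$ with $1+\lambda\beta=e^{-c\lambda}$ and $\alpha=\tfrac{\eta}{b-1}Su$, then bounds the loss change by $\tfrac{3\lambda}{2}\sum_i\|\theta_i-\theta_{i-1}\|^2$ and solves the resulting geometric series; your unrolling is the same calculation in slightly different notation.

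There is, however, a concrete error in how you size $\Delta$. You invoke the last line of \eqref{eq:R_ij} to conclude that every $(S_m)_{j,j}$ is exponentially large ``because $c\lambda_m=c\lambda\ll 0$''. But that case is conditioned on $c\lambda_{\mathbf j}\ll 0$, i.e.\ on the \emph{trace index} having a negative eigenvalue, not on the direction index $i=m$. For generic $j$ (with $\lambda_j$ small or positive) that line does not apply. It happens that the correct computation from the explicit formula still gives, for $\lambda_j$ small, $(S_m)_{j,j}\approx \tfrac{1}{2c\lambda^2}\big((1-\eta\lambda)^k-1+c\lambda\big)\sim \tfrac{c}{2}(c\lambda)^{-2}e^{-c\lambda}$, so your $\Delta$ is right for the wrong reason. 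The paper avoids this altogether: it simply lower-bounds $S$ by $c/2$ via $\lambda^{-2}(e^{-c\lambda}+c\lambda-1)>c/2-c^2\lambda/6$ and lets the exponential growth come entirely from the GD factor. Since in your bound the $e^{-c\lambda}$ hidden in $\Delta$ is immediately cancelled by $g-1$ in the denominator, both routes yield the same epoch count.

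A second, smaller gap: ``all $(S_m)_{j,j}$ are $\Theta(W)$'' does not by itself give $\sum_j (S_m)_{j,j}\,a_j=\Theta(W)\cdot\sum_j a_j$ when the $a_j$ are signed; you need the weights to be approximately \emph{equal}, not merely of the same order, to rule out cancellation and recover $u$. For $\lambda_j$ small they are indeed $j$-independent to leading order, which is what the paper implicitly uses when it writes the kick as a scalar $S$ times $u$; you should say this rather than ``same order''. The obstacle you flag (accumulated one-epoch error over $\Theta(\ln(1/\eta)/(c|\lambda|))$ epochs) is real and is treated at the same heuristic level in the paper.
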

Analogously, if escaping directions for high-order saddles are spanned by the updates, we escape a higher-order saddle in the same amount of time. 
This is very surprising as it says that no matter the order of the saddle SGD without replacement travels the region with the same speed. Precisely, if a direction of escape is aligned with the direction of movement of Proposition \ref{prop:flat_areas} the saddles gets escaped no matter the order.
\begin{proposition}[Escaping high-order saddles.]
\label{prop:saddles}
Let $\theta$ be a higher-order saddle for the loss $L$. Let $v$ an escaping direction in the kernel of the Hessian of the loss. Assume
    \[
    u
    \quad := \quad
    \frac{1}{n} \sum_{z \in D} \langle v, \nabla^2 L(z) \nabla L(z) \rangle
    \quad \neq \quad 0.
    \]
Let us assume also that the third derivative is bounded in a neighborhood of the trajectory.
Then, SGD without replacement escapes saddle, i.e., the loss is $O(1)$ smaller, after
        \[
            \# \text{epochs}
            \quad > \quad
             \frac{2b}{\eta c^2 |u|}
        \]
independently of the order of the saddle.
\end{proposition}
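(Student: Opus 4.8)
The plan is to reduce this statement to Proposition \ref{prop:flat_areas} almost verbatim. The key observation is that the hypotheses here are exactly those of Proposition \ref{prop:flat_areas}: $v$ lies in the kernel of the Hessian, $\mednorm{v}=1$ (we may normalize), and the overlap $u = \tfrac1n\sum_{z\in D}\langle v, \nabla^2 L(z)\nabla L(z)\rangle \neq 0$, with the third derivative locally bounded along the trajectory. The only difference is that $\theta$ is now additionally assumed to be a higher-order saddle and $v$ is moreover an \emph{escaping} direction, i.e., moving a unit distance along $v$ decreases the loss by $\Omega(1)$. So the proof has two parts: first, invoke Proposition \ref{prop:flat_areas} to control the number of epochs needed to travel unit distance along $v$; second, translate "unit distance travelled along an escaping direction" into "loss decreased by $O(1)$".

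First I would apply Proposition \ref{prop:flat_areas} directly: under these assumptions, SGD without replacement satisfies $\langle \theta^{SGD}_* - \theta, v\rangle = 1$ after $\tfrac{2b}{\eta c^2 |u|}$ epochs. This is the quantitative core and it is already done. The point worth emphasizing here — and the reason the statement says "independently of the order of the saddle" — is that the drift in \cref{theo:SGD_bias,theo:SGD_bias_eta} along a flat direction $v$ is, to leading order, $-\tfrac{\eta}{b-1}\cdot\tfrac{c^2}{2}\,u$ per epoch (the step on the regularizer projected onto the kernel via $S_i \sim \tfrac{c}{4}\Pi_{flat}$, whose gradient along $v$ produces exactly the overlap $u$ with $\E[\nabla^2 L(z)\nabla L(z)]$). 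This drift term does not see how many derivatives of $L$ vanish at $\theta$ transversally to $v$; it only sees the first nonzero contribution, which is the $\nabla^2 L(z)\nabla L(z)$ term. That is precisely why the escape time has no dependence on the order of the saddle, in sharp contrast to GD, whose escape rate degrades with the order (cf. \cite{du_gradient_2017}).

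Second I would handle the loss decrease. Since $v$ is an escaping direction for the higher-order saddle $\theta$ — meaning that the restriction $t\mapsto L(\theta+tv, D)$ is strictly decreasing for small $t>0$ with a decrease that is $\Omega(1)$ once $t$ reaches a fixed constant — and since the trajectory remains in the neighborhood where the third derivative is bounded (so that the transversal fluctuations of the trajectory around the line $\theta + tv$ stay controlled, as in the proof of Proposition \ref{prop:flat_areas}), travelling $\langle\theta^{SGD}_* - \theta, v\rangle = 1$ forces $L(\theta^{SGD}_*, D) \leq L(\theta,D) - \Omega(1)$. Hence "the loss is $O(1)$ smaller" after the stated number of epochs, giving the $>$ bound as claimed (the inequality rather than equality absorbs the constants in the $\Omega(1)$ threshold distance and in the error terms of \cref{theo:SGD_bias_eta}).

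The main obstacle is the second part, not the first: one must argue that the \emph{expected} transversal displacement of SGD without replacement away from the escaping line $\theta + tv$ does not cancel or overwhelm the useful drift along $v$ before a constant distance is covered. This requires that the bound on $\nabla^3 L$ along the trajectory controls both (i) the curvature of the regularizer away from the exactly-flat direction and (ii) the second-order Taylor remainder governing the genuine noise/diffusion of SGD without replacement — exactly the ingredients already assembled in the proof of Proposition \ref{prop:flat_areas}. So in practice this proposition is a corollary of that one together with the definition of an escaping direction, and the work is in checking that "escaping direction in the kernel" plugs into the Proposition \ref{prop:flat_areas} machinery with no loss, which it does because kernel directions are exactly where $S_i$ acts as $\tfrac{c}{4}\Pi_{flat}$ and the $H\nabla L$-type correction terms are lower order.
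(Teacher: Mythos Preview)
Your reduction to Proposition~\ref{prop:flat_areas} is correct and is the cleaner way to organize the argument, but it is not quite how the paper proceeds. The paper's appendix (the ``Saddles'' section) does a direct epoch-by-epoch Taylor computation: assuming the first nonzero derivative along $v$ is the $i$-th one, it builds tables tracking $\theta_m-\theta$, $\nabla^j L(\theta_m)$, and $L(\theta_m)-L(\theta)$, obtaining roughly $L(\theta_m)-L(\theta)\approx \nabla^i L\,(-\alpha)^i\,\tfrac{e\,m^{i+1}}{i+1}$ with $\alpha=\tfrac{\eta}{b-1}\tfrac{c}{2}u$, and then solves $(i+1)\log m + i\log|\alpha|\sim 0$ to get $m\lesssim |\alpha|^{-1}$ uniformly in $i$. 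Your route black-boxes the ``travel distance $1$'' part via Proposition~\ref{prop:flat_areas} and then invokes the meaning of ``escaping direction'' for the $O(1)$ loss drop; this is more modular and makes the logical dependency transparent. The paper's route buys something extra: it exhibits the gradient-buildup feedback (the now-nonzero $\nabla L(\theta_m)$ accelerates the escape for small $i$), showing that the quoted epoch count is really a supremum over saddle orders, attained only in the limit $i\to\infty$. Your informal second step (distance $1$ along an escaping direction $\Rightarrow$ $\Omega(1)$ loss drop) is exactly what the paper's table computation makes quantitative.
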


\subsection{Where is the Catch?}

\paragraph{Conflicting results.}
The findings of this section appear to conflict with previous work demonstrating the difficulty of escaping saddles.
A lot of work in the past, indeed, focused on the difficulty of escaping saddles with gradient-based algorithms, highlighting inherent challenges and inefficiencies of gradient-based methods in navigating the landscape of non-convex optimization.
For instance, it has been shown that it is NP-hard to find a fourth-order local minimum \cite{anandkumar_efficient_2016}. 
Moreover, GD has been shown to potentially take exponential time to escape from saddle points \cite{du_gradient_2017}. This slowdown occurs even with natural random initialization schemes and non-pathological functions.

\paragraph{Why SGD does it so fast.}
SGD without replacement, however, may escape saddle points very quickly as we showed above. The reason why this makes sense is that it is biased. The bias makes it behave as if the loss was not anymore $L$ but $L$ plus a penalization $P$. Thus in a way SGD without replacement implicitly sees the landscape differently and what are saddles for GD on $L$ may not be saddles at all for SGD without replacement. It is important to notice that Proposition \ref{prop:saddles} could already be proved starting from the main result of \cite{smith_origin_2021} in the setting in which that result applies.

\paragraph{The limitation.}
The saddles that we skip are not saddles from the point of view of the algorithm. This, in turn, comes with new challenges. Indeed, while some saddles are not saddles for SGD, some points that were not saddles for the loss may be seen as saddles by SGD, precisely, those points in which the push of the regularizer is exactly the opposite of the push of gradient descent. For those, all the previous negative results in theory apply.

\section{At the Edge of Stability}
\label{section:eos}

We propose here an explanation for why SGD does not train at the edge of stability. The additional drift implied by \cref{theo:SGD_bias_eta} becomes bigger in size than the GD step when the Hessian presents two eigenvalues of size $O(\eta^{-1})$. This means that SGD without replacement deviates from the trajectory of GD in a phase transition and eventually it changes "valley" or it stabilizes in areas where the GD step and the step on the regularizer cancel out.

\subsection{Empirical Observations}

\paragraph{The Hessian grows.} 
Another interesting phenomenon is the edge of stability, as known from \cite{jastrzebski_relation_2019,jastrzebski_break-even_2020,cohen_gradient_2021}.
Precisely, empirical work shows that along the trajectories of gradient descent and Adam the highest eigenvalue of the Hessian of the loss often steadily increases \cite{jastrzebski_break-even_2020,cohen_gradient_2021,cohen_adaptive_2022}.
This process, known as progressive sharpening, stops only when it reaches $2/\eta$ for GD and $38/\eta$ for Adam, as those are the instability thresholds for the algorithms. At this points usually the trajectories enter a different, oscillatory, regime called Edge of Stability.

\paragraph{SGD induces a smaller Hessian.}
However, \cite{cohen_gradient_2021} observed that this is not the case for SGD.
\cite{damian_self-stabilization_2023} claims that the reason why SGD with added label noise does not train at the edge of stability is that the implicit regularizer due to label noise, which penalizes the trace of a function of the Hessian \cite{damian_label_2021}, explodes close to the boundary $\lambda_{\max}  = 2/\eta$ working as a log-barrier. They also conjecture that something similar may happen to SGD alone.
Analogously, \cite{jastrzebski_break-even_2020} observed that in both regression and vision classification tasks, the trajectory of SGD aligns with the one of GD for a while until a \textit{breaking point} where it diverges from it and goes in areas of the parameter space where the trace of the Empirical Fisher Matrix is substantially lower. This breaking point is observed to arrive earlier for bigger effective learning rates $\eta / b$ and later for smaller ones.

\subsection{When SGD Deviates}

In what follows, we formalize this observation by finding that
the divergence after the \textit{breaking-point} observed by \cite{jastrzebski_break-even_2020} is due to the bias of SGD without replacement that we unveiled in \cref{theo:SGD_bias_eta}.

\paragraph{Breaking-point: when and why.}
At the beginning of training usually, the size of gradients decreases steadily while often the Hessian grows. This means that, soon after the beginning, the training often enters a regime in which we can apply \cref{theo:SGD_bias_eta}. In the case in which there are at least 2 eigenvalues $\lambda_1, \lambda_2 > \eta^{-1}$, the regularizer's step from \cref{theo:SGD_bias} along $\theta_1$ eigenvector of $\lambda_1$, see \S \ref{section:reg_new}, takes the following form up to an exponentially small term.
\begin{equation}
\begin{split}
\label{eq:EoS_step}
    \frac{\eta}{b-1} \
    \left[\frac{c}{2}\frac{1 + (-\eta \lambda_1)^{k}}{(c\lambda_1)(c\lambda_2)}
    \right] \ \frac{d}{d\theta_i}
    \Cov_{z \in D} (\nabla L (z) )[2,2] 
    \quad + \quad
    \substack{\text{Other terms of}\\\text{the regularizer.}}
\end{split}
\end{equation}
Or analogously, by exchanging the indexes, we obtain the update on $\theta_2$. 
This means that 

\begin{proposition}[Breaking point]
\label{prop:breaking_eos}
Assume we are in the hypothesis of \cref{theo:SGD_bias_eta}. Assume there exist $\lambda_1, \lambda_2$ eigenvalues of the Hessian $\nabla^2 L$ with eigenvectors $\theta_1, \theta_2$ such that $\lambda_2 > \eta^{-1}$, $u := \frac{d}{d\theta_1}\Cov_{z \in D}(\nabla L(z))[2,2] > 0$, and $\lambda_1 \geq \eta^{-1} + \alpha_{EoS}$ with
\[
\alpha_{EoS} \quad := \quad
\frac{1}{c} \left| \ln \left( \frac{c^2 \lambda_1 \lambda_2 \mednorm{\nabla L}}{u} \right) \right|
\quad > \quad
\frac{1}{c} \left| \ln \left( \frac{c^2 \mednorm{\nabla L}}{\eta^2 u} \right) \right|
.
\]
Then the additional step on the regularizer described by \cref{theo:SGD_bias_eta} is bigger than $k$ steps of GD.
\end{proposition}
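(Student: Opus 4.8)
The plan is to turn the heuristic behind \cref{eq:EoS_step} into a genuine inequality. By \cref{theo:SGD_bias} (equivalently \cref{theo:SGD_bias_eta}), expanded along an orthonormal eigenbasis of $\nabla^2 L$ as in \S\ref{section:reg_new}, the coefficient that multiplies $\tfrac{d}{d\theta_1}\Cov_{z\in D}(\nabla L(z))[2,2]=u$ in the (expected) extra update of SGD without replacement along $\theta_1$ is, up to the multiplicative error $\eta^{\alpha}k\nabla L+1/n$ of \cref{theo:SGD_bias_eta}, the entry $(S_1)_{2,2}=\tfrac{c}{2}\,\tfrac{1+(-\eta\lambda_1)^{k}}{(c\lambda_1)(c\lambda_2)}$, acted on with the effective rate $\tfrac{\eta}{b-1}$. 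First I would isolate this term. Since $\lambda_1,\lambda_2>\eta^{-1}$ we have $\eta\lambda_1>1$, so $(-\eta\lambda_1)^{k}$ grows geometrically in $k$ and $|1+(-\eta\lambda_1)^{k}|\ge(\eta\lambda_1)^{k}-1$; by the last line of the table \cref{eq:R_ij} this is the exponentially large entry, while every other term of the regularizer carries a coefficient that is a product of at most $k$ factors of $\eta\nabla^2 L$ — hence, along $\theta_1$, a polynomial-in-$k$ multiple of a power of $\eta\lambda_1$ of degree $\le k$, i.e. at most of the same order. So — barring a non-generic exact cancellation — the extra update along $\theta_1$ has size at least of order $\tfrac{\eta}{b-1}\cdot\tfrac{(\eta\lambda_1)^{k}\,u}{2c\lambda_1\lambda_2}$.

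Next I would quantify ``$k$ steps of GD.'' By the Taylor expansion recalled in \S\ref{section:regime} (equivalently \cref{theo:SGD_effect_fixed_batches} with every $B_j=D$), $k$ full-batch steps displace the parameters by $-c\nabla L$ plus lower-order corrections, so the GD displacement is controlled by $c\mednorm{\nabla L}$; projected onto $\theta_1$ it follows a scalar recursion with multiplier $1-\eta\lambda_1$, so it either stays bounded by $\sim\mednorm{\nabla L}/\lambda_1$ (if $\eta\lambda_1<2$) or grows at the geometric rate $(\eta\lambda_1-1)^{k}$, which is strictly slower than $(\eta\lambda_1)^{k}$. In every case the GD part grows slower in $k$ than the regularizer term, so it suffices to compare prefactors: imposing
\[
\frac{\eta}{b-1}\cdot\frac{(\eta\lambda_1)^{k}\,u}{2c\lambda_1\lambda_2}\ >\ c\,\mednorm{\nabla L}
\]
and taking logarithms gives $k\ln(\eta\lambda_1)>\ln\!\big(c^{2}\lambda_1\lambda_2\mednorm{\nabla L}/u\big)$, up to a bounded additive term of order $\ln(b/\eta)$ that is absorbed into the positive factor inside $\alpha_{EoS}$ (this is the sense in which $\alpha_{EoS}=\tfrac{b}{\eta}\cdot(\text{positive})$, using $c\sim n$).

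Finally I would linearize the logarithm at the edge of stability. Writing $\eta\lambda_1=1+\tfrac{c}{k}(\lambda_1-\eta^{-1})$ and using $\tfrac12 t\le\ln(1+t)\le t$ for $0<t\le1$, the quantity $k\ln(\eta\lambda_1)$ is comparable to $c(\lambda_1-\eta^{-1})$, so to leading order the displayed inequality holds once $c(\lambda_1-\eta^{-1})>\big|\ln\!\big(c^{2}\lambda_1\lambda_2\mednorm{\nabla L}/u\big)\big|$, i.e. $\lambda_1\ge\eta^{-1}+\alpha_{EoS}$ with $\alpha_{EoS}=\tfrac1c\big|\ln\!\big(c^{2}\lambda_1\lambda_2\mednorm{\nabla L}/u\big)\big|$; the stated lower bound $\alpha_{EoS}>\tfrac1c\big|\ln\!\big(c^{2}\mednorm{\nabla L}/(\eta^{2}u)\big)\big|$ is then immediate from $\lambda_1\lambda_2\ge\eta^{-2}$. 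The error $\eta^{\alpha}k\nabla L+1/n$ is harmless in this regime: $\mednorm{\nabla L}$ is small in late training (which is precisely what makes \cref{theo:SGD_bias_eta} applicable), and $1/n$ is dominated by the exponentially large $(\eta\lambda_1)^{k}$.

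I expect the first step to be the main obstacle: cleanly bounding all the other terms of the regularizer of \cref{theo:SGD_bias_eta} and ruling out a cancellation of the exponentially large $(S_1)_{2,2}$ contribution, together with committing to a precise notion of ``$k$ steps of GD'' and checking that the $O(b/\eta)$-type constants the comparison throws off really can be folded into the positive factor in $\alpha_{EoS}$ without changing its order. The log-linearization of the last step is routine once one restricts to the boundary region $\eta\lambda_1-1\le1$.
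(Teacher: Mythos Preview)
Your approach is the paper's approach, only more carefully spelled out: the paper's entire proof is the one line ``The proof of this proposition is immediate after imposing that the quantity in \cref{eq:EoS_step} is bigger than $k$ steps of GD,'' i.e. exactly your comparison $\tfrac{\eta}{b-1}\cdot\tfrac{(\eta\lambda_1)^{k}u}{2c\lambda_1\lambda_2}>c\mednorm{\nabla L}$ followed by the log-linearization $k\ln(\eta\lambda_1)\approx c(\lambda_1-\eta^{-1})$. The paper does not attempt to bound the ``other terms of the regularizer'' or to justify absorbing the $\ln(b/\eta)$-type constants into $\alpha_{EoS}$; it treats these as part of the heuristic framing of \S\ref{section:eos} (note the ``up to an exponentially small term'' preceding \cref{eq:EoS_step}). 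So the obstacles you flag at the end are real, but they are obstacles the paper chose not to address rather than gaps specific to your argument.
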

The proof of this proposition is immediate after imposing that the quantity in \cref{eq:EoS_step} is bigger than $k$ steps of GD starting from $\theta$.

\paragraph{Dependence on $\eta/b$.}
Moreover, agreeing with the empirical observations by \cite{jastrzebski_break-even_2020}, we see that $\alpha_{EoS}(\eta,k)$ is a monotonic decreasing function of $\eta/b$, precisely it goes like its inverse.
\[
\alpha_{EoS} 
\quad := \quad
\frac{1}{c} \cdot \big( \text{positive}_\eta \big)
\quad = \quad 
\frac{b}{\eta} \cdot \frac{1}{n} \cdot \big( \text{positive}_\eta \big)
\quad = \quad 
\frac{b}{\eta} \cdot \big( \text{positive}_\eta \big).
\]

\paragraph{Phase transition.}
This phenomenon closely agrees with what was observed empirically, indeed this coincides with a phase transition, not with a slow continuous process, as empirically observed by \cite{jastrzebski_break-even_2020}. Indeed, with $\epsilon :=  \frac{|\ln(\eta)|}{c}> 0$ we have
\begin{equation}
\substack{\text{Size of the step}\\\text{on the regularizer}}
\quad = \quad
\begin{cases}
    \Theta(\eta \cdot \text{GD step}) \qquad &\text{if }
    \lambda_1 = \eta^{-1} + \alpha_{EoS} - \epsilon \\
    \Theta(\text{GD step}) \qquad &\text{if }
    \lambda_1 = \eta^{-1} + \alpha_{EoS}\\
    \Theta(\eta^{-1} \cdot \text{GD step}) \qquad &\text{if }
    \lambda_1 = \eta^{-1} + \alpha_{EoS} + \epsilon \\
\end{cases}
\end{equation}
Note that $\epsilon$ is usually $O(1)$ so much smaller than $\eta^{-1}$. For instance, see \S \ref{section:regime}, second line of the table, for cifar10 with $\eta = 0.01$ and $b=64$ we have that it is smaller than the gain in sharpness due to an epoch.
\[
\epsilon \quad = \quad \frac{|\ln(\eta)|}{c} \quad = \quad 0.6.
\]

\subsection{Limitations}
In some cases, however, the regularizer we unveiled may not explain why SGD does not train at the Edge of Stability. Based on the assumptions of Proposition \ref{prop:breaking_eos} this occurs when either: (i) \cref{theo:SGD_bias_eta} is not applicable, (ii) $u = 0$ in Proposition \ref{prop:breaking_eos} as in linear regression, (iii) there exists only one eigenvalue bigger than $\eta^{-1}$, (iv) in scenarios where $\alpha_{EoS} > \eta^{-1}$, and (v) when the regularizer directs the trajectory into areas of the parameter space that continue to experience progressive sharpening.
In these cases SGD may anyway not train at the Edge of Stability but because of the effect of the diffusion instead of the effect of the bias, see \S \ref{section:nature}.
If the optimizer has independent steps with a certain variance, indeed, we conjecture that we can apply a version of the results of \cite{damian_label_2021,li_what_2022} to figure out what the implicit regularization due to the higher-order terms of the Taylor is. 
However, this will be the focus of further studies.

\section{The Nature of the Effect}
\label{section:nature}

We list and discuss here the nature of the effect. We discuss in the following subsection the two ingredients of the effect: dependency, and discretization. Later we discuss the fact that the effect is a drift, it is not a diffusion-powered effect.

\subsection{Discretization}
\label{section:discretization}
The regularization term that we find \cref{theo:SGD_bias_eta} is the expectation of the quantity in Proposition \ref{theo:SGD_effect_fixed_batches}. Proposition \ref{theo:SGD_effect_fixed_batches} is the result of computing the effect of the \textit{discretization}, it is unrelated to the optimization problem or to the gradient flow itself.

\begin{center}
\emph{
Assume the batches $B_1, B_2, \ldots, B_k \subseteq D$ are fixed. How far are the trajectories of mini-batch and full-batch GD after $k$ steps of learning rate $\eta$?
}
\end{center}

We expand $k$ steps of SGD and GD with learning rate $\eta$ at the beginning of the epoch, and we keep the terms in which $\nabla L(\cdot)$ appears to the power 1. We write the difference between this quantity in the case of SGD and the case of GD and we obtain the following
\begin{proposition}
Set $B_1, B_2, \ldots, B_k$ fixed batches of the dataset $D$. The trajectory of mini-batch SGD deviates from the trajectory of the same number of steps $k$ of full-batch GD with the same learning rate $\eta$ of
\label{theo:SGD_effect_fixed_batches}
\begin{equation*}
\begin{split}
    &- \eta \sum_{i=1}^k \nabla L(B_i)
    \quad + \quad
    \eta^2
    \sum_{1 \leq i < j \leq k} 
    \nabla^2 L(B_j) 
    \Bigg[
    \prod_{h=i+1}^{j-1} [I-\eta \nabla^2 L(B_{h})]
    \Bigg]  
    \ \nabla L (B_i)
    \\[0.2cm]&\quad-\quad \substack{\text{same terms but full-batch:}\\ B_j \curvearrowleft D \text{ for all }j\leq k}
\end{split}
\end{equation*}
up to first order in $\eta \nabla L(B_i)$, $i = 1, 2, \ldots, k$.
\end{proposition}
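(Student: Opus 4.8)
The plan is to unroll the SGD recursion one step at a time, Taylor-expand every stochastic gradient about the base point $\theta=\theta^{SGD}(0)$, discard every contribution in which some datum-gradient $\nabla L(\cdot)$ occurs with multiplicity $\geq 2$, and then solve the affine recursion that survives. Concretely, write $\delta_t := \theta^{SGD}(t)-\theta$, so $\delta_0=0$ and $\delta_{t+1}=\delta_t-\eta\nabla L(\theta+\delta_t,B_{t+1})$. Expanding $\nabla L(\theta+\delta_t,B_{t+1})=\sum_{m\geq 0}\tfrac1{m!}\nabla^{m+1}L(B_{t+1})[\delta_t^{\otimes m}]$ (this is where the assumption of $3$ weak derivatives of $L$ enters, to give meaning to the $m\le 2$ terms and their remainder), and letting $\bar\delta_t$ denote the part of $\delta_t$ that is exactly $1$-linear in the gradients $\{\nabla L(B_s)\}_{s\le t}$, the terms with $m\geq 2$ are at least quadratic in these gradients and are dropped; the $m=0$ term is the ``source'' $-\eta\nabla L(B_{t+1})$ and the $m=1$ term is the ``transport'' $-\eta\nabla^2 L(B_{t+1})\bar\delta_t$. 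Thus, to first order in $\eta\nabla L(B_i)$,
\[
\bar\delta_{t+1}=\bigl(I-\eta\nabla^2 L(B_{t+1})\bigr)\bar\delta_t-\eta\nabla L(B_{t+1}),\qquad \bar\delta_0=0 .
\]

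First I would solve this affine recursion by iteration: with $M_h:=I-\eta\nabla^2 L(B_h)$ one gets $\bar\delta_k=-\eta\sum_{i=1}^k\bigl(M_k M_{k-1}\cdots M_{i+1}\bigr)\nabla L(B_i)$, the product being empty (equal to $I$) when $i=k$. Next I would apply the ``peel off the leading factor'' telescoping identity
\[
M_k M_{k-1}\cdots M_{i+1}=I-\eta\sum_{j=i+1}^k \nabla^2 L(B_j)\,\bigl(M_{j-1}M_{j-2}\cdots M_{i+1}\bigr),
\]
which is proved by induction on $k-i$ (write $M_k=I-\eta\nabla^2 L(B_k)$, apply the inductive hypothesis to $M_{k-1}\cdots M_{i+1}$). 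Substituting into the closed form, the ``$I$'' part produces the zeroth-order piece $-\eta\sum_i\nabla L(B_i)$, and, after interchanging the two summations into a sum over $1\le i<j\le k$, the remainder becomes exactly $\eta^2\sum_{i<j}\nabla^2 L(B_j)\bigl[\prod_{h=i+1}^{j-1}(I-\eta\nabla^2 L(B_h))\bigr]\nabla L(B_i)$, recognizing $M_{j-1}\cdots M_{i+1}=\prod_{h=i+1}^{j-1}(I-\eta\nabla^2 L(B_h))$. Finally I would observe that full-batch GD is literally the special case $B_h\equiv D$ of this computation, so $\theta^{GD}(k)-\theta$ is the same expression with every $B_h$ replaced by $D$; subtracting $\theta^{GD}(k)-\theta$ from $\theta^{SGD}(k)-\theta$ yields the claimed deviation. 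Note this entire argument is about the discretization only and never invokes any property of the landscape.

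The hard part will be the bookkeeping behind the truncation rather than any single estimate: one must check that replacing the true displacement $\delta_t$ by its linear-in-gradient part $\bar\delta_t$ inside the transport term at \emph{every} step — a substitution that retains arbitrarily many powers of $\eta\nabla^2 L$ — never silently reinstates a term that is first order in $\nabla L$. This holds because $\bar\delta_t$ is $1$-linear in the $\nabla L(B_s)$'s, hence $\bar\delta_t^{\otimes m}$ is $m$-linear and all $m\geq 2$ contributions, together with the higher-order corrections $\delta_t-\bar\delta_t$, are genuinely of order $\geq 2$ in $\{\eta\nabla L(B_i)\}$. The only slightly delicate point I would spell out is that this argument is uniform in the step index and is insensitive to the size of $\nabla^2 L$, precisely because the Hessian factors never carry a gradient; everything else reduces to the elementary algebra of the affine recursion and the telescoping identity above.
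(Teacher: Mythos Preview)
Your proof is correct and follows essentially the same approach as the paper: Taylor-expand the update about the base point, keep only terms that are first order in the per-batch gradients, solve the resulting affine recursion, and then subtract the full-batch specialization. The only organizational difference is that the paper routes the computation through an auxiliary comparison with a single big-learning-rate GD step (defining $\Delta_k^{SGD}=\theta_k^{SGD,\eta}-\theta_1^{GD,k\eta}$) before reorganizing, whereas you expand $\theta^{SGD}(k)-\theta$ directly and use the telescoping identity $M_k\cdots M_{i+1}=I-\eta\sum_{j>i}\nabla^2 L(B_j)\,M_{j-1}\cdots M_{i+1}$; both manipulations produce the same double sum.
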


\subsection{Dependency}
\label{section:dependency}
The deviation between trajectories is a sum of products of functions evaluated on different batches. Thus the expectation of the difference between the trajectories of a mini-batch SGD and GD is zero when the steps are:
\begin{itemize}
    \item \textbf{Centered:} every step is a random variable centered in the GD step. E.g., an unbiased estimator of it as for SGD with replacement or some centered Gaussian noise gets injected in the gradient.
    \item \textbf{Independent:} the steps are random variables independent of each other.
\end{itemize}
Indeed in that case, the expectations of those products are products of the expectations, in particular full-batch derivatives.
This is not the case of SGD without replacement. Indeed here the batches are dependent as they are disjoint. The regularizer we obtain taking the expectation originates from this dependence.
For this reason, \cref{theo:SGD_bias} applies when comparing SGD without replacement and SO and any algorithm with centered independent steps, such as GD, SGD with replacement, their label-noise-injected versions, etc. In particular, this regularization effect of SGD without replacement arises \textbf{solely} from the absence of the second assumption above: the independent steps assumption.
Similarly, our analysis can be reworked to unveil the effect of every algorithm that does not satisfy any of the two assumptions above.
Moreover, we conjecture that other analysis present in the literature that deal with centered independent noise can be applied on top of ours, to the de-biased trajectory. However, this will be the argument of future work.

\subsection{A Change of Drift}

When analyzing random dynamics we usually have two factors that contribute to the dynamic: the \textit{drift} and the \textit{diffusion} terms.
In particular, when the dynamics are discrete we can rewrite every step as
\[
X_{i+1} \ = \ X_i + \underbrace{\E[\text{step}]}_{\text{drift}} \ + \ \underbrace{\left( \text{step} - \E[\text{step}] \right)}_{\text{diffusion}}.
\]
These two parts impact the trajectory in different ways, with different speeds.
In particular, when we make use of tools such as concentration inequalities and CLT we are usually dealing with the effect of the \textit{diffusion} of the process. When we make use of LLN we generally deal with the effect of drifts.

The effect of SGD without replacement that we unveil is a drift-like effect, it is not an effect of the diffusion.
This is a story about the absence of stochasticity that we thought was there, not about the effect of the noise.
In particular, the effect we unveiled is not about a discrete version of the Fokker-Planck theory or another kind of effect of the diffusion part.
What we are observing in \cref{theo:SGD_bias} is that there exists a bias that points in a certain direction, moreover, we reduce the size of the diffusion of a related amount. 
This observation is in line with several existent works, e.g., \cite{smith_origin_2021,damian_label_2021,li_what_2022}. Indeed, they show that even in the case of diffusive independent noise at every step, as label nose injection or SGD with replacement, the implicit regularization effect may be due to drifts coming from the higher order terms in Taylor, not from diffusion-powered regularization.

\section{Comparison between SGDs}
\label{section:with_repl}

\subsection{The Question}
A subtle long-standing question is whether the fact that batches are disjoint and not sampled i.i.d.\ does matter in practice. In other words, if the induced dependency is strong enough for the trajectories of SGD without replacement and SGD with replacement or noised GD to get attracted to different minima. This is important to know because while SGD without replacement is much faster and widely used in practice, see \S \ref{section:training}, most of the theory developed is for GD or algorithms where the steps are independent, see \S \ref{section:challenges}. The question thus is
\begin{center}
    \emph{Do the trajectories of SGD without and with replacement or the minima they get attracted to qualitatively differ?}
\end{center}
\cref{theo:SGD_bias_eta} implies that the trajectories do significantly differ. SGD without replacement's regularization effect is, indeed, powered by the drift we found in \cref{theo:SGD_bias_eta}, while SGD with replacement's one by the diffusion term or a bias arising from different elements. 
As a consequence, SGD without replacement (i) escapes many loss saddles and travels flat areas much faster, as already discussed in \S \ref{section:saddles}, (ii) converges to certain minima, as already shown in \S \ref{section:Hessian}, (iii) and oscillates with a smaller variance than SGD with replacement.     
That said, the minima to which they converge can be the same, or as good, although they get there for different reasons, with different paths, and different speeds.

\begin{figure}[ht!]
\begin{mdframed}
{\centering \large \textbf{Different Behaviors of the  
SGDs: Escaping Local Minima}\\[0.3cm]}
    \begin{minipage}{.5\textwidth} 
        \centering
        \includegraphics[width=\linewidth,height=0.7\linewidth]{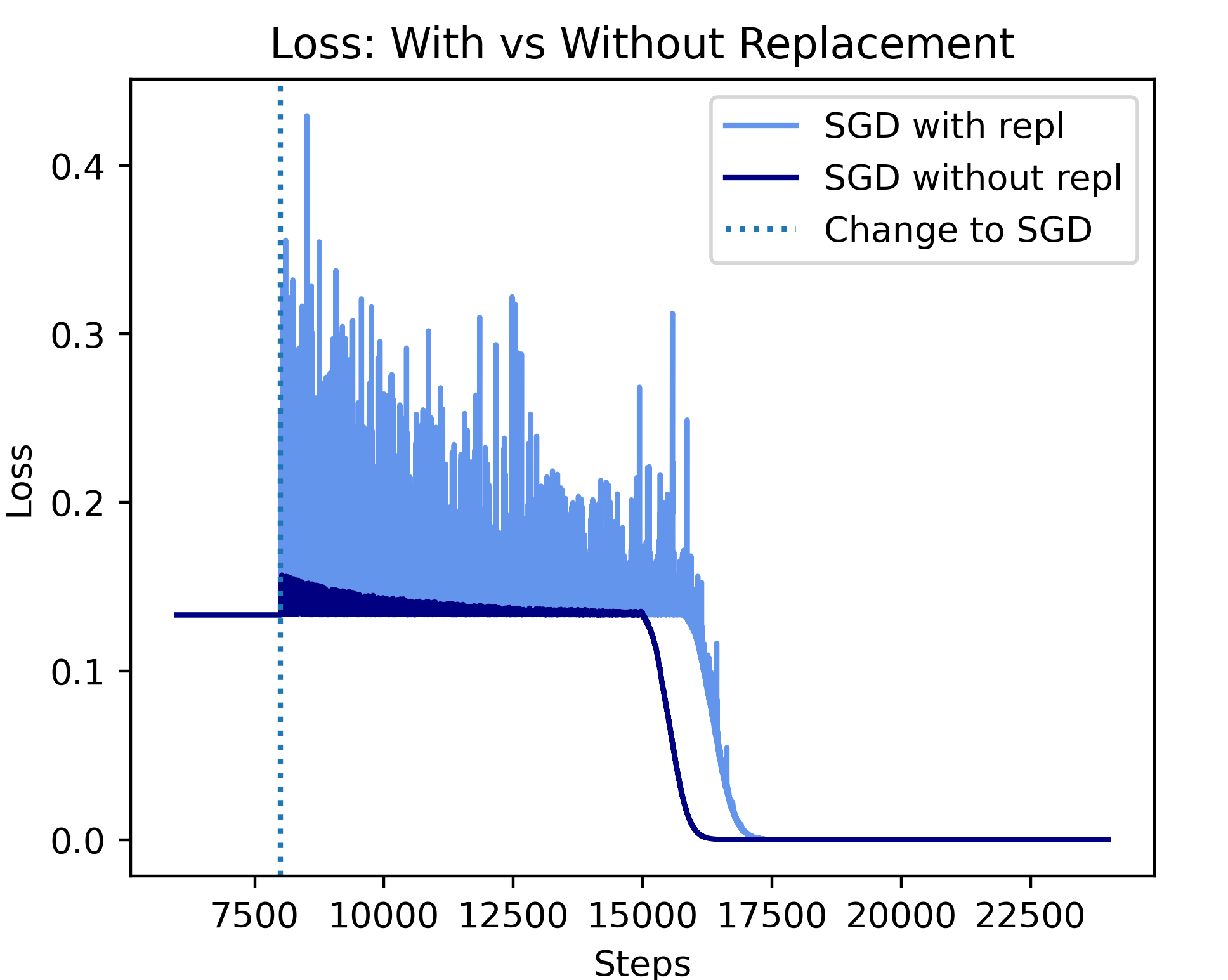}
    \end{minipage}
    \hfill
    \begin{minipage}{.5\textwidth} 
        \centering
        \includegraphics[width=\linewidth,height=0.7\linewidth]{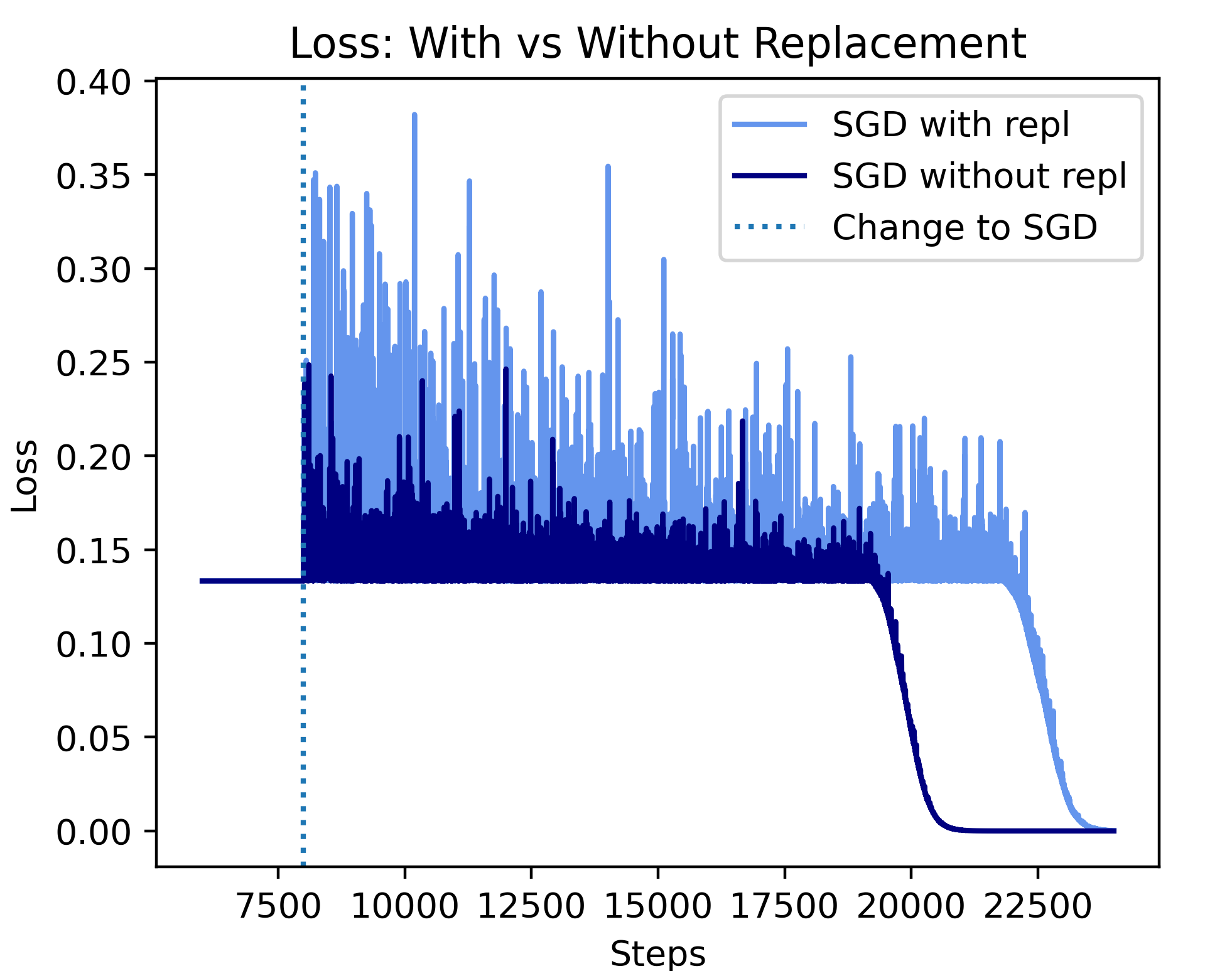}
    \end{minipage}
    \caption{
    SGD without replacement escapes local minima (by traveling flat areas) in which GD converged faster than SGD with replacement and with much smaller oscillations. They both converge to a global minimum. See \cref{fig:W_comparison}.
    \\
    This is the plot of the training loss (MSE) of a ReLU network on a synthetic dataset. We first run full batch GD until convergence, we check that it converged to a spurious local minimum, and we start from there both the SGDs with the same hyperparameters. Precisely, on the left with batch size 1 and on the right with batch size 2.
    }
    \label{fig:W_comparison}
\end{mdframed}
\end{figure}

\subsection{Shape and Intensity of the Noise}
From a certain perspective, SGD without replacement is less noisy than SGD with replacement. Performing an epoch of SGD without replacement consists of reshuffling the whole dataset once and then partitioning it. The randomness of SGD with replacement comes from the sampling of a new independent batch at every step. To get a sense for why SGD without replacement has a lower amount of noise, one can think that the last batch is deterministic, given the previous ones, unlike the case of SGD with replacement.
The way in which the smaller stochasticity shows up in the trajectory is clear when analyzing it. Let us expand in Taylor's series at initialization the $k$ steps of an epoch of SGD:
\[
\theta(k)^{SGD} \ - \ \theta
\quad = \quad - \ \underbrace{\eta
\sum_{t=0}^k \nabla L(\theta, B_t)}_{\substack{
\text{Noisy for SGD with r.}\\
\text{but deterministic for}\\ \text{SGD without r.}}} 
\ + \
\underbrace{\frac{\eta^2}{2}\sum_{s < t} \nabla^2 L(\theta, B_s)\nabla L(\theta, B_t)}_{\substack{
\text{Centered noise for SGD with r.}\\
\text{but Bias + Noise for SGD without}}} 
\ + \ \ldots.
\]
In the case of SGD with replacement the expectation of these terms is exactly the terms of the Taylor expansion of gradient descent.
Indeed, both the first and second terms are centered on the gradient descent counterparts $\eta k \nabla L + \frac{\eta^2 k(k-1)}{2} \nabla^2 L \nabla L$. However, both the terms above have their variance and induce oscillations in the trajectory.

The nature of the effect of SGD without replacement is very different. The first term deterministically sums up to a step of gradient descent of size $\eta k = c$. There is no noise coming from it. The trajectory of SGD without replacement is about the second term. In this case, $ \nabla^2 L(\theta, B_s)$ and $\nabla L(\theta, B_t)$ are not independent when $s \neq t$. This implies that (i) this second term is not centered, inducing a bias in the trajectory, and (ii) the size of the noise is related to the standard deviation only of this term.

More generally, all the terms considered in Proposition \ref{theo:SGD_effect_fixed_batches} bring a bigger and bigger bias in the case without replacement but are simply centered in the case with replacement.
The first biased term in the expansion of SGD with replacement is part of what goes into the error in our analysis, as previously highlighted in \cite[Proposition 5]{damian_label_2021}.
This means that the effect of both the algorithms comes from the effect of discreitization, see \S \ref{section:nature}. However, the interplay between the noise in the process of sampling batches and this discretization effect is very different for the two algorithms.

The fact that a bias is present and that the noise comes from the second term only is visible in experiments. For instance in \cref{fig:intro_2} and \cref{fig:W_comparison}, where we trained a Shallow ReLU network on synthetic data. Here the oscillations of SGD without replacement at a spurious local minimum are consistently smaller. Nonetheless, SGD without replacement travels the manifold of minima faster, due to the bias, escaping it quicker.
The same can be noticed also in \cref{fig:intro} where SGD converges faster and with noticeably smaller oscillations that SGD with replacement.

\section{Conclusions, Broad Applicability, and Limitations}
\label{section:comparison}
We discuss how our work compares to other studies, to what extent the following question has been answered, and its the limitations, before concluding.

\begin{center}
\emph{Can we characterize the minima to which SGD converges? How?
}
\end{center}

We argue that from a certain point of view, we employ the minimal and most natural set of assumptions for such an analysis.
We discuss the broad applicability of this approach in \S \ref{section:generality}. 
We discuss the limitations of our findings in \S \ref{section:shortcomings}.

\subsection{Generality and Optimality of our Approach}
\label{section:generality}

As previously discussed in \S \ref{section:challenges}, there are several reasons for conducting a local analysis. 
This is primarily due to the inapplicability of "global" methods such as stochastic approximation by Robbins and Monro. Also, there is a lack of understanding of the geometry of the manifolds. 
Essentially, conducting a local analysis means expanding in the Taylor series or a similar approximation technique. 
In this context, as explained in \S \ref{section:regime}, a Taylor expansion over the $k$ steps of an epoch can be represented as a sum of product of terms of the form:
\[
\text{coefficient} \ \cdot \ \eta \ \cdot \ k \ \cdot \ \text{function}\big( \nabla^i L(\cdot, \cdot) \big)
\]
for some $i \in \N$, some real coefficient, and a function that could be, e.g., an average over batches, and so on.
For such Taylor expansions to converge and represent our trajectory, it is thus \textit{necessary} that some of the terms above are small.
As argued in \S \ref{section:regime}, $\eta k \nabla L$ appears in every term and it is the only such term that vanishes throughout the training. 
This happens because $\eta$ is annealed and $\nabla L$ converges to 0 approaching a stationary point.
Moreover, no matter the size of the other derivatives, if $\eta k \nabla L \ll 1$ the series generally converges.
Consequently, for the Taylor series to converge, or analogously for a local analysis to be performed, requiring the assumption $\eta \nabla L \ll 1$ is the most natural of the possible \textit{necessary} conditions.
An analysis based solely on this assumption is thus highly versatile, arguably the broadest in scope among local analyses.

\begin{mdframed}
    Whenever one can use a local argument, one can expand in Taylor, in particular, generally $\eta^{>1} \cdot k \cdot \nabla L \ll 1$. Thus our analysis characterizes the dynamics.
\end{mdframed}

Our approach differs from past community efforts that relied on stricter assumptions $\eta k \ll 1$ \cite{smith_origin_2021,roberts_sgd_2021}, or conditions involving other higher order derivatives. The only works that could relax these assumptions usually assume independence between single steps, e.g.\ \cite{damian_label_2021}. We demonstrated that the inherent effect of SGD without replacement is due to the dependence. This means that that such results, while being still instrumental in addressing the effect of other algorithms, do not apply to our setting.

\subsection{Limitations: Broader Scope implies Weaker Results}
\label{section:shortcomings}
In the context of small learning rate \cite{smith_origin_2021,roberts_sgd_2021} or independent steps \cite{damian_label_2021,li_what_2022} the results are mathematically elegant: we have a penalized loss on which GF or GD mirrors the trajectory of the SGD. Those works deliver insightful heuristics and theorems about where the trajectory leads and the generalization improvements.
In our context, however, as discussed in \S \ref{section:motivations}, with $\eta \not \! \ll 1$ and dependent steps, we cannot explain convergence using those results and techniques.
This limitation hampers our ability to establish the beautiful convergence results that mark the earlier studies. We will investigate this in future works, however, the absence of such strong convergence results would not be entirely surprising. It would align with the understanding that in many contexts there is not such a thing as a potential that the algorithms are minimizing. This has been shown for instance for gradient flow for ReLU networks by \cite{vardi_implicit_2021}. It has also been discovered by \cite{damian_label_2021} and later made rigorous by \cite{li_what_2022} for the case of SGD, in which it is possible that SGD cycles.
We can thus only hope to better characterize the trajectories. This is exactly what we manage to do with our technique.

Our \cref{theo:SGD_bias_eta} characterizes the path-dependent regularizer of the algorithm in analysis.
In general, we were unable to describe our results, unlike previous work, as 
"SGD without replacement works as a specific (different) optimization algorithm on a modified loss".
This, however, may not be a shortcoming of our analysis. Rather, it could underscore the nature of the effect, which proves more challenging to describe mathematically than many earlier hypotheses.
That said, heuristically the meaning of \cref{theo:SGD_bias_eta} is similar to working with a penalized loss: If the regularizer is not minimized it guides the trajectory towards areas where it decreases.
Returning to the main question of this section, we could not precisely define the minima to which SGD converges, in contrast to previous studies that have achieved such a goal in different contexts. However, the task is likely unattainable in full generality.
Identifying such specific cases where this is possible is beyond the scope of our current work.

\subsection{Conclusions}

We showed that SGD without replacement implicitly regularizes by biasing the dynamics towards areas with lower variance. This is due to the dependence of the steps and manifests as a drift-like effect, not attributable to diffusion or a Fokker-Planck-like argument. We demonstrate that this leads to a form of regularization, enabling the algorithm to navigate through flat areas more quickly and with fewer oscillations than expected.




\acks{I want to thank Prof.\ Boris Hanin for his crucial support and help.
A special thanks to Alex Damian, Prof.\ Jason D.\ Lee, Samy Jelassi, and Ahmed Khaled for invaluable discussions that were key to my understanding of the topic and the development of this project.
I thank Prof.\ Bartolomeo Stellato, Prof.\ Jianqing Fan, Giulia Crippa, Arseniy Andreyev, Hezekiah Grayer II, Ivan Di Liberti, Valeria Ambrosio, Alexander Krauel, Tommaso Portaluri, Simone Brutti, and Camilla Beneventano for their valuable advice and comments on various parts of this manuscript.
Special thanks to Prof.\ Misha Belkin, as the inspiration for this project was sparked during one of his mini-courses and our conversations.
I also warmly thank the participants and organizers of the "Statistical Physics and Machine Learning Back Together Again" workshop for the meaningful dialogues that have profoundly shaped this paper.}

\appendix









\vskip 0.2in
\bibliography{My_Library}

\clearpage

\addtocontents{toc}{\protect\setcounter{tocdepth}{-1}}

\section{The Effect of Mini-batching}
\label{section:mini-batching}
The goal of this section is to understand the role of the following two properties of SGD without replacement:
\begin{itemize}
    \item The fact that the mini-batches add up to the dataset.
    \item The fact that once the mini batches are sampled we do not care about their order.
\end{itemize}
The first property above is shared by the SGD without replacement and Shuffle Once, and the second by SGD with and without replacement. However, once we take the expectation over the possible outcome of the initial shuffling, the same holds for Shuffle Once.
We argue that it is the fact that we do not care about the order that allows us to rewrite our quantity as a derivative of a penalization. This properties are not shared, e.g., by Adam.

\subsection{The Implicit Bias of the Order}
\label{section:order}

We derived a formula for the deviation in the trajectory in Proposition \ref{theo:SGD_effect_fixed_batches} in \S \ref{section:discretization}, which depends on the order in which we observe the batches. 
We examine here the expectation and standard deviation of this SGD discretization with respect to the GD trajectory due to the randomly uniform order of the batches.
\begin{center}
\emph{
Assume we sampled already the batches $B_1, B_2, \ldots, B_k$ (fixed). How far do we go from the GD/GF trajectory, \textit{on average over the possible orders of the batches?}
}
\end{center}
To do it, we consider the following quantity
\begin{equation}
\label{eq:potential}
    \frac{\eta^2}{2}
    \nabla_\theta 
    \sum_{1 \leq i < j \leq k} 
    [\nabla^\red{1} L(B_j) ]^\top
    \Bigg[
    \prod_{h=i+1}^{j-1} [I-\eta \nabla^2 L(B_{h})]
    \Bigg]  
    \ \nabla L (B_i)
\end{equation}
The derivative of this quantity in $\theta$ is
\begin{equation}
\begin{split}
\label{eq:deriv_1}
    & \quad
    \frac{\eta^2}{2}
    \sum_{1 \leq i < j \leq k} 
    \nabla^2 L(B_j) 
    \ \Bigg[
    \prod_{h=i+1}^{j-1} [I-\eta \nabla^2 L(B_{h})]
    \Bigg]  
    \ \nabla L (B_i)
    \qquad \quad \ \ \! \leftarrow \text{Proposition 
    \ref{theo:SGD_effect_fixed_batches}!}
    \\&+
    \frac{\eta^2}{2}
    \sum_{1 \leq i < j \leq k} 
    \nabla^2 L(B_i) 
    \ \Bigg[
    \prod_{h=i+1}^{j-1} [I-\eta \nabla^2 L(B_{h})]
    \Bigg] ^\top
    \ \nabla L (B_j)
    \qquad \ \ \! \leftarrow \text{flipped }i,j
    \\&+
    \frac{\eta^2}{2}
    \sum_{1 \leq i < j \leq k} 
    \nabla L(B_j) ^\top \
    \Bigg[ \nabla_\theta 
    \prod_{h=i+1}^{j-1} [I-\eta \nabla^2 L(B_{h})]
    \Bigg] ^\top
    \ \nabla L (B_i).
    \ \ \leftarrow \text{error term!}
\end{split}
\end{equation}
In the first line we have half of the term in Proposition \ref{theo:SGD_effect_fixed_batches}, and in the second line we have the same quantity but with the indices switched. The third line is a part of the error as constitute a term of second order in $\eta \nabla L$.
This is the key idea of the proof of the following theorem other than of the main theorem, \cref{theo:SGD_bias_eta}.

\begin{proposition}
\label{theo:SGD_effect_exp_fixed_batches}
With an additive error of size smaller than $O ( \text{that} ) \cdot (\eta k\nabla L + 1/k)$, we have
\[
\theta_k^{SGD}
\quad \overset{\substack{\text{expectation over}\\ \text{order of batches}}}{=} \quad 
\theta_k^{GD} 
\quad - \quad
\eta \nabla_\theta Regularizer
\]
where where the regularizer is 
\begin{equation*}
\begin{split}
    \eta
    \E_{order} \left[
    \sum_{1 \leq i < j \leq k} 
    \nabla L(B_j) ^\top
    \Bigg[
    \prod_{h=i+1}^{j-1} [I-\eta \nabla^2 L(B_{h})]
    \Bigg]  
    \ \nabla L (B_i)
    \right]
    \quad-\quad \substack{\text{same but full-batch:}\\ B_j \curvearrowleft D \text{ for all }j\leq epoch}.
\end{split}
\end{equation*}

\end{proposition}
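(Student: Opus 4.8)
The plan is to start from the exact expression for the trajectory deviation given in Proposition~\ref{theo:SGD_effect_fixed_batches}, which for fixed batches $B_1,\dots,B_k$ has a leading (first-order in $\nabla L$) part
\[
-\eta\sum_{i=1}^k \nabla L(B_i)
\;+\;
\eta^2\!\!\sum_{1\le i<j\le k}\!\! \nabla^2 L(B_j)\Bigl[\textstyle\prod_{h=i+1}^{j-1}(I-\eta\nabla^2 L(B_h))\Bigr]\nabla L(B_i)
\]
minus its full-batch analogue. The first sum $-\eta\sum_i\nabla L(B_i)$ equals the GD term $-\eta k\nabla L$ exactly because $\{B_i\}$ partitions $D$ with equal-size blocks, so that contribution cancels against the full-batch piece regardless of order. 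Hence the entire order-dependence lives in the double sum, and the averaging question reduces to that term. First I would introduce the scalar "potential" in \eqref{eq:potential}, namely $\tfrac{\eta^2}{2}\sum_{i<j}[\nabla^1 L(B_j)]^\top[\prod_{h}(I-\eta\nabla^2 L(B_h))]\nabla L(B_i)$, where the first factor is treated as a function of $\theta$ but the middle product and $\nabla L(B_i)$ are momentarily frozen; then differentiate in $\theta$.

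The key algebraic step is the expansion \eqref{eq:deriv_1}: differentiating the potential produces three groups of terms. The first is exactly one half of the double sum from Proposition~\ref{theo:SGD_effect_fixed_batches}. The second is the same double sum with the roles of $i$ and $j$ transposed — this is the term coming from differentiating the $\nabla^1 L(B_j)$ factor. The third group comes from hitting the product $\prod_h(I-\eta\nabla^2 L(B_h))$ with the derivative; each such term contains a $\nabla^3 L$ and still two gradient factors $\nabla L(B_i),\nabla L(B_j)$, so it is second order in $\eta\nabla L$ and goes into the error. The crucial observation is that upon taking the expectation over a uniformly random order of the (fixed) batches, the first and second groups become \emph{equal}: the map $(i,j)\mapsto(j,i)$ induces a bijection on index pairs that the random shuffle symmetrizes, so $\E_{\text{order}}[\text{first}] = \E_{\text{order}}[\text{second}] = \tfrac12\,\E_{\text{order}}[\text{double sum of Prop.~\ref{theo:SGD_effect_fixed_batches}}]$. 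Therefore the $\theta$-gradient of the expected potential equals the expectation of the Proposition~\ref{theo:SGD_effect_fixed_batches} double sum, up to the $\nabla^3 L$ error group. Subtracting the full-batch version (which is order-independent and unaffected by the expectation) yields precisely
\[
\theta_k^{SGD}\;\overset{\E_{\text{order}}}{=}\;\theta_k^{GD}\;-\;\eta\,\nabla_\theta\,\text{Regularizer},
\]
with the Regularizer as stated: $\eta\,\E_{\text{order}}\bigl[\sum_{i<j}\nabla L(B_j)^\top[\prod_h(I-\eta\nabla^2 L(B_h))]\nabla L(B_i)\bigr]$ minus the full-batch analogue.

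The remaining work is bookkeeping on the error. I would argue that: (i) the third group in \eqref{eq:deriv_1} is second order in $\eta\nabla L$ and hence absorbed into the stated multiplicative error $O(\cdot)\cdot(\eta k\nabla L)$; (ii) higher-order terms in the Taylor expansion of the $k$-step SGD and GD trajectories that were dropped in Proposition~\ref{theo:SGD_effect_fixed_batches} are controlled by the same first-order-in-$\nabla L$ truncation already justified there, together with the locality hypothesis (total movement small) so the $\nabla^3 L$ remainders are bounded along the trajectory; and (iii) the $1/k$ (equivalently $1/n$) term is the discrepancy from replacing binomial-type counting coefficients by their leading behavior when we later want to collapse the order-averaged sum into clean powers of $\nabla^2 L$ — but at the level of \emph{this} proposition it enters only through approximating combinatorial identities for $\E_{\text{order}}$ of the products. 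The main obstacle I anticipate is not the symmetrization identity (that is essentially a relabeling argument) but making the error accounting airtight: one must verify uniformly over the random order that the discarded $\nabla^3 L$ terms and the tail of the Taylor expansion are genuinely of the claimed order, which requires the local boundedness of the third derivative along (a neighborhood of) the trajectory and a careful induction on the $k$ composed steps. The clean part is the reduction "derivative of the potential $=$ symmetrized double sum"; the delicate part is the quantitative control that licenses writing everything as "$+\,O(\cdot)(\eta k\nabla L + 1/k)$".
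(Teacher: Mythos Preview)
Your proposal is correct and follows essentially the same approach as the paper: introduce the scalar potential \eqref{eq:potential}, differentiate it to produce the three groups in \eqref{eq:deriv_1}, observe that averaging over a uniformly random order of the fixed batches symmetrizes the first two groups (so their sum equals the expectation of the double sum from Proposition~\ref{theo:SGD_effect_fixed_batches}), and push the $\nabla^3 L$ group into the second-order error. Your observation that $-\eta\sum_i\nabla L(B_i)=-\eta k\nabla L$ cancels the full-batch first-order term because the $B_i$ partition $D$ is exactly the point, and your error bookkeeping matches the paper's treatment in \S\ref{section:proof_error}.
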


We can thus conclude that:
(i) The reason why we can rewrite this deviation as the derivative of a potential is that we do not care about the order in which we see the batches. And (ii) this happens for every SGD technique once we sample the batches, irrespective of how we sampled them.

\subsection{Observations on the Regularizer}

\paragraph{Linear scaling rule.}
A first important observation is that the step on the regularizer is exactly of size $\eta/b$. This aligns with what has been widely empirically observed. Starting with \cite{goyal_accurate_2017,jastrzebski_three_2018,he_control_2019}, indeed, and continuing with, e.g., \cite{jastrzebski_catastrophic_2021}, the community noticed that any regularization effect attributable to SGD empirically scales with that quantity $\eta / b$.

\paragraph{The effective step size $c = \eta k$.}
Every time a covariance of two things appears in the formula for the regularizer the multiplying constant is $\frac{c^2}{2n}$.
Indeed, there are $k$ terms of size $\eta$, so there are $\eta^2\binom{k}{2}$ covariances, each multiplied by $-1/n$ coming from Corollary \ref{cor:chain_cond_exp_var}.
E.g., to the zeroth order in $c\lambda$ we have
\[
\frac{1}{n} \E \left[\tfrac{1}{2}c\nabla L(\cdot) ^\top c\nabla L(.)\right]
\quad = \quad
\frac{c^2}{2n} \E [\mednorm{\nabla L(\cdot)}^2]
\quad = \quad
\frac{\eta}{b-1} \cdot \frac{c}{2} \cdot \E [\mednorm{\nabla L(\cdot)}^2].
\]
This appears also when dealing with the Hessian terms, indeed $H$ is defined as $\frac{c^2}{2n}$ times $\E[\nabla^2 L(\cdot)^2]-(\nabla^2 L )^2$, see \cref{prop:S}.

\paragraph{Sanity checks.}
The $\frac{c^2}{2}$ that multiplies everything in \cref{prop:S} is in reality 
\[
\frac{c^2}{2} \quad = \quad \frac{\eta^2 k(k-1)}{2}.
\]
So it cancels out if we are doing full-batch, as $k=1$.

\paragraph{Dependency $\implies$ Covariance.}
The SGD trajectory is different from GD's only when the Hessians and the gradient for different batches are significantly different. Accordingly, the elements steering the difference between the two trajectories come up in the regularizer as:
\[
\E[\big(\eta\nabla^2L(\cdot)\big)^2]-(\eta\nabla^2L)^2    \qquad and \qquad
\nabla_\theta \var( \mednorm{\eta\nabla_\theta L(\cdot)} )
\]
This is natural since the dependence implies covariance terms to appear in the expectation of the product.

\section{Dealing with Expectations}
\label{section:exp_app}

To prove our main result, we need to be able to compute all the expectations that we encounter. Those are taken over the SGD sampling procedure, i.e., sampling without replacement:

\begin{setting}[Sampling without replacement]
\label{setting:without_replacement}
Assume $z_1, z_2, \ldots, z_k$ are sampled in one of the following two \textbf{equivalent} ways. 
\begin{enumerate}
    \item \label{item:sampling_sets} $\{z_1, z_2, \ldots, z_k\}$ is a set of $k$ different elements that has been uniformly sampled from the family of all the possible different subsets (without multiple copies) of cardinality $k$ of a set $D$ with cardinality $n$.
    \item \label{item:sampling_without_rep} We sampled $z_1$ uniformly from $D$, $z_2$ uniformly from $D \backslash \{z_1\}$, $\ldots$, and $z_k$ uniformly from $D \backslash \{z_1, z_2, \ldots, z_{k-1}\}$.
\end{enumerate}
All the expectations in what follows will be over this sampling procedure. Indeed, if we see a batch or a data point we will not see it until the end of the epoch.

We will assume we have a sequence of functions that take in input elements of $D$ and output a tensor such that the shapes of the tensors match and we can make a product. Precisely, for all $i \in \{1, 2, \ldots, k\}$ we have $f_i \colon D \to \R^{n_i \times n_{i+1}}$ where $n_1, n_2, \ldots, n_{k+1} \in \N$. 
Precisely, in the following these $f$s will take a training data point $z \in D$ in input and they will output a derivative of the loss in the parameters evaluated at that data point, e.g. $f_1(z_1) = I - \nabla_\theta^2 L (\theta, z_1)$.
\end{setting}

\subsection{The Main Lemma}
\begin{lemma}[How the expectation looks like]
\label{lemma:chain_cond_exp}
Assume we are in Setting \ref{setting:without_replacement} above.
Then, when $S$ is sampled as in \cref{item:sampling_sets} of Setting \ref{setting:without_replacement}, the expectation $\E\left[ f_1(z_1) \cdot f_2(z_2) \cdot \ldots \cdot f_k(z_k) \right]$, is equal to
\begin{equation}
\label{eq:average_sampling_sets}
\begin{split}
    \left[\prod_{i=0}^{k-1} (n-i)\right]^{-1} \sum \prod_{i=1}^k f_i(z_i)
\end{split}
\end{equation}
where, thus, the sum is taken over all the possible $k$-uple $(z_1, z_2, \ldots, z_k)$ of $k$ different elements of $D$. By averaging over the sampling technique in \cref{item:sampling_without_rep}, we obtain quantities like the following in the case of commutation of the $f_i$s.
\begin{equation}
\label{eq:average_sampling_without_rep}
\begin{split}
    \left[\prod_{i=0}^{k-1} \frac{n}{n-i}\right]
    \sum_{j=1}^k  \sum_{\sigma \in \Pi}
    \prod_{ 
    \substack{\text{all the possible} \\
    0=i_0 < i_1 < \ldots <i_j = k}} 
    \!\!\!\!\!
    \left[\frac{-1}{n}\right]^{i_j-i_{j-1}-1}\E_{z \in D}\left[ \prod_{h = i_{j-1}+1}^{i_j} f_{\sigma(h)}(z) \right].
\end{split}
\end{equation}
where $\Pi$ is the set of all the possible permutations of $\{1,2, \ldots, k\}$.
That is the sum of all the possible products of expectations of products of functions. Each of the expectations of products of $l$ functions is multiplied by $\left[\frac{-1}{n}\right]^{l-1}$.
Another way to rewrite it is $\left[\prod_{i=0}^k \frac{n}{n-i}\right]$ multiplied by
\begin{equation}
\label{eq:cond_exp_usable}
\begin{split}
    &
    \prod_{i = 1}^k \E[f_i] - n^{-1} \sum_{i \neq j} \E[f_i f_j] \prod_{m\neq j,i} \E[f_m] 
    +
    n^{-2} \sum_{i \neq j \neq h \neq l} \E[f_i f_j]\E[f_h f_l] \prod_{m\neq j,i,h,l} \E[f_m] 
    + \ldots
\end{split}
\end{equation}
\end{lemma}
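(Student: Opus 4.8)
I would first dispatch \eqref{eq:average_sampling_sets}, which is essentially a matter of unwinding definitions. The two sampling descriptions in \cref{setting:without_replacement} are equivalent because drawing a uniform $k$-element subset and then a uniform ordering of it produces a uniformly random ordered $k$-tuple of distinct elements, which has the same law as the sequential draw in item~\ref{item:sampling_without_rep} (both put mass $1/[n(n-1)\cdots(n-k+1)]$ on every ordered distinct tuple; this is the usual exchangeability of sampling without replacement). Hence $\E[f_1(z_1)\cdots f_k(z_k)]$ is just the average of $\prod_{i} f_i(z_i)$ over the $\prod_{i=0}^{k-1}(n-i)$ ordered distinct tuples, which is \eqref{eq:average_sampling_sets}; note that no commutativity is used here.

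For the second identity the plan is to convert the constrained sum $\sum_{(z_1,\dots,z_k)\ \mathrm{distinct}}\prod_i f_i(z_i)$ into a combination of \emph{unconstrained} sums over $D$ by inclusion--exclusion on the lattice $\Pi_k$ of set partitions of $\{1,\dots,k\}$. Two inputs are needed. First, for a partition $\pi$, summing $\prod_i f_i(z_i)$ over all tuples that are constant on each block of $\pi$ factors as $\prod_{B\in\pi}\bigl(\sum_{z\in D}\prod_{i\in B}f_i(z)\bigr)$; this is the only place the commutativity hypothesis enters, since it lets one detach each block's contribution from the interleaved factors of the other blocks. Second, the Möbius function of $\Pi_k$ is $\mu(\hat 0,\pi)=\prod_{B\in\pi}(-1)^{|B|-1}(|B|-1)!$, so that the indicator that the $z_i$ are all distinct equals $\sum_{\pi}\mu(\hat 0,\pi)\,\mathbf 1[\,z\ \text{constant on the blocks of}\ \pi\,]$. (If one would rather not quote this, the same decomposition follows by induction on $k$: split the distinct-tuple sum according to whether $z_k$ collides with some $z_m$, $m<k$, which produces $\sum_{z_k\in D}$ minus $\sum_{m<k}$ of the $(k-1)$-tuple sum with $f_m$ replaced by $f_mf_k$, and then unfold.)

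Putting these together and using $\sum_{z\in D}(\cdot)=n\,\E_{z\in D}[\cdot]$ gives $\sum_{\mathrm{distinct}}\prod_i f_i(z_i)=\sum_{\pi}\bigl(\prod_{B\in\pi}(-1)^{|B|-1}(|B|-1)!\bigr)\,n^{|\pi|}\,\prod_{B\in\pi}\E_{z\in D}\bigl[\prod_{i\in B}f_i(z)\bigr]$. Dividing by $\prod_{i=0}^{k-1}(n-i)$, factoring out $\prod_{i=0}^{k-1}\tfrac{n}{n-i}=n^k/\prod_{i=0}^{k-1}(n-i)$, and distributing the leftover $n^{|\pi|-k}=\prod_{B\in\pi}n^{-(|B|-1)}$ over the blocks, each block $B$ ends up carrying weight $(|B|-1)!\,(-1/n)^{|B|-1}\,\E_{z\in D}[\prod_{i\in B}f_i]$; enumerating partitions by their distance from the all-singletons one (all singletons; one $2$-block; two $2$-blocks or one $3$-block; $\ldots$) reproduces \eqref{eq:cond_exp_usable}, and re-expanding each set partition into the $\prod_B |B|!$ orderings of its blocks --- read as consecutive runs of a permutation $\sigma\in\Pi$ with cut points $0=i_0<\cdots<i_j=k$ --- yields \eqref{eq:average_sampling_without_rep}. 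I expect the only real difficulty to be the bookkeeping: tracking the powers of $n$ and the combinatorial multiplicities so that, after the division by $\prod_{i=0}^{k-1}(n-i)$, the per-block weight comes out exactly as $(|B|-1)!\,(-1/n)^{|B|-1}$ (in the displayed formulas the factor $(|B|-1)!$ for $|B|\ge 3$ is absorbed into the ``$\,\ldots\,$'', and the normalization in \eqref{eq:average_sampling_without_rep} must compensate the over-counting created by $\sum_{\sigma\in\Pi}$), together with being explicit about the order in which the $f_i$ inside a block are multiplied --- which is precisely why \eqref{eq:average_sampling_without_rep}--\eqref{eq:cond_exp_usable} are stated for commuting $f_i$ whereas \eqref{eq:average_sampling_sets} needs no such restriction.
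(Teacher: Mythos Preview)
Your proposal is correct and complete. Both parts are handled soundly: \eqref{eq:average_sampling_sets} by the equivalence of the two sampling schemes, and \eqref{eq:average_sampling_without_rep}--\eqref{eq:cond_exp_usable} by M\"obius inversion over the partition lattice $\Pi_k$, with the per-block weight $(|B|-1)!\,(-1/n)^{|B|-1}$ falling out exactly as you compute.

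The route, however, is genuinely different from the paper's. The paper argues by \emph{induction on $k$}: it uses the tower property $\E_D[f_1(z_1)\cdots f_k(z_k)]=\E_{z_1\in D}\bigl[f_1(z_1)\,\E_{D\setminus\{z_1\}}[f_2\cdots f_k]\bigr]$ together with the elementary identity $\E_{D\setminus\{z\}}[f]=\tfrac{n}{n-1}\bigl(\E_D[f]-f(z)/n\bigr)$, applies the inductive hypothesis on $D\setminus\{z_1\}$ with $k-1$ factors, and then expands and regroups. Your argument, by contrast, is a one-shot inclusion--exclusion over set partitions that never conditions on $z_1$. What this buys you is that the full block structure---including the $(|B|-1)!$ multiplicities for blocks of size $\ge 3$ that the paper leaves implicit in the ``$\ldots$'' and in its accompanying remark---comes out transparently and all at once, and the bookkeeping of the $n$-powers is cleaner. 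What the paper's induction buys is that it is entirely self-contained (no appeal to the M\"obius function of $\Pi_k$) and mirrors the sequential sampling of item~\ref{item:sampling_without_rep}; it is essentially the recursive alternative you sketched in parentheses, carried out in full. Both lead to the same expansion, and you correctly flag that the normalization and the ordered-vs-unordered re-indexing are the only places requiring care.
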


\begin{proof}[Proof of Lemma \ref{lemma:chain_cond_exp}.]
Let us now prove what above, regarding the first part in \cref{eq:average_sampling_sets}, it follows directly from the definition of average.
We prove the other thing by \textbf{induction}. Note that for $k=1,2$ we have the result in \cref{eq:average_sampling_without_rep} already (base case).
Indeed, note that
\begin{equation}
    \E_{D \backslash \{z\}} [f]
    \quad = \quad 
    \frac{1}{n-1} \sum_{x \in D \backslash \{z\}} f(x)
    \quad = \quad 
    \frac{n}{n-1} \left[ \E_D [f] - \frac{f(z)}{n}\right]
\end{equation}
and by plugging this in
\begin{equation}
\begin{split}
    \E_{z_1 \in D}[f_1(z_1) \cdot \E_{z_2 \in D \backslash \{z_1\}} [f_2(z_2)]]
    \quad &= \quad 
    \frac{n}{n-1}\E_D [f_2] \E[f_1]  - \frac{1}{n-1} \E_{D} \left[f_1 f_2\right].
\end{split}
\end{equation}
Let us see the step $k=3$ to better explain what is happening
\begin{equation}
\begin{split}
    &\E_{z_1 \in D}[f_1(z_1) \cdot \E_{z_2 \neq z_3 \in D \backslash \{z_1\}} [f_2(z_2)f_3(z_3)]]
    \\&\quad = \quad
    \E_{z_1 \in D} \left[ f_1(z_1) \cdot \left( \frac{n-1}{n-2} \E_{D\backslash \{z_1\}} [f_2] \E_{D\backslash \{z_1\}}[f_3]  - \frac{1}{n-2} \E_{D\backslash \{z_1\}} \left[f_2 f_3\right]\right)\right]
    \\&\quad = \quad
    \E_{z_1 \in D} \Bigg[ f_1(z_1) \cdot \bigg( \frac{n-1}{n-2} \frac{n}{n-1} \left[ \E_D [f_2] - \frac{f_2(z_1)}{n}\right] \frac{n}{n-1} \left[ \E_D [f_3] - \frac{f_3(z_1)}{n}\right]   
    \\&\qquad
    - \frac{1}{n-2} \left[ \frac{n}{n-1}\E_{D} \left[f_2 f_3\right] - \frac{f_2(z_1)f_3(z_1)}{n-1} \right] \bigg) \Bigg]
    .
\end{split}
\end{equation}
Next by noting that
\begin{equation}
\begin{split}
    (n-1) \ \frac{f_2(z_1)}{n-1}\frac{f_3(z_1)}{n-1}
    \quad = \quad 
    \frac{f_2(z_1)f_3(z_1)}{n-1}
\end{split}
\end{equation}
we conclude that
\begin{equation}
\begin{split}
    &\E_{z_1 \neq z_2 \neq z_3 \in D}[f_1(z_1) f_2(z_2) f_3(z_3)]
    \quad= \quad
    \frac{n^3}{n(n-1)(n-2)}\E[f_1] \E[f_2] \E[f_3]  
    \\&\quad
    - \frac{n^2}{n(n-1)(n-2)} \left(\E\left[f_1 f_2\right]\E[f_3] + 
    \E[f_1]\E\left[f_2 f_3\right] + 
    \E\left[f_1 \cdot \E[f_2] \cdot f_3 \right] \right)
    \\&\quad
    + 2
    \frac{n}{n(n-1)(n-2)}
    \E[f_1f_2f_3]
    .
\end{split}
\end{equation}
More generally we will have all the possible products showing up with the multiplier $(-n)^{-order+1}$.
Indeed or what regards the \textbf{inductive step}, note that by tower rule
\begin{equation}
\begin{split}
\label{proof:chain_1}
    \E_D
    \left[   \, f_1(z_1) \cdot f_2(z_2) \cdot \ldots \cdot f_k(z_k) \right]
    \quad = \quad
    \E_{z_1 \in D}\left[ f_1(z_1) \ \cdot \
    \E_{D\backslash\{z_1\}} \left[ f_2(z_2) \cdot \ldots \cdot f_k(z_k) \right]\right]
\end{split}
\end{equation}
So the expectation for $z_1, z_2,\ldots, z_k \in D$ without replacement it is the same as the expectation over $z_1 \in D$ of the expectation for $z_2, z_3,\ldots, z_k \in D\backslash\{z_1\}$ without replacement.
We will also prove that 
Now, applying the inductive hypothesis for $k \curvearrowleft k-1$ and $D \curvearrowleft D \backslash \{z\}$ on $\E_{D\backslash\{z_1\}} \left[ f_2(z_2) \cdot \ldots \cdot f_k(z_k) \right]$ we obtain that
\begin{itemize}
    \item The multiplier is $\prod_{i=0}^{k-1} \frac{n}{n-i}$.
    \item That we have $n^k \prod_{i=1}^k \E[f_i] - n^{k-1} \sum_{i<j} \E[f_if_j] \prod_{h\neq i,j} \E[f_h] + \ldots$ and in general appears every possible combination of expectations of products of the functions, such that when we are taking the expectations of $l$ functions together, we have $(-1/n)^{l-1}$ multiplying the head coefficient.
\end{itemize}

\begin{remark}
From now on we perform the computations in the case of commutations for readability. They do not change at all in case of no commutation.
\end{remark}

Let us assume it works for $D \backslash\{z_1\}$ and $k-1$. Then the expectation in that setting is the following and satisfies the points above.
\begin{equation}
\begin{split}
\label{proof:chain_2}
    \left[\prod_{i=1}^k \frac{n-1}{n-i}\right]
    \left[
    \prod_{i=2}^k \E_{D\backslash\{z_1\}}[f_i] - (n-1)^{-1} \sum_{i \neq j} \E_{D\backslash\{z_1\}}[f_i f_j] \prod_{m \neq i,j} \E_{D\backslash\{z_1\}}[f_m] + \ \ldots \
    \right]
\end{split}
\end{equation}
Thus plugging this into \cref{proof:chain_1} we obtain that $\E_D\left[   \, f_1(z_1) \cdot f_2(z_2) \cdot \ldots \cdot f_k(z_k) \right]$ is equal to
\begin{equation}
\begin{split}
    \E_D\Bigg[ f_1(z_1) \cdot 
    \Bigg[
    (n-1)^{k-1} \prod_{i=2}^k \E_{D\backslash\{z_1\}}[f_i] 
    - 
    (n-1)^{k-2} \sum_{i \neq j} \E_{D\backslash\{z_1\}}[f_i f_j] \prod_{m \neq i,j} \E_{D\backslash\{z_1\}}[f_m]
    \Bigg]
    \Bigg]
\end{split}
\end{equation}
multiplied by $\left[\prod_{i=1}^k \frac{1}{n-i}\right]$.
Next, note that
\begin{equation}
    \E_{D \backslash \{z\}} [f]
    \quad = \quad 
    \frac{1}{n-1} \sum_{x \in D \backslash \{z\}} f(x)
    \quad = \quad 
    \frac{n}{n-1} \left[ \E_D [f] - \frac{f(z)}{n}\right].
\end{equation}
Applying this above, we obtain
\begin{equation}
\begin{split}
\label{proof:chain_3}
    &\left[\prod_{i=1}^k \frac{1}{n-i}\right]
    \E_D\Bigg[ f_1(z_1) \cdot 
    \Bigg[
    (n-1)^{k-1} \prod_{i=2}^{k} \frac{n}{n-1}\Bigg[\E[f_i]-\frac{f_i(z_1)}{n}\Bigg]
    \\& \quad - 
    (n-1)^{k-2} \sum_{2 \leq i \leq j \leq k} 
    \frac{n}{n-1}\left[
    \E[f_i f_j] - \frac{f_i(z_1)f_j(z_1)}{n}
    \right]
    \prod_{n \neq i,j,1} \frac{n}{n-1}\left[\E[f_i]-\frac{f_i(z_1)}{n}\right]
    \Bigg]\Bigg]
    \\&=
    \left[\prod_{i=0}^k \frac{1}{n-i}\right]
    \E_D\Bigg[ f_1(z_1) \cdot 
    \Bigg[
    n^{k-1} \prod_{i=2}^{k} \Bigg[\E[f_i]-\frac{f_i(z_1)}{n}\Bigg]
    \\& \qquad \qquad \quad \qquad \qquad - n^{k-2} \sum_{2 \leq i \leq j \leq k} 
    \left[
    \E[f_i f_j] - \frac{f_i(z_1)f_j(z_1)}{n}
    \right]
    \prod_{h \neq i,j,1} \left[\E[f_h]-\frac{f_h(z_1)}{n}\right]
    \Bigg]\Bigg]
\end{split}
\end{equation}
where the average is now taken over $D$.
Note that for all $i,j$ it holds
\begin{equation}
\begin{split}
    n^{k-1} \ \frac{f_i(z_1)}{n}\frac{f_j(z_1)}{n} \prod_{h \neq i,j, 1}^k \E[f_h]
    \quad = \quad 
    n^{k-2} \ \frac{f_i(z_1)f_j(z_1)}{n}
    \prod_{h \neq i,j, 1}^{k} \E[f_h]
\end{split}
\end{equation}
and the same holds for all the other expectations of products of more functions.
In general, the sign and power of $n$ multiplying all the expectations of products of $l$ terms is $n^{-l+1}$.
Precisely, we obtain
\begin{equation}
\begin{split}
    &\E\left[ f_1(z_1) \cdot f_2(z_2) \cdot \ldots \cdot f_k(z_k) \right]
    \\& \qquad =
    \left[\prod_{i=0}^k \frac{1}{n-i}\right]
    \E_D\Bigg[ f_1(z_1) \cdot 
    n^{k}\Bigg[ \prod_{j > 1} \E[f_j] - \sum_{i=2}^k \frac{f_i(z_1)}{n} \prod_{j\neq 1,i} \E[f_j]
    \\&\qquad \qquad \qquad - 
    n^{k-1} \sum_{2 \leq i < j} \Bigg[
    \E[f_i f_j]
    \prod_{n \neq i,j,1} \left[\E[f_i]-\frac{f_i(z_1)}{n}\right]\Bigg] \ + \ \ldots
    \Bigg]
\end{split}
\end{equation}
That is equal to the usual $\left[\prod_{i=0}^k \frac{1}{n-i}\right]$ multiplied by
\begin{equation}
\begin{split}
    &
    n^{k} \prod_{i = 1}^k \E[f_i] - n^{k-1} \sum_{i \neq j} \E[f_i f_j] \prod_{m\neq j,i} \E[f_m] 
    +
    n^{k-2} \sum_{i \neq j \neq h \neq l} \E[f_i f_j]\E[f_h f_l] \prod_{m\neq j,i,h,l} \E[f_m] 
    + \ldots
\end{split}
\end{equation}
This concludes the proof Lemma \ref{lemma:chain_cond_exp}.
\end{proof}

\begin{remark}
\label{remark:size_of_rest}
When we are taking $z_i=z_j$ for all the possible $\binom{k}{2}$ couples (i,j), and multiplying by all picked alone. Then $\tfrac{1}{2} \cdot \binom{k}{2} \cdot \binom{k-2}{2}$ for the case of two couples and all the rest alone, etc.
When we are taking $m$ terms together, we can choose them in $\binom{k}{m}$ so the size of the products in which those appear is much smaller precisely is about the size of the others but we divide by $n^{m-1}$ instead of $m$ and those are not $k(k-1)\cdot \ldots \cdot (k-2m+1)/2^m$ but at the denominator there is $m!$ so they are much less and much smaller. We will thus throw them in the error later.
\end{remark}

\subsection{Corollary for big $n$ and $k$}

Our setting is special: we can cancel out some terms because $n,k \gg 1$. 
Assume then that we have 3 or more functions $f_1, f_2, f_3, \ldots$.
The terms that multiply the expectation of the product of 3 or more of them, e.g.\ $\E_{z \in D} f(z)g(z)h(z)$ in the big sum Lemma \ref{lemma:chain_cond_exp} can be thrown in the error part, indeed
\begin{equation}
\begin{split}
&n^{-2}\E_{z \in D} \big[ f_1(z)f_2(z)f_3(z) \big] 
\prod_{i=4}^k \E_{z \in D} \big[ f_i(z) \big]
\\ & \quad = \quad 
O(n^{-1}) \ \cdot \
n^{-1}\E_{z \in D} \big[ f_1(z)f_2(z) \big] 
\prod_{i=3}^k \E_{z \in D} \big[ f_i(z) \big]
\end{split}
\end{equation}
if $\E_{z \in D} \big[ f_1(z)f_2(z)f_3(z) \big] = O (\E_{z \in D} \big[ f_1(z)f_2(z)\big] \E_{z \in D} \big[ f_3(z) \big] )$.
Or analogously 
\begin{equation}
\begin{split}
&\binom{k}{3} n^{-2}\E_{z \in D} \big[ f_1(z)f_2(z)f_3(z) \big] 
\prod_{i=4}^k \E_{z \in D} \big[ f_i(z) \big]
\\& \quad = \quad 
O(k^{-1}) \ \cdot \
\binom{k}{4} n^{-2}\E_{z \in D} \big[ f_1(z)f_2(z) \big] \E_{z \in D} \big[ f_3(z)f_4(z) \big] 
\prod_{i=5}^k \E_{z \in D} \big[ f_i(z) \big]
\end{split}
\end{equation}
if $\E_{z \in D} \big[ f_1(z)f_2(z)f_3(z) \big] = O (\E_{z \in D} \big[ f_1(z)f_2(z) \big] \E_{z \in D} \big[ f_3(z)f_4(z) \big]  )$.

We can thus conclude with a corollary of Lemma \ref{lemma:chain_cond_exp} by using this and induction that


\subsection{Corollary about Covariances}
Secondly, note that we can rewrite this in terms of covariances, not of expectations of products, indeed, for every term we have the analogous
\begin{equation}
\begin{split}
&
n^{-1}\E_{z \in D} \big[ f_1(z)f_2(z) \big] 
\prod_{i=3}^k \E_{z \in D} \big[ f_i(z) \big] 
\ - \ n^{-1} \prod_{i=1}^k \E_{z \in D} \big[ f_i(z) \big] 
\\& \quad = \quad
n^{-1}\Cov_{z \in D} \big( f_1(z), f_2(z) \big) 
\prod_{i=3}^k \E_{z \in D} \big[ f_i(z) \big] 
\end{split}
\end{equation}
and analogously for a higher number of expectations of products of terms.
This and induction prove that
\begin{corollary}[Corollary of Lemma \ref{lemma:chain_cond_exp}.]
\label{cor:chain_cond_exp_var}
Assume the different averages over the dataset are of the same order, and $n,k \gg 1$. Then
Assume we are in Setting \ref{setting:without_replacement} above.
Then, when $S$ is sampled as in \cref{item:sampling_sets} of Setting \ref{setting:without_replacement}, the expectation $\E\left[ f_1(z_1) \cdot f_2(z_2) \cdot \ldots \cdot f_k(z_k) \right]$, up to a multiplicative error of size $O(1/n)$, is equal to
\begin{equation}
\label{eq:cond_exp_var_usable_approximate}
\begin{split}
    &
    \prod_{i = 1}^k \E[f_i] \ - \ (n-1)^{-1} \sum_{i \neq j} \Cov(f_i, f_j) \prod_{m\neq j,i} \E[f_m] 
    \\& \quad
    + \
    ((n-1)(n-2))^{-1} \sum_{i \neq j \neq h \neq l} \Cov(f_i, f_j) \Cov(f_h, f_l) 
    \prod_{m\neq j,i,h,l} \E[f_m] 
    + \ \ldots
\end{split}
\end{equation}
where only the terms in which the expectations of products of up to 2 elements appear are taken.
\end{corollary}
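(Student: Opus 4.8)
The plan is to derive \cref{cor:chain_cond_exp_var} from \cref{lemma:chain_cond_exp} by two reductions and one algebraic rewriting, all organized around the block structure of \eqref{eq:cond_exp_usable}. Recall that \cref{lemma:chain_cond_exp} writes the expectation as the global factor $\prod_{i=0}^{k-1}\frac{n}{n-i}$ times a sum indexed by the ways of partitioning $\{1,\dots,k\}$ into blocks, where a block $\{j_1,\dots,j_\ell\}$ contributes $(-1/n)^{\ell-1}\,\E_{z\in D}[f_{j_1}(z)\cdots f_{j_\ell}(z)]$ and singletons contribute ordinary means. I would prove the corollary in this order: (i) discard every partition that uses a block of size $\ge 3$; (ii) in every surviving size-$2$ block replace the expectation of a product by a covariance plus a product of means; (iii) re-sum the spurious mean-terms thereby created, together with the global factor $\prod_{i=0}^{k-1}\frac{n}{n-i}$, to recover exactly the normalizations $(n-1)^{-1},\ ((n-1)(n-2))^{-1},\dots$ of \eqref{eq:cond_exp_var_usable_approximate}, modulo the claimed $O(1/n)$ multiplicative error.

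For step (i) I would simply invoke the two estimates displayed just before the statement: under the standing hypothesis that all dataset-averages $\E_{z\in D}[\,\cdot\,]$ have comparable size, a partition using a block of size $3$ equals $O(1/n)$ times the partition in which those indices are split into singletons, and equals $O(1/k)$ times the partition in which two of them form a separate pair. Iterating this over all larger blocks, and bounding the combinatorial multiplicities of $m$-fold higher-order blocks by $1/m!$ as in \cref{remark:size_of_rest}, the total mass of all partitions with some block of size $\ge 3$ is $O(1/n)$ relative to the surviving sum over partitions into singletons and pairs. The only care needed here is that this bound be stated \emph{multiplicatively} and uniformly in $k$; it is exactly the assumption $n,k\gg1$ together with "averages of the same order" that makes that go through.

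For steps (ii)--(iii), expand, for each partial matching $P$ of $\{1,\dots,k\}$, the product $\prod_{\{i,j\}\in P}\E[f_if_j]=\sum_{Q\subseteq P}\ \prod_{\{i,j\}\in Q}\Cov(f_i,f_j)\ \prod_{\{i,j\}\in P\setminus Q}\E[f_i]\E[f_j]$. Each surviving $\E[f_i]\E[f_j]$ turns that pair back into two singletons, so collecting the total sum by the covariance-matching $Q$ shows that the coefficient of a fixed $\prod_{Q}\Cov(f_i,f_j)\cdot\prod_{m\ \mathrm{free}}\E[f_m]$ is $(-1/n)^{|Q|}$ times a sum over matchings of the remaining $k-2|Q|$ indices carrying the same $-1/n$ weights. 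Multiplying that inner matching-sum by the global factor $\prod_{i=0}^{k-1}\frac{n}{n-i}$ and simplifying is the one genuine identity the proof rests on: I would establish it by induction on $k$, mirroring the induction already used to prove \cref{lemma:chain_cond_exp}, and show it collapses to $\prod_{i=0}^{2|Q|-1}\frac{n}{n-i}$ up to a $1+O(1/n)$ factor, so that $(-1/n)^{|Q|}$ times it becomes $(-1)^{|Q|}/\big((n-1)(n-2)\cdots(n-2|Q|+1)\big)$-type weights as in \eqref{eq:cond_exp_var_usable_approximate}. The non-commuting / tensor case needs no change, exactly as remarked after \eqref{proof:chain_3}.

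\textbf{Main obstacle.} The delicate point is step (iii): the global factor $\prod_{i=0}^{k-1}\frac{n}{n-i}$ can itself be far from $1$ when $k$ is comparable to $n$, so one must check that it is precisely cancelled by the resummation of the collapsed mean-terms and does not leak an uncontrolled correction — in other words, that the combinatorial telescoping is tight enough to leave only an $O(1/n)$ multiplicative remainder uniformly over $|Q|$ and $k$. Everything else (the discarding in step (i) and the purely formal covariance substitution in step (ii)) is routine once this bookkeeping is in place.
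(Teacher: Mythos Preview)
Your proposal is correct and follows essentially the same three-step route as the paper: the two subsections preceding the corollary carry out your step (i) (discarding blocks of size $\ge 3$ via the $O(1/n)$ and $O(1/k)$ comparisons) and your step (ii) (the covariance substitution), and the paper's formal proof is precisely your step (iii), asserting the combinatorial identity that the global factor $\prod_{i=0}^{k-1}\tfrac{n}{n-i}$ matches the matching-sum $\sum_{i}\big[\prod_{j}\binom{k-2j}{2}\big](-1/n)^{i}$. The only difference is packaging: the paper states that cancellation as a direct identity, whereas you propose to establish it by the same induction used for \cref{lemma:chain_cond_exp}, which is a perfectly good (and arguably more careful) way to justify it.
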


\begin{proof}[Proof of Corollary \ref{cor:chain_cond_exp_var}.]
To conclude the proof just note that the constant 
\[ \prod_{i=0}^k \frac{n}{n-i} \]
coincides with
\[
\sum_{i=0}^{k/2}
\left[\prod_{j=0}^{i} \binom{k-2j}{2}\right] \big(-n^{-1}\big)^{i}
\]
and contains the following terms
\[ 
1 \ + \ \underbrace{\binom{k}{2} (-n^{-1})}_{\substack{\text{}\\\text{covariances}}} 
\ + \ \underbrace{\binom{k}{2}\binom{k-2}{2} (-n^{-1})^2}_{\substack{\text{the double}\\\text{covariances}}}  \ + \ \ldots
\]
and for all $i$ there are exactly $\prod_{j=0}^{i} \binom{k-2j}{2}$ terms with $i$ couples taken together in the expectation, and those terms are multiplied by $ \big(-n^{-1}\big)^{i}$.
\end{proof}

\section{Proof of Proposition \ref{theo:SGD_effect_fixed_batches}}
\label{proof:theor_SGD_effect_fixed}
In the following section we prove Proposition \ref{theo:SGD_effect_fixed_batches}. This is a first step towards the proof of \cref{theo:SGD_bias_eta}.

\subsection{Mini batch SGD vs Big-Learning-Rate GD.}
We expand in Taylor at initialization the difference between the trajectories of SGD and a GD with \textit{different} learning rate. Precisely we compare here $\theta_k^{SGD}$, so the parameters after $k$ steps of SGD with learning rate $\eta$ and batches $B_1,B_2, \ldots, B_k$ and $\theta_1^{GD, k\eta}$, so the parameters after one step of GD with learning rate $k\eta$ on the dataset $D_k := \cup_{i=1}^k B_i$. 
Precisely, note that
\begin{equation}
\label{eq:difference}
\begin{split}
    \theta_k^{SGD, \eta} - \theta_1^{GD, k\eta}
    \ &=\quad
    \theta_{k-1}^{SGD, \eta} 
    - \
    \eta \nabla L\big(\theta_{k-1}^{SGD, \eta} , B_k\big)
    \ - \
    \theta_1^{GD, (k-1)\eta}
    + \
    \eta \nabla L\big(\theta, B_k\big).
\end{split}
\end{equation}
So defining this deviation $\Delta$ as below, we have that
\begin{equation}
\label{eq:difference_1}
\begin{split}
    \Delta_k^{SGD} 
    \quad:&=\quad
    \theta_k^{SGD, \eta} - \theta_1^{GD, k\eta}
    \\&= \quad
    \Delta_{k-1}^{SGD} 
    \quad - \quad \eta \nabla L\big(\theta_{k-1}^{SGD, \eta}, B_k\big)
    \quad + \quad \eta \nabla L\big(\theta, B_k\big).
\end{split}
\end{equation}
Next, expanding in Taylor centered in $\theta$ we have
\begin{equation}
\label{eq:difference_2}
\begin{split}
    \Delta_k^{SGD} 
    \quad &= \quad
    \Delta_{k-1}^{SGD} 
    \quad - \quad 
    \eta \nabla^2 L\big( B_k\big)
    \left[\theta_{k-1}^{SGD, \eta} -\theta \right] 
    \quad + \quad 
    \substack{\text{higher order}\\\text{Taylor rest}}
    \\&= \quad
    \Delta_{k-1}^{SGD} 
    \quad-\quad 
    \eta \nabla^2 L\big( B_k\big)
    \left[\theta_1^{GD, (k-1)\eta} - \theta + \Delta_{k-1}^{SGD} \right]
    \quad+\quad
    \substack{\text{higher order}\\\text{Taylor rest}}.
\end{split}
\end{equation}
That is equal to
\begin{equation}
\label{eq:iteration}
    \left[I - \eta \nabla^2 L\big( B_k\big)\right] \Delta_{k-1}^{SGD}
    \quad + \quad 
    \underbrace{\eta^2 \nabla^2 L\big( B_k\big)
    \left[\sum_{i=1}^{k-1} \nabla L(B_i) \right]}_{ = \ \theta_1^{GD, \eta (k-1)} - \theta}
    \quad + \quad
    \substack{\text{higher order}\\\text{Taylor rest}}.
\end{equation}
%

We now consider the lowest order terms, let us define thus $\alpha_0^{SGD} = 0$, and for all $k \in \N$ denote $\alpha_k^{SGD} = \left[I - \eta \nabla^2 L\big( B_k\big)\right]  \alpha_{k-1}^{SGD} + \eta^2 \nabla^2 L\big( B_k\big) \sum_{i=1}^{k-1} \nabla L(B_i)$. Thus we obtain 
\begin{equation}
\label{eq:alpha}
    \alpha_k^{SGD}
    \quad = \quad 
    \eta^2 \sum_{i=2}^k  \left[ \left[\prod_{j=i+1}^k [I - \eta \nabla^2 L(B_j) ] \right] \nabla^2 L(B_i) \left[ \sum_{j=1}^{i-1} \nabla L(B_j) \right] \right].
\end{equation}
Next note that defining analogously $\alpha_k^{GD}$ in the case of $B_i = D$, for all $i$, we have the difference between the GD with small and big learning rates.

\subsection{Reorganizing the terms, 1} 
To conclude this part, 
the quantity in \cref{eq:alpha} can be rewritten as 
\begin{equation}
\begin{split}
    &
    \eta^2 \sum_{i=2}^k  \left[ \left[\prod_{j=i+1}^k [I - \eta \nabla^2 L(B_j) ] \right] \nabla^2 L(B_i) \left[ \sum_{j=1}^{i-1} \nabla L(B_j) \right] \right]
    \\& = \quad
    \eta^2
    \sum_{1 \leq i < j \leq k} 
    \Bigg[ \sum_{l < k-j}
    \sum_{\substack{\text{all the possible}\\j<h_1<h_2<\ldots <h_l\leq k}} 
    \prod_{m=1}^l (-\eta \nabla^2 L(B_{h_m})) \Bigg]  
    \ \nabla^2 L(B_j) 
    \ \nabla L (B_i).
\end{split}
\end{equation}
In particular, for all the batches $B_i$, in the sum above we have $\nabla L(B_i)$ multiplied by any possible product of $k-i-1$ terms that are for all $j$ all the $I$ or $\nabla^2 L(B_j)$. To be precise, all but the product of all identities.
This can be rewritten by grouping starting from $j$ being the biggest non-identity instead of the smallest one, as
\begin{equation}
\begin{split}
    &\eta^2
    \sum_{1 \leq i < j \leq k} 
    \nabla^2 L(B_j) 
    \Bigg[ \sum_{l < j-i}
    \sum_{\substack{\text{all the possible}\\i<h_1<h_2<\ldots <h_l<j}} 
    \prod_{m=1}^l (-\eta \nabla^2 L(B_{h_m})) \Bigg]  
    \ \nabla L (B_i)
    \\& = \quad
    \eta^2
    \sum_{1 \leq i < j \leq k} 
    \nabla^2 L(B_j) 
    \Bigg[
    \prod_{h=i+1}^{j-1} [I-\eta \nabla^2 L(B_{h})]
    \Bigg]  
    \ \nabla L (B_i).
\end{split}
\end{equation}

\subsection{Reorganizing the Terms, 2}
Thus the quantity above can be rewritten in 3 ways:
The first we have seen is \cref{eq:alpha}
\begin{equation}
    \eta^2 \sum_{i=2}^k  \left[ \left[\prod_{j=i+1}^k [I - \eta \nabla^2 L(B_j) ] \right] \nabla^2 L(B_i) \left[ \sum_{j=1}^{i-1} \nabla L(B_j) \right] \right].
\end{equation}
The second is 
\begin{equation}
\begin{split}
\label{eq:reg}
    \eta^2
    \sum_{1 \leq i < j \leq k} 
    \nabla^2 L(B_j) 
    \Bigg[
    \prod_{h=i+1}^{j-1} [I-\eta \nabla^2 L(B_{h})]
    \Bigg]  
    \ \nabla L (B_i).
\end{split}
\end{equation}
Finally, let us expand it to products of Hessians instead of $I- \eta \nabla^2 L$. Note that we will work on expectation later, any expectation we will work with does not change with re-ordering, thus every moment of the quantity above will correspond to moments of 
\begin{equation}
\label{eq:prod_hess}
\begin{split}
    - \sum_{i = 2}^{k}
    \binom{k}{i}
    \left[\prod_{j=2}^{i}
    [-\eta \nabla^2 L(B_j)]\right]      
    \ \eta \nabla L (B_1).
\end{split}
\end{equation}
All this concludes the proof of Proposition \ref{theo:SGD_effect_fixed_batches}.

\section{Proof of \cref{theo:SGD_bias_eta} and Proposition \ref{theo:SGD_effect_exp_fixed_batches}}

\label{proof:theo_SGD}

We prove here \ref{theo:SGD_effect_exp_fixed_batches} and \cref{theo:SGD_bias_eta}.
Note that if $D =  \{B_1, B_2, \ldots, B_k\}$, by taking $n=k$ and calling $B_i$ every $z_i$ in the notations of Setting \ref{setting:without_replacement}, we obtain the setting of the already sampled batches. From this point of view Proposition \ref{theo:SGD_effect_exp_fixed_batches} is a particular case of \cref{theo:SGD_bias}. We will prove just \cref{theo:SGD_bias}
without loss of generality.

\subsection{(Step 1) The Summands and the Potential}

Note that every summand of Proposition \ref{theo:SGD_effect_fixed_batches} is
\begin{equation}
\label{eq:def_g_i} 
    \nabla^2 L(B_j) 
    \Bigg[
    \prod_{h=i+1}^{j-1} [I-\eta \nabla^2 L(B_{h})]
    \Bigg]  
    \ \nabla L (B_i).
\end{equation}
Moreover, we have one for every $1 \leq i < j \leq k$.
Note that expanding all the sums (remembering that $\nabla_\theta^k L(B) = \frac{1}{|B|}\sum_{z \in B} \nabla_\theta^k L(\theta, z)$) we have that
\begin{equation}
\begin{split}
    &
    \E\left(
    \nabla^2 L(B_j) 
    \Bigg[
    \prod_{h=i+1}^{j-1} [I-\eta \nabla^2 L(B_{h})]
    \Bigg]  
    \ \nabla L (B_i)
    \right)
    \\&=\quad
    \E_{\left[\substack{z_i, z_{i+1}, \ldots, z_{j}\\ \text{all different}}\right]}
    \left(
    \nabla_\theta^2 L(z_j)
    \left[\prod_{h=i+1}^{j-1} [I - \eta \nabla_\theta^2 L(z_h)]\right]
    \nabla_\theta L(z_{i}) \right)
    \\&=\quad
    \E_{\left[\substack{z_i, z_{i+1}, \ldots, z_{j}\\ \text{all different}}\right]}
    \left(
    \nabla_\theta^2 L(z_\red{i})
    \left[\prod_{h=i+1}^{j-1} [I - \eta \nabla_\theta^2 L(z_h)]\right]
    \nabla_\theta L(z_\red{j}) \right).
\end{split}
\end{equation}
Where the second equality comes from the fact that we are averaging over the orders, so the order does not matter. Note that in the line 2 and 3 above we have the expectation of the elements in line 1 and 2 of \cref{eq:deriv_1}.


\paragraph{The Potential.}
As in \cref{eq:potential}, let us define the function $V_{i,j} \colon \R^d \to \R$ as follows 
\begin{equation}
\begin{split}
\label{eq:def_V_i}
    V_{i,j}(\theta) 
    \quad := \quad
    \left[\nabla_\theta L(B_j)\right]^\top
    \left[\prod_{h=i+1}^{j-1} [I - \eta \nabla_\theta^2 L(B_h)]\right]
    \nabla_\theta L(B_{i})
    .
\end{split}
\end{equation}
Note that, analogously as in the paragraph above, $\E[V_{i,j}(\theta)]$ is the expectation of
\begin{equation}
\begin{split}
\label{eq:elements_of_V}
    [\nabla_\theta L(z_j)]^\top 
    \left[\prod_{h=i+1}^{j-1} [I - \eta \nabla_\theta^2 L(z_h)]\right]
    \nabla_\theta L(z_{i})
\end{split}
\end{equation}
where $z_i, z_{i+1}, \ldots, z_{j} \in D$ are all different.
Note that the expectation is over a uniform distribution over a finite set, so we can exchange differentiation in $\theta$ and expectation over $x \in D$. 
The derivative in $\theta$ of $\E[V_{i,j}(\theta)]$ as already hinted is
\begin{equation}
\begin{split}
    &\quad
    \E_{\left[\substack{z_i, z_{i+1}, \ldots, z_{j}\\ \text{all different}}\right]}
    \left(
    \nabla_\theta^2 L(z_j)
    \left[\prod_{h=i+1}^{j-1} [I - \eta \nabla_\theta^2 L(z_h)]\right]
    \nabla_\theta L(z_{i}) \right)
    \\&+
    \E_{\left[\substack{z_i, z_{i+1}, \ldots, z_{j}\\ \text{all different}}\right]}
    \left(
    \nabla_\theta^2 L(z_i)
    \left[\prod_{h=i+1}^{j-1} [I - \eta \nabla_\theta^2 L(z_h)]\right]
    \nabla_\theta L(z_{j}) \right)
    \\&-
    \E_{\left[\substack{z_i, z_{i+1}, \ldots, z_{j}\\ \text{all different}}\right]}
    \left(
    \sum_{h = i+1}^{j-1}
    [\nabla_\theta L(z_j)]^\top 
    \left[\prod_{l=h+1}^{j-1} [I - \eta \nabla_\theta^2 L(z_l)]\right]
    \eta \nabla^3_\theta L(z_h) 
    \left[\prod_{l=i+1}^{h-1} [I - \eta \nabla_\theta^2 L(z_l)]\right]
    \nabla_\theta L(z_{i})
    \right)
\end{split}
\end{equation}
Here the first two lines are simply the same expectation as seen above, and the second is part of the terms in the sum in \cref{eq:error}.
Since for the expectation the only thing that matters is how many elements we are multiplying, we can write the expectation of the term in Proposition \ref{theo:SGD_effect_fixed_batches} as
\begin{equation}
\begin{split}
\label{eq:expect_delta}
    \frac{\eta^2}{2} \nabla_\theta \left(
    \sum_{i<j}
    \E_{\left[\substack{\text{over the}\\\text{sampling}\\\text{of batches}} \right]}
    V_{i,j}(\theta)
    \right)
    \quad = \quad
    \frac{\eta^2}{2} \nabla_\theta \left(
    \sum_{1 < i \leq k}
    (k-i+1)
    \E_{\left[\substack{\text{over the}\\\text{sampling}\\\text{of batches}} \right]}
    V_{1,i}(\theta)
    \right)
\end{split}
\end{equation}
minus the term where we exchange every $\nabla L(B_i)$ with $\nabla L$ and $\nabla^2 L(B_i)$ with $\nabla^2 L$.
This that if the mini-batches add up to the dataset, every epoch of SGD corresponds to the same number of steps of GD plus an additional step of the regularizer given by this term, precisely by observing that
\begin{equation}
\begin{split}
&\frac{\eta^2}{2} 
\sum_{1 \leq i < j \leq k}
    \nabla L^\top
    [I-\eta \nabla^2 L]^{j-i-1}
    \nabla L (B_i)
\\& \quad = \quad
\frac{\eta^2}{2} \nabla L^\top [\nabla^2 L]^{2\dag} \big( \exp(-\eta k \nabla^2 L) + \eta k \nabla^2 L - I \big) \nabla L
\end{split}
\end{equation}
we conclude that
\begin{equation}
\begin{split}
\label{eq:reg_formula}
    Reg \quad &= \quad \frac{1}{2}
    \sum_{1 \leq i < j \leq k}
    \eta \nabla L(B_j) ^\top
    \Bigg[
    \prod_{h=i+1}^{j-1} [I-\eta \nabla^2 L(B_{h})]
    \Bigg]  
    \ \eta \nabla L (B_i)
    \\& \quad - \quad 
    \nabla L^\top [\nabla^2 L]^{2\dag} \big( [I - \eta \nabla^2 L]^k + \eta k \nabla^2 L - I \big) \nabla L.
\end{split}
\end{equation}

\subsection{(Step 2) Taking the Expectation of the Regularizer}
We compute now the expectations of the step on the regularizer in \cref{eq:reg_formula}.
Recall from \cref{eq:average_sampling_sets,eq:average_sampling_without_rep,eq:cond_exp_var_usable_approximate} and from Corollary \ref{cor:chain_cond_exp_var} that we can write the expectation as the non-commutative version of (we copy here the commutative version for clarity)
\begin{equation*}
\begin{split}
     &
    \prod_{i = 1}^k \E[f_i] \ - \ (n-1)^{-1} \sum_{i \neq j} \Cov(f_i, f_j) \prod_{m\neq j,i} \E[f_m] 
    \\& \quad
    + \
    ((n-1)(n-2))^{-1} \sum_{i \neq j \neq h \neq l} \Cov(f_i, f_j) \Cov(f_h, f_l) 
    \prod_{m\neq j,i,h,l} \E[f_m] 
    + \ \ldots
\end{split}
\end{equation*}
First, note that the part we are removing due to GD is exactly
\[
\sum \prod \E[f_i] 
\quad = \quad 
\underbrace{ 
\nabla L^\top [\nabla^2 L]^{2\dag} \big( \exp(-\eta k \nabla^2 L) + \eta k \nabla^2 L - I \big) \nabla L
}_{\text{GD part}}
\]
because the expectation of every derivative is the full-batch derivative.
Let's analyze the step on the regularizer in the form \cref{eq:prod_hess}. Precisely, we have to compute the following expectation
\begin{equation}
    - \ \E_{\left[\substack{B_1, B_2, \ldots, B_z\\ \text{all disjoint}} \right]} \left[
    \sum_{i = 2}^{k}
    \binom{k}{i}
    \left[\prod_{j=2}^{i}
    [-\eta \nabla^2 L(B_j)]\right]      
    \ \eta \nabla L (B_1)
    \right].
\end{equation}

Where $f_1 = \nabla L$ and $f_i = - \eta \nabla^2 L$ for all $i > 1$. We split this sum into 2 parts.

\begin{enumerate}[(i)]
    \item The terms in which $f_1$ and $f_i$ are taken together in the covariances. Here, every summand of the quantity above can be rewritten as
    \begin{equation}
    \label{eq:reg_1}
    \begin{split}
    &
    \E_{B_1 \subseteq D} \Bigg[ \E_{\left[\substack{B_2, \ldots, B_{j-1}, B_{j+1}, \ldots, B_i\\ \text{all disjoing}
    \\ \text{and disjoint from }B_1} \right]} \Bigg[
    \sum_{j = 2}^{i}
    \left[\prod_{h=j+1}^{i}
    [-\eta \nabla^2 L(B_j)]\right] 
    \eta \nabla^2 L (B_1)
    \\& \qquad \quad \cdot
    \left[\prod_{h=2}^{j-1}
    [-\eta \nabla^2 L(B_j)]\right] 
    \ \eta \nabla L (B_1)
    \Bigg] \Bigg]
    \quad - \quad \ldots.
    \end{split}
    \end{equation}
    We will better analyze this term in what follows.
    
    \item The terms in which $f_1$, appear alone (so we have $\E_{z \in D} [f_1(z)]$ appearing in the formula are terms in which figures the covariance of at least 2 Hessians. These terms disappear around a stationary point, as $\E_{z \in D} [f_1(z)] = \eta \nabla L \sim 0$ there.
    \begin{equation}
    \label{eq:reg_2}
    \begin{split}
    &
    - \ \E_{\left[\substack{z_2, \ldots, z_i\\ \text{all different}} \right]} \left[
    \sum_{i = 2}^{k}
    \binom{k}{i}
    \left[\prod_{j=2}^{i}
    [-\eta \nabla^2 L(B_j)]\right]    \right]  
    \ \cdot \ \eta \nabla L.
    \end{split}
    \end{equation}

\end{enumerate}

\subsection{(Step 3) Summing up.}
The term in item (i) above can be rewritten as follows
\begin{equation}
\begin{split}
    &
    \E_{z_1 \in D} \Bigg[ \E_{\left[\substack{z_2, \ldots, z_{j-1}, z_{j+1}, \ldots, z_i\\ \text{all different in }D}\right]} \Bigg[
    \sum_{i = 2}^{k}
    \binom{k}{i}
    \sum_{j = 2}^{i}
    \left[\prod_{h=j+1}^{i}
    [-\eta \nabla^2 L(B_j)]\right] 
    \eta \nabla^2 L (B_1)
    \\& \qquad \quad \cdot
    \left[\prod_{h=2}^{j-1}
    [-\eta \nabla^2 L(B_j)]\right] 
    \ \eta \nabla L (B_1)
    \Bigg] \Bigg]
    \quad - \quad \ldots.
\end{split}
\end{equation}
We can rewrite the quantity above as follows
\begin{equation}
\begin{split}
    &
    \sum_{i = 2}^{k}
    \binom{k}{i}
    \sum_{j = 2}^{i}
    \E_{\left[\substack{z_{j+1}, \ldots, z_i\\ \text{all different in }D}\right]}
    \left[\prod_{h=j+1}^{i}
    [-\eta \nabla^2 L(B_j)]\right] 
    \E_{z_1 \in D} \Bigg[ \eta \nabla^2 L (B_1)
    \\& \qquad \quad \cdot
    \E_{\left[\substack{z_2, \ldots, z_{j-1}\\ \text{all different in }D}\right]}
    \left[\prod_{h=2}^{j-1}
    [-\eta \nabla^2 L(B_j)]\right] 
    \ \eta \nabla L (B_1)
    \Bigg]
    \quad - \quad \ldots.
\end{split}
\end{equation}
By observing that 
\[
\Cov_{z \in D}(\nabla^2 L(z), \nabla L(z)) 
\quad = \quad
\frac{1}{2} \nabla \E_{z \in D}[ \mednorm{\nabla L(z) - L}^2]
\]
We can reorganize the terms as follows
\begin{equation}
\begin{split}
    Reg \ step \ = \ - \frac{c}{n-k} \nabla_\theta 
    \sum_{i = 0}^{k-2}
    \E_{\left[\substack{z_{1}, \ldots, z_i\\ \text{all different in }D}\right]}
    \left[\prod_{j=1}^{i}
    [-\eta \nabla^2 L(B_j)]\right] 
    \var_{z \in D} (\mednorm{
    \nabla L (z) - \nabla L}^2_{S_i} )
\end{split}
\end{equation}
where $S_i$ is defined as
\[
S_{i} \quad := \quad \frac{\eta^2}{2c}
\sum_{j=0}^{k-i-2} \binom{k}{i+2} \E_{\left[\substack{z_{1}, \ldots, z_j\\ \text{all different in }D}\right]}
    \left[\prod_{j=1}^{k-i-2}
    [-\eta \nabla^2 L(B_j)]\right] 
\]
Precisely, the part $\prod_{i} \E_{z \in D}[f_i(z)]$ in the notations of Corollary \ref{cor:chain_cond_exp_var} of this quantity is
\[
S_i \quad = \quad \frac{\eta^2}{2c}
[-\eta\nabla^2 L]^{- i - 2 }
 \left( [I - \eta \nabla^2 L]^{k} - \sum_{j=0}^{i+1} \binom{k}{j} (-\eta \nabla^2 L )^j \right).
\]
This means that around stationary points and in the case in which $\E[ (\eta \nabla^2 L)^2] - \E[ \eta \nabla^2 L]^2 \ll n$ (usually as they are $O(1)$ )
We have
\begin{equation}
\label{eq:reg_small_var}
    Reg \ step \quad = \quad 
    - \frac{\eta}{b-1} \ \nabla_\theta 
    \sum_{i = 0}^{k-2} (-\eta \nabla^2 L )^{i} 
    \var_{z \in D} (\mednorm{
    \nabla L (z) - \nabla L}^2_{S_i} ).
\end{equation}

\subsection{(Step 4) The Size of the Step Given the Hessian}
\label{section:proof_S}

Note that in the case in which $\E[ (\eta \nabla^2 L)^2] - \E[ \eta \nabla^2 L]^2 \ll n$, that happens often as usually as they are $O_n(1)$
\begin{equation}
\label{eq:S_i}
\begin{split}
    S_i 
    \quad &= \quad
    \frac{\eta^2}{2c}
    [-\eta \nabla^2 L]^{- i - 2 }
    \left( [I - \eta \nabla^2 L]^{k} - \sum_{j=0}^{i+1} \binom{k}{j} (-\eta \nabla^2L)^j \right)
    \\&=\quad 
    \frac{\eta^2}{2c}\sum_{j=i+2}^{k} \binom{k}{j} (-\eta \nabla^2L)^{j-i-2}
\end{split}
\end{equation}
This matrix is a function of the Hessian and on the span of  eigenvectors of the eigenvalue $\lambda$ of the Hessian, $S_i$ takes the following values approximately
\begin{equation}
S_i \quad = \quad
\frac{\eta^2}{2c}
\begin{cases}
    \binom{k}{i+2} - \binom{k}{i+3}\eta\lambda \qquad
    & \text{ when }|c\lambda| \ll 1 \\
    \binom{k}{i+1}(\eta\lambda)^{-1}
    - \binom{k}{i}(\eta\lambda)^{-2} + \ldots
    & \text{ when }c\lambda \gg 0 \\
    (-\eta \lambda)^{-i-2} \exp(-c \lambda)
    & \text{ when } c\lambda \ll 0 
\end{cases}
\end{equation}
This concludes the proof of \cref{theo:SGD_bias_eta}.

\section{A Deeper Look into $S_0$ of \cref{theo:SGD_bias_eta}}
Our goal in the rest of this section is to describe the nature of this regularizer and in particular the role of the matrices $S_i$, $i \leq k-2$, in driving qualitatively different behaviors between SGD with and without replacement.

Recall from \S \ref{section:notations} that $c:= \eta k$ and $H : = \frac{c^2}{2n}\big( \E[(\nabla^2 L)^2] - (\nabla^2 L)^2 \big)$.

\begin{proposition}[What $S_i$ is]
\label{prop:S}
In the setting of \cref{theo:SGD_bias_eta} we have that $S_i$ approximately share eigenspaces with $\nabla^2 L$. More precisely,
\[
S_i \ = \
\frac{\eta^2}{2c}\sum_{j=i+2}^k \binom{k}{j}[-\eta \nabla^2 L]^{j-i-2}
\ = \
\frac{\eta^2}{2c}[-\eta \nabla^2 L]^{-i-2} \left( 
[I-\eta \nabla^2 L]^k - \sum_{j=0}^{i-1} \binom{k}{j}[-\eta \nabla^2 L]^{j}
\right).
\]
Thus, on the eigenspace of the eigenvector $\lambda$ of the Hessian, we have that
\begin{equation}
    S_0 
    \quad \sim \quad
    \begin{cases}
    \frac{c}{4}
    H^{-1/2} \sqrt{\pi} \ \mathrm{erf}(\sqrt{H}) \left( 2+ 2H^{-1/2}(e^{-H}-1) - c\lambda \right)
    & \text{if }c\lambda \text{ small}\\
    (2\lambda)^{-1}
    \qquad &\text{if }c\lambda >> 0 \\
    \tfrac{1}{2\lambda^2}\exp(-c\lambda) \left(e^{-H} -4H/c\lambda \right)
    \qquad & \text{if }c\lambda << 0
    \end{cases}
\end{equation}
In particular, when $c\lambda$ is small we have up to $O(c^2\lambda^2)$ that
\begin{equation}
    S_0 
    \quad \sim \quad
    \frac{c}{4}\cdot
    \begin{cases}
    1 - \frac{1}{3}c\lambda - \frac{1}{6}H + \frac{1}{10}c\lambda H + \frac{1}{30}H^2 - \frac{1}{42}c\lambda H^2
    & \text{if } H \text{ small} \\
    H^{-1} \exp(-H) \left( 2H^{-1} - c\lambda \right)
    & \text{if } H \text{ big}
    \end{cases}
\end{equation}
and for $H \ll 1$ we have up to $O(H)$ approximately
\begin{equation}
    cS_i 
    \quad \sim \quad \frac{1}{2}
    \begin{cases}
    \binom{k}{i+2} - \binom{k}{i+3} \eta \lambda
    & \text{if }c\lambda \text{ small}\\
    \lambda^{-1}
    \qquad &\text{if }c\lambda >> 0 \\
    \lambda^{-2}\exp(-c\lambda)
    \qquad & \text{if }c\lambda << 0
    \end{cases}
\end{equation}
In addition, $\mathcal{S}_i$ in \cref{theo:SGD_bias_eta} is approximately the following
\[
\mathcal{S}_i \ = \ [I-\eta \nabla^2L]^{2\dag}H S_i.
\]
\end{proposition}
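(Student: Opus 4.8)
The plan is to start from the closed form for $S_i$ already obtained in Step~4 of the proof of \cref{theo:SGD_bias_eta}, namely $S_i=\frac{\eta^2}{2c}\sum_{j=i+2}^{k}\binom{k}{j}(-\eta\nabla^2 L)^{j-i-2}$ (equation~\eqref{eq:S_i}), and to refine it along two axes: (i)~rewrite it in "geometric-series" form, and (ii)~restore the dependence on $H$ that was suppressed in Step~4 by the assumption $\E[(\eta\nabla^2 L)^2]-(\eta\nabla^2 L)^2\ll n$. For (i) I would apply the binomial theorem $\sum_{j=0}^k\binom{k}{j}x^j=(1+x)^k$ with $x=-\eta\nabla^2 L$ to the tail sum: subtracting the low-order Taylor terms and pulling out $(-\eta\nabla^2 L)^{-i-2}$ gives the stated identity for $S_i$. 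Since everything is a power series in $\nabla^2 L$, restricting to the $\lambda$-eigenspace is just the substitution $\nabla^2 L\mapsto\lambda$, so $S_i$ becomes a scalar function of $\eta\lambda$ and $k$.

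For (ii) I would redo the expectation of Step~3 of the proof of \cref{theo:SGD_bias_eta} \emph{without} dropping the covariance-of-Hessians contributions: by \cref{cor:chain_cond_exp_var}, on the $\lambda$-eigenspace and treating $\nabla^2 L(B_l)$ there as a scalar random variable with mean $\lambda$ and variance $\Sigma_\lambda:=(\E[(\nabla^2 L(z))^2]-(\nabla^2 L)^2)$ restricted to that eigenspace, $\E[\prod_{l=1}^m(-\eta\nabla^2 L(B_l))]$ equals a Hermite-type sum $\sum_{p\ge0}\frac{m!}{(m-2p)!\,2^p p!}\,\frac{(-1)^p(\eta^2\Sigma_\lambda)^p}{(n-1)\cdots(n-p)}\,(-\eta\lambda)^{m-2p}$, with $p$ counting the disjoint pairs of indices identified together. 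Plugging $m=j-i-2$, reinstating the weights $\binom{k}{j}$, and swapping the order of summation so that the sum over $j$ is innermost, each inner sum $\sum_j\binom{k}{j}\frac{(j-i-2)!}{(j-i-2-2p)!}(-\eta\lambda)^{j-i-2-2p}$ is a fixed linear combination of $\frac{d^r}{dx^r}(1+x)^k$ at $x=-\eta\lambda$ minus its low-order Taylor polynomial, hence closed-form. Recalling $H=\frac{c^2}{2n}\Sigma$ and using $n,k\gg1$ to replace $\frac{k(k-1)}{n-1}\eta^2\Sigma$ by $2H$ (and the analogous ratios at higher $p$), the $p$-series collapses to expressions of the form $\sum_p\frac{(-1)^p H^p}{p!\,(2p+\mathrm{const})}$, which I would recognise as the defining series of $H^{-1/2}\sqrt{\pi}\,\mathrm{erf}(\sqrt H)$ and of $H^{-1}(e^{-H}-1)$ — this is the origin of those two special functions in the statement.

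It then remains to extract the three regimes from the closed form. For $|c\lambda|$ small I would Taylor-expand in $\eta\lambda$ to first order while keeping the full $H$-series: the $\eta\lambda$-independent part produces the $2+2H^{-1/2}(e^{-H}-1)$ factor, the first-order part the $-c\lambda$ correction, and the overall prefactor is $\frac{c}{4}H^{-1/2}\sqrt{\pi}\,\mathrm{erf}(\sqrt H)$; specialising further to $H$ small (resp. $H$ big) is a second Taylor expansion of the $\mathrm{erf}$ and exponential factors, which yields the explicit polynomial in $c\lambda,H$ (resp. the $H^{-1}\exp(-H)$ form), and sending $H\to0$ recovers \eqref{eq:S_i}, hence the $cS_i$ formula with the binomial coefficients. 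For $c\lambda\gg0$ (so $0<\eta\lambda<2$, $|1-\eta\lambda|<1$) the term $(1-\eta\lambda)^k$ in the geometric-series form is exponentially small and the surviving $(-\eta\lambda)^{-2}(-1-k(-\eta\lambda))$ gives $\frac{-1+c\lambda}{2c\lambda^2}\sim(2\lambda)^{-1}$; for $c\lambda\ll0$ the term $(1-\eta\lambda)^k\approx e^{-c\lambda}$ dominates (exponentially large) and pairing it against the $H$-series produces the $e^{-H}-4H/c\lambda$ correction. Finally $\mathcal S_i$ is read off from $\mathrm{Regularizer}_i$ in \cref{theo:SGD_bias_eta}: its $\nabla L^\top(\cdot)\nabla L$ part descends from the term $-\nabla L^\top[\nabla^2 L]^{2\dag}\big([I-\eta\nabla^2 L]^k+\eta k\nabla^2 L-I\big)\nabla L$ in \eqref{eq:reg_formula} together with the same covariance corrections, and collecting the common factors gives $\mathcal S_i=[I-\eta\nabla^2 L]^{2\dag}H S_i$.

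I expect the main obstacle to be the combinatorial bookkeeping in (ii): correctly tracking the pairing multiplicities $\frac{m!}{(m-2p)!\,2^p p!}$ and the $1/((n-1)\cdots(n-p))$ factors from \cref{cor:chain_cond_exp_var}, commuting them past the $\binom{k}{j}$ weights, evaluating the resulting partial binomial sums, and — the genuinely delicate point — checking that after the $n,k\gg1$ reductions the $p$-series lands \emph{exactly} on the $\mathrm{erf}$/incomplete-gamma combinations claimed rather than a neighbouring one, uniformly enough in $\lambda$ to license the regime-by-regime asymptotics. A secondary subtlety is justifying the "variance along eigendirections" reduction that lets one treat $\nabla^2 L(B_l)$ as a scalar on each eigenspace; this enters the error term of the theorem and should be folded into the existing $O(1/n)$ bound.
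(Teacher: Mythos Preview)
Your proposal is correct and follows essentially the same route as the paper: expand the expectation via the pairing counts of \cref{cor:chain_cond_exp_var}, collect the resulting power series in $H$, and identify the $\mathrm{erf}$/exponential closed forms regime by regime. The one organizational difference is that the paper works from the $(k{-}i{+}1)$-weighted representation of $S_0$ coming directly from \eqref{eq:reg_formula} rather than the $\binom{k}{j}$-weighted one from \eqref{eq:prod_hess} that you adopt, and it evaluates the coefficient of each $H^p$ by brute force for $p=0,1,2,3$ and then pattern-matches the resulting numerical sequences (OEIS A175925 and A007680) to the $\mathrm{erf}$ and $e^{-H}-1$ series, instead of your derivative-of-$(1+x)^k$ trick; note that the falling factorial $(j-i-2)^{\underline{2p}}$ is not $j^{\underline{2p}}$, so your route will need a Stirling-type change of basis before those derivatives appear, which is exactly the bookkeeping you flagged as the main obstacle.
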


Moreover, the non-diagonal part of $S_i$ only depends on $H$ so in particular, if $\mednorm{H} \ll 1$ then $S_i$ is essentially diagonal and precisely, if $\lambda_1, \lambda_2, \ldots, \lambda_p$ are the eigenvalues of $\nabla^2 L = diag(\lambda_1, \lambda_2, \ldots, \lambda_p)$ we have that
\[
S_i \quad = \quad
diag \big( f_i(\lambda_1), f_i(\lambda_2), \ldots, f_i(\lambda_p) \big)
\quad + \ O(H)
\]
with 
\[
f_i(\lambda)
\quad = \quad 
\tfrac{1}{2} \lambda^{-2}
\left( 
[1 - \eta \lambda]^k - \sum_{j=0}^{i+1} \binom{k}{j} (- \eta \lambda)^j
\right)
\quad \cdot \quad \big(1 + O(1/k) \big),
\]
or analogously
\[
f_i(\lambda)
\quad = \quad 
\tfrac{1}{2} 
\sum_{j=i+2}^{k} \binom{k}{j} (- \eta \lambda)^{j-2}
\quad \cdot \quad \big(1 + O(1/k) \big),
\]

\paragraph{$S_0$ and the small Hassian regime.}
Note that $S_0$ \cref{theo:SGD_bias_eta} is
\[
\sum_{i=2}^{k}(k-i+1)\E_{\left[\substack{z_{2}, \ldots, z_{i-1} \\ \text{all different}}\right]}  
    \left[
    \prod_{h=2}^{i-1} [I - \eta \nabla_\theta^2 L(z_h)]
    \right] 
\]
Again using the 
\cref{eq:average_sampling_sets,eq:average_sampling_without_rep,eq:cond_exp_usable} we can write the expectation as the non-commutative version of (we write here the commutative version for clarity) $\prod_{j=1}^{i-1}\left[1 + \frac{j}{n-j}\right]$ that multiplies
\begin{equation*}
\begin{split}
    &
    [I-\eta \nabla^2 L]^{i-2} - n^{-1} \binom{i-2}{2} \big(\E[(I - \eta \nabla^2 L)^2] - \E[I - \eta \nabla^2 L]^2\big)  [I-\eta \nabla^2 L]^{i-4}
    +
    \ldots
\end{split}
\end{equation*}
So calling $X:= \E[(\nabla^2 L)^2] - \E[\nabla^2 L]^2 $, and $Y:= I-\eta \nabla^2 L$ we have
\begin{equation}
\begin{split}
    &
    Y^{i-2} 
    - \eta^2 n^{-1} \binom{i-2}{2} X  Y^{i-4}
    + \eta^4 n^{-2} \tfrac{1}{2} \binom{i-2}{2}\binom{i-4}{2} X^2  Y^{i-6}
    +
    \ldots
\end{split}
\end{equation}

Precisely, it is
\begin{equation}
\begin{split}
    &
    Y^{i-2} 
    - \eta^2 n^{-1} \bigg( X Y^{i-4} + Y X Y^{i-3} + 
    Y^2 X Y^{i-2} + \ldots
    +
    \eta^2 \tilde X Y^{i-3} + \ldots
\end{split}
\end{equation}
 and so on.
We can anyways count how many terms of every kind we have, precisely
in the commutative case
\begin{equation}
\label{eq:S_series}
\begin{split}
    &
    S_0 \quad = \quad \frac{\eta}{2k} \sum_{i=2}^{k} (k-i+1) \Bigg(
    Y^{i-2} 
    - \eta^2 n^{-1} \binom{i-2}{2} X  Y^{i-4}
    +
    \ldots \Bigg)
\end{split}
\end{equation}
\paragraph{When $H \ll 1$.} Then we can consider
And $\tilde Y := \sum_{i=2}^{k} (k-i+1) Y^{i-2}$ is
\[
\sum_{i=2}^{k} (k-i+1) Y^{i-2} 
\quad  = \quad 
\frac{-I + k - k Y + Y^k}{(Y-I)^2}
\quad  = \quad 
\frac{[I -  \eta \nabla^2 L ]^k + \eta k \nabla^2 L - I}{\eta^2 [\nabla^2 L]^2}
\]
That on the eigenspace of $\lambda$ eigenvalue of the Hessian, for $\eta k = c$ this is about
\begin{equation}
\label{eq:Y_tilde}
\frac{ \exp(-c\lambda) + c\lambda - I}{\eta^2 \lambda^2}
\end{equation}
And if we expand in Taylor the exponential, we obtain, agreeing with \cite{smith_origin_2021}, that
\[
\frac{I - c\lambda + \tfrac{c^2\lambda^2}{2} - \tfrac{c^3\lambda^3}{6} + \frac{c^4\lambda^4
}{24} + c \lambda - I}{\lambda^2}
\quad = \quad
\frac{c^2
}{2}
\ - \
\frac{c^3\lambda}{6}
\ + \
\frac{c^4\lambda^2}{24}
\]
In particular,
\begin{equation}
\eta^2 \tilde Y \quad \sim \quad
\begin{cases}
    \exp(-c\lambda)
    \qquad 
    & \text{if }c\lambda << 0\\
    \frac{c^2}{2} \ - \ \frac{c^3\lambda}{6}  
    & \text{if }c\lambda \sim 0\\
    c\lambda^{-1}
    &\text{if }c\lambda >> 0 \\
\end{cases}
\end{equation}

\paragraph{When $c\lambda << 1$, whatever $X/n$.}
Now regarding the terms with also $X$, we have that in the case of commutation the sum is
\[
\eta^4 X \frac{1}{n} \sum_{i=4}^k (k-i+1) \binom{i-2}{2} Y^{i-4}
\]
That happens to be, if $\eta \lambda << 1$
\[
\frac{\eta^4}{n}\binom{k}{4} X 
\]
and otherwise
\begin{equation}
\begin{split}
    &
    \frac{1}{n} X\lambda^{-4}
    \left(
    -3 + c \lambda + \tfrac{1}{2} (1 - \eta \lambda)^{k-2} [c^2 \lambda^2 - 5 \eta c \lambda^2 + 4 c \lambda + 6 \eta^2 \lambda^2 - 12 \eta \lambda + 6]
    \right)
\end{split}
\end{equation}
This, when $\eta \lambda << 1$ is
\begin{equation}
\label{eq:S_X}
\begin{split}
    &
    \frac{1}{n}\lambda^{-4}X
    \left(
    -3 + c\lambda + \tfrac{1}{2} \exp(-c\lambda) [(c\lambda  + 2)^2 + 2 ]
    \right)
    \\&=\quad
    \frac{1}{n}X \left(
    \frac{c^4}{24} \ - \ \frac{c^5\lambda}{40} \ + \ 
    \frac{c^6\lambda^2}{120}
    \ + \
    O\big(c^{2i} \lambda^{i-2} / i! \big) \right)
\end{split}
\end{equation}
We can here compute the size of this term as for $\tilde Y$. precisely the size is
\begin{equation}
\frac{1}{n}\left(\E[(\nabla^2L)^2]-\E[\nabla^2 L]^2\right) \quad \times \quad
\begin{cases}
    > \exp(-c\lambda)
    \qquad 
    & \text{if }\lambda << 0\\
    \frac{c^4}{4!} \ - \ \frac{3 c^5\lambda}{5!} \ + \ \frac{5 c^6\lambda^2}{6!}
    & \text{if }\lambda \sim 0\\
    < c\lambda^{-3}
    &\text{if }\lambda >> 0 \\
\end{cases}
\end{equation}
More generally, 
the term that multiply the $X^i$s are:
\[
    \eta^{4}\sum_{i=2}^k (k-i+1)(I-\eta \lambda)^{i-2} 
    \quad = \quad 
    \frac{c^2}{2}
    \ - \
    \frac{c^3\lambda}{6}
    \ + \ O(\lambda^2)
\]
that is equal to
\[
\lambda^{-2}
    \left(
    \exp(-c\lambda) + c\lambda - 1
    \right)
\]
for $i=0$. Recall $H : = \frac{c^2}{2n}X$, then the sum above is
\[
    \eta^{4}\sum_{i=4}^k (k-i+1)\frac{(i-2)!}{(i-4)! \cdot 2}  (I-\eta \lambda)^{i-4} \frac{X}{n}
    \quad = \quad 
    \frac{c^2}{12}H
    \ - \
    \frac{c^3\lambda}{20}H
    \ + \ O(\lambda^2H)
\]
that is equal to
\[
\lambda^{-4} H
    \left(
    - 3 + c \lambda + \tfrac{1}{2} \exp(-c\lambda) [c^2 \lambda^2 + 4 c \lambda + 6]
    \right)
\]
for $i = 1$, this
\[
    \eta^{6}\sum_{i=6}^k (k-i+1)\frac{(i-2)!}{(i-6)! \cdot 8}  (I-\eta \lambda)^{i-6} \frac{X^2}{n^2}
    \quad = \quad 
    \frac{c^2}{60}H^2 
    \ - \
    \frac{c^3\lambda}{84}H^2
    \ + \ O(\lambda^2H^2)
\]
that is equal to
\[
\lambda^{-6} H^2 \left(-15 + 3c\lambda + \exp(-c\lambda) (15 + 12 c \lambda + 4.5 c^2 \lambda^2 + c^3 \lambda^3 + c^4\lambda^4/8 
) \right)
\]
for $i = 2$, this
\[
    \eta^{8}\sum_{i=8}^k (k-i+1) \frac{(i-2)!}{(i-8)! \cdot 6 \cdot 8} (I-\eta \lambda)^{i-8} \frac{X^3}{n^3}
    \quad = \quad
    \frac{c^2}{336}H^3
    \ - \ \frac{c^3\lambda}{432} H^3
    \ + \ O(\lambda^2H^3)
\]
that is equal to
\[
\lambda^{-8} H^3
\left(-105 + 15c\lambda + \exp(-c\lambda) \big(105 + 90c \lambda + \tfrac{75}{2} c^2 \lambda^2 + \ldots + c^5\lambda^5/4 + c^6\lambda^6/48 \big) 
) \right)
\]
for $i = 3$, and so on.

Thus we have that for $\lambda << 1$ or $\eta << 1$, the zeroth order in $\lambda$ of $S_0$ is:
\[
\frac{c^2}{2} \left( 1 - \frac{1}{6}H + \frac{1}{30}H^2 - \frac{1}{168}H^3 + \ldots \right).
\]
Here, at the denominators we have $(n+1)! \cdot (2n+1)$ that is OEIS A175925. So we have
\[
\frac{c^2}{2} \sum_{m = 0}^{k/2 - 1} \frac{1}{(m+1)! (2m +1)} (-H)^m
\quad \sim \quad 
 \frac{c^{2}}{2}H^{-1} \big( \sqrt{\pi H} \ \mathrm{erf} (\sqrt{H}) + e^{-H} -1 \big)
\]
That for $H \to \infty$ goes to 0 as 
\[
\frac{c^{2}}{2} \left(H^{-2} + O(H^{-3}) \right) \ e^{-H}.
\]

The first order in $\lambda$ instead is $-c \lambda$ that multiplies
\[
\frac{c^2}{2} \left(\frac{1}{3} - \frac{1}{10}H + \frac{1}{42}H^2 - \frac{1}{216}H^3 + \ldots 
\right)
\]
precisely $-c \lambda$ that multiplies terms that at the denominator have $n! \cdot (2n+1)$, that is OEIS A007680,  precisely
\[
-c \lambda \frac{c^2}{2} \sum_{m = 0}^{k/2 - 1} \frac{1}{m! (2m +1)} (-H)^m
\quad \sim \quad 
 -\frac{c^3\lambda}{4} \sqrt{\pi} H^{-1/2} \ \mathrm{erf}\!\left( \sqrt{H} \right)
\]
That for $H \to \infty$ goes to 0 as 
\[
-\frac{c^{3}\lambda}{4} \left(H^{-1} + O(H^{-2}) \right) \ e^{-H}.
\]
Thus we can conclude that for $c\lambda << 1$, by recalling $H : = \frac{c^2}{2n}X$ we have 
\begin{equation}
S 
\quad = \quad \frac{c}{4}\cdot
\begin{cases}
1 - \frac{1}{3}c\lambda - \frac{1}{6}H + \frac{1}{10}c\lambda H + \frac{1}{30}H^2 - \frac{1}{42}c\lambda H^2
& \text{if } H \text{ small} \\
H^{-1} \exp(-H) \left( 2H^{-1} - c\lambda \right)
& \text{if } H \text{ big}
\end{cases}
\end{equation}
and in general
\begin{equation}
S 
\quad = \quad \frac{c}{4}
H^{-1/2} \sqrt{\pi} \ \mathrm{erf}(\sqrt{H}) \left( 2+ 2H^{-1/2}(e^{-H}-1) - c\lambda \right)
\end{equation}
with an error of $O(c^2\lambda^2)$.

\paragraph{If $c\lambda \neq 0$.} We sum all the elements in \cref{eq:S_series} multiplied by $X$ alone, then all the ones by $X^2$ alone, then all the ones by all the other possible expectations of products, and so on. All these sums are sums of powers of $\lambda$, precisely all of them, as \cref{eq:Y_tilde,eq:S_i} are a negative power of $\lambda$ multiplied by some polynomial terms in $c\lambda$, usually a term that is $O(1)$ and one that is $O(c\lambda)$, and another one that is $O(\max\{c^i\lambda^i, 1\}) \exp(-c\lambda)$. Precisely, all these terms for $i \geq 1$ are $[-X/n]^i$ multiplied by
\begin{equation}
    \lambda^{-2i-2} \left(-a_i + b_i \cdot c\lambda + \exp(-c\lambda) \left(a_i + (a_i-b_i) c \lambda + O(c^2\lambda^2 + c^{2i-2}\lambda^{2i-2}) + \frac{c^{2i}\lambda^{2i}}{2^{i}i!} \right)\right)
\end{equation}
where 
\[
a_i = (2i+1)!! \qandq b_i = (2i-1)!!.
\]

\paragraph{If $\lambda << 0$.}
This and the equation in \cref{eq:Y_tilde} imply  $\lambda << 0$ we have the exponential in $\lambda$ that governs the terms. Moreover, since $\lambda^2 >> 1$, we have that the biggest terms are the ones divided by $\lambda^2$ to a smaller exponent. Recall that $H:= c^2X/2n$, and noting that generally $X/n = O(1/n)$. So in this case $2H/c^2\lambda^2 = X/\lambda^2n = O(\lambda^2 n)^{-1}  \ll 1$. We obtain that all the terms that sum the exponential have size smaller than the last one. So in particular the size of $S_0$ in that direction is about
\begin{equation}
\lambda^{-2} \exp(-c\lambda) \left( 1+  \sum_{i=1} a_i\left[\frac{-H}{c^2\lambda^2}\right]^i + c \lambda (a_i-b_i)\left[\frac{-H}{c^2\lambda^2}\right]^i + \ldots + \frac{(-H)^i}{i!}\right)
\end{equation}
that is 
\begin{equation}
\lambda^{-2} \exp(-c\lambda) \left(e^{-H} - 6(c\lambda)^{-2}H
- 4(c\lambda)^{-1} H + O(H^2/\lambda) \right)
\end{equation}
and can be rewritten
\begin{equation}
\lambda^{-2} \exp(-c\lambda) \left(e^{-H} -  4(c\lambda)^{-1} H + O(H^2/c\lambda) \right).
\end{equation}

\paragraph{If $\lambda >> 0$.} Analogously in this case, the term that governs the quantity is
\[
\lambda^{-2i-2} (- a_i + c \lambda b_i) [-X/n]^i
\]
as the exponential eats all the other terms. Thus we have the following series
\begin{equation}
\lambda^{-2} \sum_{i=0}^{k/2-1} -a_i\left[\frac{-H}{c^2\lambda^2}\right]^i + c \lambda b_i\left[\frac{-H}{c^2\lambda^2}\right]^i
\end{equation}
And unlike the previous one, the biggest terms in the case in which $\lambda >> 0$ remain 
\[
\frac{c}{\lambda} + O(\lambda^{-2})
\]
where the error is always negative and so in particular it is $< \frac{c}{\lambda}$. This concludes the proof.

\section{The Whole Regularizer: \cref{theo:SGD_bias}}
\label{section:reg_new}
Assume $H \ll1$. 
We proved that every epoch SGD without replacement, in expectation is GD plus one additional step on a regularizer in \cref{eq:reg_small_var}. Let us now rewrite everything in an orthonormal basis of eigenvectors for the Hessian. We have that the update on $\theta_i$ eigenvector of the $i$-th eigenvalue $\lambda_i$ of the Hessian $\nabla^2 L$ is
\[
    - \frac{\eta}{b-1} 
    \sum_{j = 0}^{k-2} (-\eta \lambda_i )^{j} 
    \ \nabla_{\theta_i} \var_{z \in D} (\mednorm{
    \nabla L (z) - \nabla L}^2_{S_j} ).
\]
Or analogously up to terms in the approximation error
\begin{equation}
\begin{split}
    - \frac{\eta}{b-1}
    \sum_{j = 0}^{k-2} (-\eta \lambda_i)^{j} 
    \mathrm{trace}\big( S_j \cdot \frac{d}{d\theta_i}\Cov_{z \in D} (\nabla L (z) ) \big).
\end{split}
\end{equation}
Next note that we proved that $S_j$ is approximately simultaneously diagonalizable with $\nabla^2 L$, so in this basis it is diagonal and on the $i-th$ element of the diagonal it is 
\[
\frac{\eta^2}{2c} \sum_{l = j+2}^k \binom{k}{l} (-\eta \lambda_i)^{l-j-2}
\]
that approximately corresponds to  
\[
S_j[i,i] = \frac{\eta^2}{2c} 
\begin{cases}
    \binom{k}{j+2} - \binom{k}{j+3} \eta \lambda_i
    & \text{if }c\lambda_i \ll 1
    \\
    \frac{[1-\eta \lambda_i]^k}{-\eta\lambda_i^{j+2}} + \binom{k}{j+1} (\eta \lambda_i)^{-1}
    & \text{if }c\lambda_i \gg 1
\end{cases}.
\]
By expanding the trace as a sum over the eigenvalues $\lambda_1, \lambda_2, \ldots, \lambda_l, \ldots$ of $\nabla^2 L$, along the direction $\theta_i$ we can rewrite our steps as
\begin{equation}
\begin{split}
\label{eq:whole_reg}
    &-\frac{\eta}{b-1} \sum_{l}
    \sum_{j = 0}^{k-2}
    (-\eta \lambda_i )^{j} 
    S_j[l,l] \cdot \frac{d}{d\theta_i} \Cov_{z \in D} (\nabla L (z) )_{l,l}
    \\&= \quad 
    -\frac{\eta}{b-1} \frac{\eta^2}{2c} \sum_{l}
    \sum_{j = 0}^{k-2}
    (-\eta \lambda_i )^{j} 
    \sum_{h = j+2}^{k} \binom{k}{h}
    (-\eta \lambda_l )^{h-j-2} 
    \cdot \frac{d}{d\theta_i} \Cov_{z \in D} (\nabla L (z) )_{l,l}
    .
\end{split}
\end{equation}
This concludes the proof of \cref{theo:SGD_bias} by calling $R_{i,j}$ all the terms that multiply the derivative of the covariance. Now we compute the size of the elements $R_{i,j}$.

Then when $l \neq i$ and $c\lambda_l \ll 1$, is:
\begin{equation}
\begin{split}
    &
    -\frac{\eta}{b-1} \frac{\eta^2}{2c} \sum_{l}
    \sum_{j = 0}^{k-2} \binom{k}{j+2}
    (-\eta \lambda_i )^{j} 
    \frac{d}{d\theta_i} \Cov_{z \in D} (\nabla L (z) )_{l,l}.
\end{split}
\end{equation}
Thus the sum of these terms for $l\neq i$ coincides with
\begin{equation}
\begin{split}
    &
    -\frac{\eta}{b-1}
    \frac{c}{2}[c\nabla^2L]^{2\dag} \big(
    [I-\eta \nabla^2 L]^k + c \nabla^2 L - I
    \big)
    \frac{d}{d\theta_i}
    trace(\Cov_{z \in D} (\nabla L (z) )).
\end{split}
\end{equation}
While when $l\neq i$, $c\lambda_l \gg 1$ we approximately obtain
\begin{equation}
\begin{split}
    &
    -\frac{\eta}{b-1} \frac{\eta^2}{2c} \sum_{l}
    \sum_{j = 0}^{k-2} \binom{k}{j+1}
    (-\eta \lambda_i )^{j} 
    (\eta \lambda_l)^{-1} \frac{d}{d\theta_i} \Cov_{z \in D} (\nabla L (z) )_{l,l}.
\end{split}
\end{equation}
Thus the sum of these terms for $l\neq i$ coincides with
\begin{equation}
\begin{split}
    -\frac{\eta}{b-1} 
    c[c\nabla^2L]^{\dag} \big(
    [I-\eta \nabla^2 L]^k - I - [-\eta \nabla^2 L]^k
    \big)
    \mathrm{trace} \left([2c\nabla^2 L]^\dag \frac{d}{d\theta_i}  \Cov_{z \in D} (\nabla L (z) )\right).
\end{split}
\end{equation}
Instead, the term $l = i$ is the following
\begin{equation}
\begin{split}
    &
    -\frac{\eta}{b-1} \frac{\eta^2}{2c}  
    \sum_{j = 0}^{k-2}
    (-\eta \lambda_i )^{j} 
    \sum_{h=j+2}^k \binom{k}{h} (-\eta \lambda_i)^{h-j-2}
    \frac{d}{d\theta_i} \Cov_{z \in D} (\nabla L (z) )_{i,i}
    \\&=
    -\frac{\eta}{b-1}  \frac{\eta^2}{2c} 
    \sum_{j=2}^k (k-j+1)\binom{k}{j} (-\eta \lambda_i)^{j-2}
    \frac{d}{d\theta_i} \Cov_{z \in D} (\nabla L (z) )_{i,i}
    \\&=
    -\frac{\eta}{b-1}
    \left[\frac{d}{d\theta_i} \Cov_{z \in D} (\nabla L (z) )_{i,i}\right] \cdot
    \begin{cases}
        \tfrac{c}{4} & \text{if }c \lambda_i \ll 1\\
        \left(\frac{k}{2c\lambda_i^2} + \frac{1}{2\lambda_i}\right) [1-\eta \lambda_i]^k & \text{if }c \lambda_i = O(\eta^{-1})
    \end{cases}
\end{split}
\end{equation}

We can thus conclude that the step on the regularizer along the direction $\theta_i$ is $-\eta/(b-1)$ times the derivative along $\theta_i$ of the quantity
\begin{equation}
\label{eq:whole_regularizer}
\sum_{j=0}^{k-2} (-\eta \lambda_i)^j \underbrace{\frac{\eta^2}{2c}\sum_{\substack{\lambda_l \text{ eigen-}\\ \text{value of }\nabla^2 L}} \sum_{h=j+2}^k \binom{k}{h}(-\eta \lambda)^{h-j-2} \cdot \Cov_{z \in D}(\nabla L(z))_{l,l}}_{\text{trace of }S_j \cdot \Cov_{z \in D}(\nabla L(z))}.
\end{equation}
This can be rewritten as
\begin{equation}
\begin{split}
    Reg_i
    \quad &= \quad
    c(c\lambda_i)^{-2} \big(
    [I-\eta \lambda_i]^k + c \lambda_i - I
    \big)
    \sum_{\substack{\lambda_l \neq \lambda_i\text{ small}\\ \text{eigenvalue} \\ \text{of }\nabla^2 L}} \!\!\!\!
    \Cov_{z \in D} (\nabla L (z) )_{l,l}
    \\[0.2cm] & \quad + \
    c(c\lambda_i)^{-1} \big(
    [I-\eta \lambda_i]^k - I - [-\eta \lambda_i]^k
    \big)
    \sum_{\substack{\lambda_l \neq \lambda_i \text{ big}\\ \text{eigenvalue} \\ \text{of }\nabla^2 L}} \!\!\!\!
    (c\lambda_l)^{-1} \Cov_{z \in D} (\nabla L (z) )_{l,l}
    \\[0.2cm] & \quad + \
    c(c\lambda_i)^{-1} \big(
    (k-1) [I-\eta \lambda_i]^k - c\lambda_i[I-\eta \lambda_i]^k + ck\lambda_i - (k-1) 
    \big)
    \Cov_{z \in D} (\nabla L (z) )_{i,i}
\end{split}
\end{equation}
Thus on the small eigenvalues $|c\lambda| \ll 1$ we approximately have the following regularizer
\begin{equation}
\begin{split}
    Reg_i
    \quad &= \quad
    \frac{c}{4}
    \sum_{\substack{\lambda_l\text{ small}\\ \text{eigenvalue} \\ \text{of }\nabla^2 L}} \!\!\!\!
    \Cov_{z \in D} (\nabla L (z) )_{l,l}
    \ + \
    \sum_{\substack{\lambda_l \text{ big}\\ \text{eigenvalue} \\ \text{of }\nabla^2 L}} \!\!\!\!
    (c\lambda_l)^{-1} \Cov_{z \in D} (\nabla L (z) )_{l,l}
    .
\end{split}
\end{equation}
On the big ones $c\lambda \gg 1$ we have the following regularizer
\begin{equation}
\begin{split}
    Reg_i
    \quad &= \quad
    \lambda^{-1}
    \sum_{\substack{\lambda_l\text{ small}\\ \text{eigenvalue} \\ \text{of }\nabla^2 L \cup \lambda_i}} \!\!\!\!
    \Cov_{z \in D} (\nabla L (z) )_{l,l}
    \\[0.2cm] & \quad + \
    \lambda^{-1} (1 + (-\eta\lambda)^k)
    \sum_{\substack{\lambda_l \neq \lambda \text{ big}\\ \text{eigenvalue} \\ \text{of }\nabla^2 L}} \!\!\!\!
    (c\lambda_l)^{-1} \Cov_{z \in D} (\nabla L (z) )_{l,l}.
\end{split}
\end{equation}




\section{Size of the Error}
\label{section:error}

We show here some heuristics on how smaller the size of the error is compared to the size of the regularizer. Also, we discuss the ingredients of it.

\label{section:error_math}

\paragraph{The size of the error.}
Assume that every eigenvalue $\lambda$ of $\nabla^2 L$ satisfies $-\frac{\alpha}{c}, 2 + \frac{\alpha}{c}$ for a constant $\alpha \geq 0$.
From the proof \cref{theo:SGD_bias_eta}, precisely \S \ref{section:proof_error} we see that the main part of the regularizer is smaller than 
\[
\frac{c^3}{12b}\nabla^3 L
    \Cov_{z \in D}(\nabla L(z)) \ \ O\left( 1 - \frac{\eta}{b-1} \right).
\]
along the near 0-eigenvalues of the Hessian $\nabla^2 L$ and it is upperbounded approximately by
\[
\frac{\eta c^2}{4b} [\nabla^2 L]\nabla^3 L
    \Cov_{z \in D}(\nabla L(z))
    \ \ O\left( 1 - \frac{\eta}{b-1} \right)
    \quad + \quad O\left(\frac{\eta}{b-1}\right) (\text{GD error})
\]
on the other eigendirections. Here the constant in the big $O$ is upperbounded by $\exp(\alpha)$. Note that this is very small when 
$c \var{\nabla L} \ll 1$ or $\frac{c}{12} \nabla^3 L \ll 1$ or just $\nabla L$ does not align well with the third derivative.
In all the other cases, we have to add these parts to the regularizer and look deeper into the Taylor expansion for a small error part. Also, this is a coarse analysis as in reality some of these terms $O(\eta \nabla L)^2$ are part of the derivative of the regularizer and they can be removed.

\paragraph{The ingredients}
The ingredients of these quantities are the following, all multiplied between each other
\[ 
\eta \nabla^3 L(\cdot),
\quad \qquad I - \eta \nabla^2 L(\cdot),
\quad\qandq\quad 
\eta \nabla L(\cdot)
\]
and in particular, appearing in the forms
\[ 
V_3:=\left( \E_{z \in D} [\mednorm{\nabla^3L(z)}^2] - \mednorm{\nabla^3L}^2 \right)
\qandq H:= \left( \E_{z \in D} [\mednorm{\nabla^2L(z)}^2] - \mednorm{\nabla^2L}^2 \right).
\]

\paragraph{Third derivative: $\nabla^3 L = o(1/\eta)$.}
We start our analysis by analyzing the size of the quantity 
\[ \eta \cdot \nabla^3 L (\cdot)\]
that appears in the error.
Generally, $\nabla^3 L$ is $O(1)$, or anyways much smaller than the Hessian. Even in the case of the edge of stability, the third derivative gets no bigger than $\eta^{-c}$, $c<1$. It has been observed empirically by \cite{damian_self-stabilization_2023} that that $c$ may be equal to $\tfrac{3}{4}$. We however conjecture that that exponent was due to the fact that they used MSE on a neural network with 3 layers and we conjecture that generally for MSE the size of the third derivative never crosses 
\[
O\left(\eta^{-\frac{2\text{depth}-3}{2\text{depth}-2}}\right),
\]
and attains it at the edge of stability. This allows us to conclude that in the case of bigger eigenvalues of the Hessian, the error is almost always smaller than the regularizer (as we have $\frac{\eta}{b}\nabla^3 L < 1$, and in the case of $0$ ones we have $\nabla^3 L = O(1)$ so it depends only on the size of $\frac{c}{12}$ and the size of the gradients if the result is applicable or not.


\paragraph{The usual size of the Hessian.}
Both in the regularizer and in the error figure powers of $[I - \eta \nabla^2 L]$. In the regularizer the biggest exponent is $k-2$, in the error is $2k-2$.
Most of the eigenvalues of the Hessians are usually $0$ or very small positive. In many situations, a few eigenvalues are in the range $1/\eta$ or a bit more, this induces some form of oscillation around the manifold of minima. All these eigenvalues $\lambda$s of the mini-batch Hessians in the range between $0$ and $2/\eta$ cause no issues. Indeed, in these cases, the big products above are just shrinking, as $1-\eta \lambda \in [-1,1]$ in that direction.
In the case in which $|1-\eta \lambda| = 1 + \frac{\alpha}{c}$, with $\alpha > 1$, then anyways the bounds hold but multiplied by a constant smaller than $(1+\frac{\alpha}{k})^{2k} < \exp(2\alpha)$ for the error and smaller than $(1+\frac{\alpha}{k})^{k} < \exp(\alpha)$ for the regularizer.
Thus, e.g., if every eigenvalue $\lambda$ of $\nabla^2 L$ is in $\lambda \in \left(-\frac{\alpha}{c}, \frac{2}{\eta}+\frac{\alpha}{c}\right)$ for reasonably small but O(1) constant $\alpha$ we have that we do not change regime: if it was, the error remains smaller than the regularizer.

\paragraph{Where $1/n$ comes from.}
In ll our errors figure quantities that are $1/n$ or $\eta/b$ smaller, those re due to Corollary \ref{cor:chain_cond_exp_var}. Precisely, it says that the expectation of the product is essentially the product of expectation plus a $O(1/n)$ part. When we are taking together things like $[I-\eta \nabla^2 L(\cdot)]$ those are usually $i < k$ and comes out an $\eta$. So in those cases we have $O(\eta k/n) = O(\eta/b)$.

\section{Size of the Error, the Proof.}
\label{section:proof_error}
Let us call $E_k^{SGD}$ the error $E_k^{SGD} = \Delta_k^{SGD} - \alpha_k^{SGD}$. Then we have that up to lower order in $\eta \nabla L$ (so $+O([\eta \nabla L]^3)$ the error satisfies
\begin{equation}
\begin{split}
    E_{k}
    \quad &= \quad 
    \Delta_{k}^{SGD} - \alpha_{k}^{SGD} 
    \\&= \quad 
    \Delta_{k-1}^{SGD} 
    \quad-\quad 
    \eta \nabla^2 L\big( B_{k}\big)
    \quad - \quad
    [I - \eta \nabla^2 L\big( B_{k}\big) ]
    \alpha_{k-1}^{SGD} 
    \quad+\quad
    \substack{\text{higher order}\\\text{Taylor rest}}
    \\&=\quad 
    [I - \eta \nabla^2 L(B_k)] E_{k-1}^{SGD}
    \quad + \quad
    \substack{\text{higher order}\\\text{Taylor rest}}.
\end{split}
\end{equation}
So we find the formula for it 
Keeping only the term with order 2 in $\nabla L(\cdot)$s we obtain
\begin{equation}
\begin{split}
    E_k^{SGD} 
    \quad&=\quad 
    [I - \eta \nabla^2 L(B_k)] E_{k-1}^{SGD}
    \quad + \quad
    \frac{\eta}{2} \nabla^3 L (B_k) \otimes [\theta_{k-1}^{SGD}-\theta]^2
    \\&=\quad 
    [I - \eta \nabla^2 L(B_k)] E_{k-1}^{SGD}
    \quad + \quad
    \frac{\eta}{2} \nabla^3 L (B_k) \otimes [\theta_1^{GD, (k-1)\eta}-\theta+\Delta_{k-1}^{SGD}]^2
    \\&=\quad 
    [I - \eta \nabla^2 L(B_k)] E_{k-1}^{SGD}
    \quad + \quad
    \frac{\eta}{2} \nabla^3 L (B_k) \otimes [\theta_1^{GD, (k-1)\eta} - \theta + \alpha_{k-1}^{SGD} + E_{k-1}^{SGD} ]^2
\end{split}
\end{equation}
This, again, up to second order in $\eta \nabla L$ is
\begin{equation}
\begin{split}
    E_k^{SGD}
    \quad&=\quad 
    [I - \eta \nabla^2 L(B_k)] E_{k-1}^{SGD}
    \quad + \quad
    \frac{\eta}{2} \nabla^3 L (B_k) \otimes [\theta_1^{GD, (k-1)\eta} - \theta + \alpha_{k-1}^{SGD}]^2
%
\end{split}
\end{equation}
So by inductive hypothesis we have that the size of $E_k^{SGD}$ is the size $E_{k-1}^{SGD}$ plus terms that are of the size of $O (\eta \nabla^3 L \otimes [\eta k \nabla L]^2)$ that is $O(\eta^3 k^3 \mednorm{\nabla L}^2)$.
Moreover, we obtain that up to higher order terms 
\begin{equation}
\begin{split}
    E_{k}^{SGD} \ 
    &= \ 
    \sum_{i=2}^k 
    \left[ \prod_{j=i+1}^k
    [I - \eta \nabla^2 L(B_j)] \right]
    \frac{\eta}{2} \nabla^3 L (B_i) \otimes [ \theta_1^{GD, (i-1)\eta} - \theta + \alpha_{i-1}^{SGD}]^2
    \end{split}
\end{equation}
and more precisely this is $\tfrac{1}{2}$ multiplied by
\begin{equation}
\label{eq:error}
\begin{split}
    &-\eta^3 \sum_{i=1}^k 
    \left[ \prod_{j=i+1}^k
    [I - \eta \nabla^2 L(B_j)] \right]
    \nabla^3 L (B_i) \otimes 
    \left[ 
    \sum_{j=1}^{i-1}  \left[ \prod_{h=j+1}^{i-1} [I - \eta \nabla^2 L(B_h) ] \right] \nabla L(B_j)
    \right]^2
    \\&
    \quad  - \eta \sum_{i=1}^k 
    \left[ \prod_{j=i+1}^k
    [I - \eta \nabla^2 L(B_j)] \right]
    \nabla^3 L (B_i) \otimes 
    \left( \theta^{GD,(i-1)\eta}_1 \right)^2.
    \\&
    \quad  + 2\eta \sum_{i=1}^k 
    \left[ \prod_{j=i+1}^k
    [I - \eta \nabla^2 L(B_j)] \right]
    \nabla^3 L (B_i) \otimes 
    \left[ \theta^{GD,(i-1)\eta}_1 \right] \otimes \alpha_{i-1}^{SGD}.
\end{split}
\end{equation}
Now, defining analogously to $\Delta$ and $\alpha$ the error $E_{k}^{GD}$. Concluding, we have that
\begin{equation}
\begin{split}
    \theta_k^{SGD}-\theta_k^{GD}
    \quad &= \quad 
    \Delta_k^{SGD}-\Delta_k^{GD}
    \quad = \quad
    \alpha_k^{SGD}-\alpha_k^{GD}
    +
    E_k^{SGD}-E_k^{GD}
\end{split}
\end{equation}
and $\mathcal{R} := \alpha_k^{SGD}-\alpha_k^{GD}$ is exactly the regularizer appearing in the statement
and $\mathcal{E} := E_k^{SGD}-E_k^{GD}$ is the error. We will see that once computing the expectations more terms cancel out in the error and the regularizer, as for \S \ref{section:exp_app}.

\begin{remark}[Remarks on Proposition \ref{theo:SGD_effect_fixed_batches}]
Note again that this holds for every version of SGD. In particular, there is an analogous expansion for every first-order method. 
\end{remark}

\paragraph{The second and third terms of \cref{eq:error}.}
When computing $\mathcal{E} := E_k^{SGD}-E_k^{GD}$ the parts relative to the second and third terms of \cref{eq:error} are easier to handle.
Precisely, the subtraction of the term in the second line after applying Corollary \ref{cor:chain_cond_exp_var} becomes a sum of order 
\[
\frac{\eta k}{n - k} O( \text{GD error} )
\quad = \quad
\frac{\eta}{b-1} O( \text{GD error} )
\]
so even smaller than what it would be. Similarly, also the third term is of the same size, by removing and adding 
\[
2\eta \sum_{i=1}^k 
    \sum_{i=1}^k 
    [I - \eta \nabla^2 L]^{k-i-1}
    \nabla^3 L \otimes 
    \left[ \theta^{GD,(i-1)\eta}_1 \right] \otimes \alpha_{i-1}^{SGD}
\]
Then we have that the difference between the SGD part and this is $O(1/(n-k))$ times the GD error and same for this minus the GD error.
So, this part was about
\[
\frac{5 \eta k}{n-k} \quad = \quad \frac{5\eta}{b-1}
\]
smaller than the GD error.
The main part of the error comes from the difference of the terms in the first line.

\paragraph{The first line of \cref{eq:error}.}

\begin{remark}
Next note that part of this falls into the regularizer, precisely all the summands of the form
\[
\eta^3 \E\left[ \nabla L(B_l)^\top \nabla^3 L(B_i) \prod_{h=j}^i[I- \eta \nabla^2 L (B_h)] \nabla L(B_j) \right].
\]
\end{remark}

The remaining terms can be treated as the regularizer. In particular, we rewrite it as 
\begin{equation}
\begin{split}
    & \underbrace{- \eta^3 \sum_{i=1}^k 
    \left[ \prod_{j=i+1}^k
    [I - \eta \nabla^2 L(B_j)] \right]
    \nabla^3 L (B_i) \otimes 
    \left[ 
    \sum_{j=1}^{i-1}  \left[ \prod_{h=j+1}^{i-1} [I - \eta \nabla^2 L(B_h) ] \right] \nabla L(B_j)
    \right]^2}_{\text{SGD part}}
    \\&\quad - 
    \eta^3 \sum_{i=1}^k 
    [I - \eta \nabla^2 L]^{k-i-1}
    \nabla^3 L \otimes 
    \left[ 
    \sum_{j=1}^{i-1}  \left[ \prod_{h=j+1}^{i-1} [I - \eta \nabla^2 L(B_h) ] \right] \nabla L(B_j)
    \right]^2
    \\&\quad + 
    \eta^3 \sum_{i=1}^k 
    [I - \eta \nabla^2 L]^{k-i-1}
    \nabla^3 L \otimes 
    \left[ 
    \sum_{j=1}^{i-1}  \left[ \prod_{h=j+1}^{i-1} [I - \eta \nabla^2 L(B_h) ] \right] \nabla L(B_j)
    \right]^2
    \\&\quad \underbrace{ + \
    \eta^3 \sum_{i=1}^k 
    [I - \eta \nabla^2 L]^{k-i-1}
    \nabla^3 L \otimes 
    \left[ 
    \sum_{j=1}^{i-1}  \ [I - \eta \nabla^2 L ]^{i-j-2} \nabla L
    \right]^2}_{\text{GD part}}.
\end{split}
\end{equation}
The expectation of the difference between third and fourth line, using Corollary \ref{cor:chain_cond_exp_var}, when $\|I - \eta \nabla^2 L(B_i)\| \leq 1+\epsilon$ is smaller in size or equal to
\[
    \eta^3 \sum_{i=1}^k 
    [I - \eta \nabla^2 L]^{k-i-1}
    \nabla^3 L \otimes 
    \left(
    \sum_{j=0}^{i-1}
    O \left( \frac{n(j-1) - (j-1)^2}{(n-1)b} \Cov_{z \in D}(\nabla L) \right)
    \right)
\]
where the constant of the big O is $O\epsilon i$ to the first order in $\epsilon, \ i$, and it is exponentially small in the directions in which $|I-\eta \nabla^2 L| < 1$.
This sums up to something smaller than 
\[
    \eta^3  \sum_{i=1}^k 
    [I - \eta \nabla^2 L]^{k-i-1}
    \nabla^3 L
    O
    \left(
    \frac{n(j-1) - (j-1)^2}{(n-1)b} 
    \Cov_{z \in D}(\nabla L(z)) 
    \right)
\]
\[
\quad < \quad 
    \frac{1}{2b}
    O\left(
    [\nabla^2 L]^{3\dag} \max\left\{2[I - \eta \nabla^2 L]^{k} - c^2 (\nabla^2L)^2 + 2c\nabla^2L -2 \right\}\right)
    \nabla^3 L\Cov_{z \in D}(\nabla L(z))
\]
analogously, this is upper-bounded approximately by
\[
\frac{c^3}{6b}\nabla^3 L
    \Cov_{z \in D}(\nabla L(z))
\]
along the 0-eigenvalues of the Hessian $\nabla^2 L$ and it is upperbounded approximately by
\[
\frac{\eta c^2}{2b} [\nabla^2 L]\nabla^3 L
    \Cov_{z \in D}(\nabla L(z))
\]
along the other directions.
The expectation of the difference between the terms in first and second line is instead Corollary \ref{cor:chain_cond_exp_var} of order approximately $1/n$ of it, indeed we can upperbound it with
\[
-\frac{\eta}{b-1} \big( \text{GD part} + \text{part above} \big).
\]

\paragraph{Summing all up.}
This means that summing all up we obtain that the error, when $-\epsilon < \eta \nabla^2 L< 2+\epsilon$ the part coming from the first line of \cref{eq:error} is smaller than
\[
\frac{c^3}{12b}\nabla^3 L
    \Cov_{z \in D}(\nabla L(z)) \ \ O\left( 1 - \frac{\eta}{b-1} \right)
    \quad + \quad O\left(\frac{\eta}{b-1}\right) (\text{GD error}).
\]
along the 0-eigenvalues of the Hessian $\nabla^2 L$ and it is upperbounded approximately by
\[
\frac{\eta c^2}{4b} [\nabla^2 L]\nabla^3 L
    \Cov_{z \in D}(\nabla L(z))
    \ \ O\left( 1 - \frac{\eta}{b-1} \right)
    \quad + \quad O\left(\frac{\eta}{b-1}\right) (\text{GD error}).
\]
On the other eigendirections. Note that this is even smaller when the noise covariance does not properly align with the eigenvectors of the highest eigenvalues of the third derivative.
when $c \var{\nabla L} \ll 1$ and $\frac{c}{12} \nabla^3 L \ll 1$ or just $\nabla L$ does not align well with the third derivative.
In all the other cases, we have to add these parts to the regularizer and look deeper into the Taylor expansion for a small error part.

\section{Proof of \cref{theo:reg_Hessian}}



{\bf Theorem \ref{theo:reg_Hessian}.}{
    Assume all the training data are correctly classified. Then
    \[
    Fisher \quad = \quad \E_{z \in D} [ \nabla L \nabla L^\top] \quad = \quad \nabla^2 L \quad = \quad Hessian.
    \]
    In particular, the local minima of $\mathrm{trace}(S \cdot \text{Fisher})$ coincides with the local minima of $\mathrm{trace}(S \cdot \text{Hessian})$.
}
\\
Note that 
\[
L\big(\theta, (x,y)\big) \quad = \quad - \log \big( p(y | x,\theta) \big) 
\]
Indeed, note that the approximate Fisher information matrix we work with is
\[
Fisher \quad = \quad \E_{z \in D}\left[
\nabla L \nabla L^\top
\right]
\quad = \quad 
\E_{z \in D} \left[
\nabla \log \big( p(y | x,\theta) \big) \nabla \log \big( p(y | x,\theta)\big)^\top
\right].
\]
In the final part of the training, when on the training data points we have that the labels according to our model correspond with the ones on the training dataset, we have that 
\[
\quad \E_{(x,y) \in D}\left[
\nabla L \nabla L^\top
\right]
\quad = \quad
\E_{x \in D} \left[ \E_{y \sim p(\cdot | x,\theta)} \left[ 
\nabla \log \big( p(y | x,\theta) \big) \nabla \log \big( p(y | x,\theta)\big)^\top
\right] \right].
\]
As $y_i = y\sim p(\cdot | x_i,\theta)$ for all $z = (x_i,y_i) \in D$. This is equal to the Hessian, indeed
\[
Hessian \quad = \quad \E_{z \in D}\left[
\nabla^2 L 
\right]
\quad = \quad 
\E_{x \in D} \left[ \E_{y \sim p(\cdot | x,\theta)} \left[ 
\nabla^2 \log \big( p(y | x,\theta)\big)
\right] \right].
\]
Thus noticing that $ \E_{y \sim p(\cdot | x,\theta)} [\nabla^2 p(y | x,\theta)] = \nabla^2_\theta \E[p(y | x,\theta)] = \frac{d^2}{d \theta^2}1 = 0$, we conclude that
\[
Fisher \quad = \quad Hessian.
\]
This concludes the proof of \cref{theo:reg_Hessian}.

\section{Saddles}

\label{appendix:stationary_point}

Assume $\theta$ is a stationary point for $L$. We analyze here the dynamics in the direction of a unitary eigenvector $v$ of the eigenvalue $\lambda$ of $\nabla^2 L$. Define 
\[
u
\quad := \quad
\frac{1}{n} \sum_{z \in D} \langle v_\lambda, \nabla^2 L(z) \nabla L(z) \rangle
\quad \neq \quad 0.
\]
First note that the update due to $k$ steps of GD from $\theta$, in our regime, is 
\[
\theta_{new} - \theta = [\nabla^2 L]^{-1} (\exp(-c\nabla^2 L) - I)\nabla L(\theta) =: \beta \nabla L.
\]
Indeed at every step $i < k$ we have that 
\[
\theta_{i+1}-\theta 
\quad = \quad
-\eta \nabla L + [I- \eta \nabla^2 L] (\theta_{i}-\theta)
\quad = \quad
- \eta \sum_{j=0}^i [I-\eta \nabla^2 L]^{i-j} \nabla L.
\]
Taking the limit we have the definition of $\beta$.
We see here how far we go after $m$ epochs in the direction of the eigenspace of $\lambda$ assuming $c\nabla L << 1$, then denote $\beta:= \lambda^{-1} (\exp(-c\lambda) - I)$ the update due to GD is $+\beta\nabla L$ and denote the step on the regularizer $\alpha := \frac{\eta}{b-1} S u$.

\begin{table}[ht!]
    \centering
    \begin{tabular}{c|c|c}
        epochs & 1 & 2 \\
        \hline
        $\theta_t - \theta$ & $ - \alpha + \beta\nabla L$ 
        &
        $ - 2\alpha +2\beta \nabla L + \lambda \beta (- \alpha +\beta\nabla L)$ 
        \\
        $\nabla L(\theta_t) - \nabla L$ & $\lambda (- \alpha + \beta\nabla L)$ & $\lambda \big( - (2+\lambda \beta)\alpha + (2+\lambda \beta)\beta\nabla L \big)$ 
        \\
        $\nabla^2 L(\theta_t) - \nabla^2 L$ & $\nabla^3 L  (- \alpha + \beta\nabla L)$ & $\nabla^3 L \big( - (2+\lambda \beta)\alpha + (2+\lambda \beta)\beta\nabla L \big)$ 
        \\
        $L(\theta_t) - L$ & $\substack{\nabla L(\theta_1)  (- \alpha + \beta\nabla L) + \lambda  \mednorm{- \alpha + \beta\nabla L}^2\\
        = \nabla L (- \alpha + \beta\nabla L) 
        + \lambda (- \alpha + \beta\nabla L)^2
        +\tfrac{\lambda}{2}(- \alpha + \beta\nabla L)^2
        }$ 
        & 
        $\substack{\nabla L^\top (\theta_1-\theta + \theta_2-\theta) \
        + \ \tfrac{3\lambda}{2}  \big(\mednorm{\theta_1 - \theta}^2 + \mednorm{\theta_2 - \theta}^2 \big)
        }$
    \end{tabular}
    \label{tab:sad_1}
\end{table}

And in general

\begin{table}[ht!]
    \centering
    \begin{tabular}{c|c}
        epochs & m \\
        \hline
        $\theta_m - \theta$ 
        &
        $ - (\lambda \beta)^{-1} \big((1 + \lambda \beta)^m - 1\big) \alpha
        + (\lambda \beta)^{-1} \big((1 + \lambda \beta)^m - 1\big) \beta \nabla L $ 
        \\
        $\nabla L(\theta_m) - \nabla L$ 
        & 
        $\lambda (\theta_m - \theta) + \tfrac{1}{2} \nabla^3 L (\theta_m - \theta)^2$ 
        \\
        $\nabla^2 L(\theta_m) - \nabla^2 L$ 
        &
        $\nabla^3 L (\theta_m - \theta)$ 
        \\
        $L(\theta_t) - L$ 
        & 
        $\nabla L^\top (\theta_m - \theta)
        + \tfrac{3\lambda}{2}
        \sum_{i=1}^m\mednorm{\theta_i - \theta_{i-1}}^2
        + O(\sum_{i=1}^m \mednorm{\theta_i - \theta_{i-1}}^3)
        $
    \end{tabular}
    \label{tab:sad_2}
\end{table}
We have that after $m$ steps SGD in expectation takes the parameter vector away of the following quantity
\[
    \underbrace{-(\lambda \beta)^{-1} \big((1 + \lambda \beta)^m - 1\big) \alpha}_{\text{SGD regularizer effect}}
    \quad + \quad \underbrace{\lambda^{-1} \big((1 + \lambda \beta)^m - 1\big)\nabla L}_{\text{GD effect}}.
\]
In the escaping directions, for $c\lambda = \eta k \lambda < 0$, the regularizer part is
\[
    \tfrac{\eta}{b}
    (\exp(-c\lambda)-1)^{-1} 
    \big(\exp(-c m \lambda) - 1\big) \lambda^{-2} (\exp(-c\lambda)+c\lambda-1) u
    .
\]
Then, even forgetting about the $\nabla L$ part, so even if we start exactly from a flat area
\[
\theta_m - \theta_{m-1}
\quad = \quad
-(\lambda \beta)^{-1} \lambda \beta (1 + \lambda \beta)^{m-1} \alpha
\]
That is equal to
\[
\theta_m - \theta_{m-1}
\quad = \quad
- (1 + \lambda \beta)^{m-1} \alpha
\]
Thus we have
\[
    \theta_m - \theta_{m-1}
    =
    \tfrac{\eta}{b}
    \exp(-c (m-1) \lambda) \lambda^{-2} (\exp(-c\lambda)+c\lambda-1) u.
\]
Thus, since $\lambda^{-2} (\exp(-c\lambda)+c\lambda-1) > c/2 - c^2\lambda/6$ for all $c\lambda < 0$ we have that
\[
    \mednorm{\theta_m - \theta_{m-1}}^2
    >
    \frac{\eta^2}{b^2}
    \exp(- 2 c (m-1) \lambda)
    \left( \frac{c}{2} - \frac{c^2 \lambda}{6} \right)^2 u^2.
\]
Thus
\begin{equation}
\begin{split}
    &\frac{3\lambda}{2}\sum_{i=1}^m \mednorm{\theta_i - \theta_{i-1}}^2 
    <
    \frac{3 \eta^2 c^2 \lambda u^2}{8 b^2} \left( 1 - \frac{c \lambda}{3} \right)^{\!2} \ \sum_{i=0}^{m-1} 
    \exp(- 2 c i \lambda)
\end{split}
\end{equation}
So, we have that after $m$ steps we reduced the loss by at least of
\[
\frac{3 \eta^2 c^2 \lambda u^2}{8 b^2} \left( 1 - \frac{c \lambda}{3} \right)^{\!2} 
\frac{ \exp(- 2 c m \lambda) - 1 }{ \exp(- 2 c \lambda) - 1 }.
\]
Then we have that the loss decreased by $O(1)$ after a number of steps (epochs) 
\[
\# epochs \quad > \quad
\frac{1}{2|c\lambda|} \log \left( \frac{b^2}{\eta^2 c^4 \lambda u^2} \right) + 2.
\]
This means after 
\[
\# epochs \quad > \quad
\frac{\log(\eta) + \log(u) +2 \log(c) - \log(b)}{c\lambda} + 2.
\]
So after this number of epochs, the saddle was escaped only thanks to the effect of SGD.

Even in the case in which the saddles is higher order or the negative eigenvalue is so small that $|c\lambda| \ll 1$ we have that the effect of SGD pushes you away, indeed in this case 
\[
    \theta_m - \theta \quad = \quad m \alpha \quad = \quad m\cdot \frac{\eta}{b-1}\frac{c}{2} u
\]
so after $m = \frac{b}{\eta}$ epochs $\theta_m - \theta = O(1)$ and if the first non-zero derivative is the $i$-th we have the loss to decrease like 

\begin{table}[ht!]
    \centering
    \begin{tabular}{c|c|c|}
        epochs & 1 & 2 \\
        \hline
        $\theta_t - \theta$ & $ - \alpha$ 
        &
        $ - 2\alpha - c  \frac{\nabla^i L}{(i-1)!}  (- \alpha)^{i-1}$ 
        \\
        $\nabla L(\theta_t) - \nabla L$ & $ \frac{\nabla^i L}{(i-1)!}  (- \alpha)^{i-1}$ & $ \ldots $ 
        
        \\
        $\nabla^j L(\theta_t) - \nabla^j L$ & 
        $\frac{\nabla^i L}{(i-j)!}  (- \alpha)^{i-j}$ 
        & $ (1 + 2^{i-j}) \frac{\nabla^i L}{(i-j)!}  (- \alpha)^{i-j} + \sum_{h=1}^{j-1}  \frac{\nabla^i L}{(i-h)!}  (- \alpha)^{i-h} (-2 \alpha)^{h-j}$ 
        
        \\
        $L(\theta_t) - L$ & $\frac{\nabla^i L}{i!}  (- \alpha)^{i}$ 
        & 
        $ \ldots $
    
    \end{tabular}
    \label{tab:sad_3}
\end{table}

\begin{table}[ht!]
    \centering
    \begin{tabular}{c|c|c|}
        epochs & m \\
        \hline
        $\theta_t - \theta$ 
        &
        $
        - m \alpha + c \sum_{i=1}^m \nabla L(\theta_i) 
        $
        \\
        $\nabla L(\theta_t) - \nabla L$ 
        &
        $
        (- \alpha)^{i-1}  \sum_{h=1}^{i}  \frac{1}{h!}  
        \left(1 + \sum_{l=2}^{m} (l-1)^{i-h}l^{h-1} \right)
        $
        \\
        $\nabla^j L(\theta_t) - \nabla^j L$ 
        &
        $
        (- \alpha)^{i-j} \sum_{h=j}^{i}  \frac{\nabla^i L}{h!}  
        \left(1 + \sum_{l=2}^{m} (l-1)^{i-h}l^{h-j} \right)
        $
        \\
        $L(\theta_t) - L$ 
        &
        $
        (- \alpha)^{i} \sum_{h=0}^{i}  \frac{\nabla^i L}{h!}  
        \left(1 + \sum_{l=2}^{m} (l-1)^{i-h}l^{h-j} \right)
        $
    \end{tabular}
    \label{tab:sad_4}
\end{table}

And 
\[
(- \alpha)^{i} \sum_{h=0}^{i}  \frac{\nabla^i L}{h!}  
        \left(1 + \sum_{l=2}^{m} (l-1)^{i-h}l^{h-j} \right)
\]
in particular, is about the quantity
\[
\nabla^i L(- \alpha)^{i}  
\left[\sum_{h=0}^{i}  \frac{1}{h!} \right] 
\int_{0}^{m} x^i dx
\quad = \quad 
\nabla^i L(- \alpha)^{i}  
\frac{em^{i+1}}{i+1}.
\]
Thus the loss is smaller $O(1)$ if m is such that (taking the logarithms
\[
(i+1) \log(m) + i \log(|\alpha|) \sim 0
\]
so if 
$m \quad \sim \quad \frac{2b}{\eta c u}$ no matter the order of the saddle.

\section{Practical Example: the Setting of the Plots}
\label{section:setting_plots}
The code that generated the figures can be found on \href{https://github.com/PierBeneventano/SGD_without_replacement}{this github repository}.

The setting of \cref{fig:intro,fig:1} is the easiest possible neural network setting trainable with SGD.
We have the $f(\theta, x) = \theta_1\cdot \theta_2 \cdot x$, it is a linear shallow network with width, input, and output dimensions being 1. The Dataset consists of 3 points $D = \{ (1,1), (1,2), (1,0) \}$, the chosen loss is the MSE. 
Precisely, the loss $L(\theta) = (\theta_1\cdot \theta_2-1)^2 + (\theta_1\cdot \theta_2 - 2)^2 + (\theta_1\cdot \theta_2)^2$ with initialization the white bullet $\theta = (1,6)$.
The lowest norm global minimum is $\pm (1,1)$.

This setting shares the same peculiarities of linear diagonal network scenarios. In particular, our experiments extends to bigger linear diagonal networks.
Precisely, we have that, (i) the minimum with the widest valley is also the min-norm solution, (ii) the min-norm solution coincindes also with the global minimum that zeros out our regularizer, and (iii) the global minima have loss $>0$. The datapoints have the same $x$ but different $y$.
Thus the SGD noise never fades out and keeps pushing toward the direction of the regularizer. Moreover, this makes SGD with replacement coincide with a version of GD with label noise. This is not a bad thing as label noise has generally better and understood generalization capabilities \cite{damian_label_2021}.

\end{document}